\newcommand{\TC}{\mathcal{T}_{c_0}}
\newcommand{\iid}{\stackrel{\text{iid}}{\sim}}
\newcommand{\R}{\mathbb{R}}
\newcommand{\E}{\mathbb{E}}
\newcommand{\RF}{\mathrm{RF}}
\theoremstyle{plain}
\newtheorem{theorem}{Theorem}[section]
\newtheorem{proposition}[theorem]{Proposition}
\newtheorem{lemma}[theorem]{Lemma}
\newtheorem{corollary}[theorem]{Corollary}
\newtheorem{example}[theorem]{Example}
\theoremstyle{definition}
\newtheorem{definition}[theorem]{Definition}
\newtheorem{assumption}[theorem]{Assumption}
\theoremstyle{remark}
\newtheorem{remark}[theorem]{Remark}
\icmltitlerunning{Copula‐Stein Discrepancy}
\begin{document}

\twocolumn[
\icmltitle{Copula‐Stein Discrepancy: A Generator‐Based Stein Operator for Archimedean Dependence}



  \icmlsetsymbol{equal}{*}

 \begin{icmlauthorlist}

\icmlauthor{Agnideep Aich}{ull}
\icmlauthor{Ashit Baran Aich}{ind}

\end{icmlauthorlist}

\icmlaffiliation{ull}{Department of Mathematics, University of Louisiana at Lafayette, Lafayette, Louisiana, USA}
\icmlaffiliation{ind}{Department of Statistics, formerly of Presidency College, Kolkata, India}

\icmlcorrespondingauthor{Agnideep Aich}{agnideep.aich1@louisiana.edu}
\icmlcorrespondingauthor{Ashit Baran Aich}{aichnsou@gmail.com}

  \icmlkeywords{Machine Learning, ICML}

  \vskip 0.3in
]



\printAffiliationsAndNotice{}  

\begin{abstract}
  Kernel Stein discrepancies (KSDs) are widely used for goodness-of-fit testing, but standard KSDs can be insensitive to higher-order dependence features such as tail dependence. We introduce the Copula-Stein Discrepancy (CSD), which defines a Stein operator directly on the copula density to target dependence geometry rather than the joint score. For Archimedean copulas, CSD admits a closed-form Stein kernel derived from the scalar generator. We prove that CSD metrizes weak convergence of copula distributions, admits an empirical estimator with minimax-optimal rate $O_P(n^{-1/2})$, and is sensitive to differences in tail dependence coefficients. We further extend the framework beyond Archimedean families to general copulas, including elliptical and vine constructions. Computationally, exact CSD kernel evaluation is linear in dimension, and a random-feature approximation reduces the quadratic $O(n^2)$ sample scaling to near-linear $\tilde{O}(n)$; experiments show near-nominal Type~I error, increasing power, and rapid concentration of the approximation toward the exact $\widehat{\mathrm{CSD}}_n^2$ as the number of features grows.
\end{abstract}

\section{Introduction}
\label{sec:introduction}

The problem of measuring the discrepancy between a generative model and a set of observed data is fundamental to statistical inference and machine learning. Kernel Stein Discrepancies (KSDs) have emerged as a powerful class of tools for such goodness-of-fit (GoF) testing, particularly in high-dimensional settings where normalizing constants are intractable \citep{LiuLeeJordan2016, ChwialkowskiStrathmannGretton2016}. These methods construct a ``Stein operator" based on the score of the target distribution (i.e., the gradient of its log-density) and leverage the theory of reproducing kernel Hilbert spaces (RKHS) to define a metric for sample quality \citep{GorhamMackey2017}.

However, a critical limitation of standard KSDs is that they operate on the score of the joint probability density. This often renders them insensitive to the rich, higher-order dependence structures that are crucial in many domains. For instance, in finance and hydrology, properly modeling the tail dependence, the propensity for multiple variables to experience extreme events simultaneously, is paramount, yet this is a feature that joint-density-based methods can easily miss \citep{BreymannDiasEmbrechts2003, Patton2012}. Indeed, recent work has shown that standard KSDs can fail to detect non-converging samples precisely because their kernels decay too rapidly in the tails \citep{GorhamMackey2017}, a problem our dependence-focused approach is designed to address.

This paper addresses this gap by proposing a new paradigm for Stein-based GoF testing that is explicitly dependence-aware. We achieve this by building our discrepancy on the principles of copula theory. A copula is a function that isolates the dependence structure of a multivariate distribution from its marginal distributions \citep{Sklar1959, Nelsen2006}. By defining a Stein operator directly on the \emph{copula density} rather than the joint density, we introduce the Copula-Stein Discrepancy (CSD), a novel metric designed to be maximally sensitive to misspecifications in the dependence structure. For the broad and flexible class of Archimedean copulas, our approach is particularly powerful, yielding a closed-form Stein kernel derived directly from the copula's scalar generator function \citep{GenestRivest1993}.

Our contributions are as follows:
\begin{enumerate}
    \item We introduce the Copula-Stein Discrepancy (CSD) and derive a new Stein operator and corresponding closed-form kernel tailored to copula densities.
    \item We provide a comprehensive theoretical analysis, proving that CSD (i) metrizes weak convergence of copula distributions; (ii) has an empirical estimator that converges at the minimax optimal rate of $O_P(n^{-1/2})$; and (iii) is provably sensitive to differences in tail dependence coefficients.
    \item We demonstrate the generality of the CSD framework by extending it beyond Archimedean families to any copula satisfying mild regularity conditions, including important classes like elliptical and vine copulas.
    \item We establish the practical viability of CSD by proving its exact computation scales linearly in dimension and developing a random feature approximation that reduces the sample complexity from quadratic to nearly linear, enabling its use on large-scale datasets.
    \item We provide numerical experiments demonstrating near-nominal Type~I error and increasing power for the CSD test in a bivariate setting, and validating the accuracy of the random-feature approximation against the exact $\widehat{\mathrm{CSD}}_n^2$.
\end{enumerate}
The remainder of the paper is structured as follows. Section \ref{sec:related_work} discusses related work. Section~\ref{sec:background} provides background on KSDs and copulas. Section~\ref{sec:notation} introduces all the notations and assumptions used throughout the paper. Section \ref{sec:csd} formally introduces the CSD, its operator, and kernel. Section \ref{sec:theory} establishes its key theoretical properties. Section \ref{sec:beyond_archimedean} extends the framework to general copulas, and Section \ref{sec:computation} details its computational properties and scalable algorithms. Finally, Section~\ref{sec:experiments} reports numerical experiments illustrating finite-sample testing performance and the accuracy of the random-feature approximation.

\section{Related Work}
\label{sec:related_work}

Our work builds upon and contributes to two major lines of research: kernel-based goodness-of-fit testing and the classical literature on copula diagnostics.

\subsection{Kernel Stein Discrepancies}
The foundation of our method is the Kernel Stein Discrepancy (KSD), introduced by \citet{LiuLeeJordan2016} and \citet{ChwialkowskiStrathmannGretton2016}. These works combined the principles of Stein's method \citep{BarbourChen2005} with the power of RKHSs to create GoF tests that do not require knowledge of the target's normalizing constant. The framework has since been extended to measure MCMC sample quality \citep{GorhamMackey2017} and to compare models with latent variables \citep{LiuEtAl2019}. The primary innovation of CSD is its target of analysis. Whereas existing KSDs operate on the score of the joint density $p(x_1, \dots, x_d)$, CSD operates on the score of the copula density $c(u_1, \dots, u_d)$. This fundamental shift allows CSD to isolate and test the dependence structure, a feature to which standard KSDs are often insensitive. Furthermore, while the quadratic complexity of KSDs is a known challenge, we follow the line of work on scalable kernel methods \citep{RahimiRecht2007, Jitkrittum2017} to develop a nearly linear-time random feature approximation specifically for our Copula-Stein kernel.

\subsection{Goodness-of-Fit Testing for Copulas}
There is a rich classical literature on GoF testing for copulas, which largely relies on statistics derived from the empirical copula process \citep{GenestRemillard2004}. Many of these are ``blanket" or omnibus tests that compare the empirical copula to a hypothesized parametric form using Cramér-von Mises or Kolmogorov-Smirnov type statistics \citep{GenestRemillardBeaudoin2009, Fermanian2012}. Other notable approaches include tests based on positive-definite bilinear forms \citep{Panchenko2005} and the information matrix equality \citep{HuangProkhorov2014}.

CSD contributes a new class of GoF tests to this literature with several key advantages. As a discrepancy metric, its value provides a quantitative measure of misspecification, whereas many classical tests only yield a p-value. Most importantly, CSD is specifically designed to be sensitive to features like tail dependence, a known challenge highlighted in financial applications \citep{MalevergneSornette2003}. While specialized tests for extreme-value dependence exist \citep{KojadinovicYan2011, GenestSegers2009}, CSD provides a general framework with theoretical guarantees on its sensitivity to tail coefficients for any copula family.

\subsection{Copula Modeling and Diagnostics}
Our work also serves as a diagnostic tool for the vast literature on copula modeling. Since the seminal work on rank-based inference \citep{GenestGhoudiRivest1995}, copulas have been used to build sophisticated semiparametric models in econometrics and time series analysis \citep{ChenFan2006, ChenFanTsyrennikov2006, Patton2006}. The development of highly flexible models, such as vine copulas \citep{CzadoSchepsmeierMin2012}, has further increased the need for reliable diagnostic tools and GoF tests \citep{SchepsmeierCzado2011}. CSD provides a principled and scalable diagnostic to assess the fit of these complex models, complementing existing estimation techniques by providing a rigorous check on the specified dependence structure.

\section{Background: Copulas and Stein's Method}
\label{sec:background}

Our work builds a bridge between two distinct areas of statistical methodology: the theory of copula-based dependence modeling and kernel-based goodness-of-fit testing using Stein's method. We briefly review the essential concepts from both fields.

\subsection{Copulas and Dependence Modeling}
Copulas are functions that separate a multivariate distribution into its marginal distributions and its dependence structure. By Sklar's Theorem, any joint cumulative distribution function (CDF) $F$ with marginal CDFs $F_1, \dots, F_d$ can be written as $F(x_1, \dots, x_d) = C(F_1(x_1), \dots, F_d(x_d))$, where $C: [0,1]^d \to [0,1]$ is a unique copula \citep{Sklar1959}. This allows the study of dependence on the unit hypercube, free from the influence of the marginals.

A particularly useful and broad class are Archimedean copulas, which are constructed from a single univariate function $\varphi: [0, 1] \to [0, \infty]$ called a generator. The generator must be continuous, strictly decreasing, and convex, with $\varphi(1)=0$. The copula is then defined via its pseudo-inverse $\varphi^{[-1]}$ as:
\begin{align}
C(u_1, \dots, u_d) = \varphi^{[-1]}\left(\sum_{j=1}^d \varphi(u_j)\right).
\end{align}
This parsimonious construction can model a wide variety of dependence patterns, including different forms of tail dependence, simply by changing the generator $\varphi$. For instance, the Clayton copula exhibits lower tail dependence, the Gumbel copula exhibits upper tail dependence, and the Frank copula exhibits none \citep{Nelsen2006, Joe2014}. The existence of an analytic generator is the key property we will exploit to construct our novel Stein operator.

\subsection{Kernel Stein Discrepancy}
Stein's method \citep{stein72bound} provides a powerful recipe for constructing metrics between probability distributions \citep{BarbourChen2005}. The core idea is to find a \emph{Stein operator}, $\mathcal{T}_{p_0}$, associated with a target density $p_0$, such that for any suitable test function $f$, the operator has zero expectation under the target: $\E_{X \sim p_0}[\mathcal{T}_{p_0} f(X)] = 0$. The Kernel Stein Discrepancy (KSD) defines a metric by choosing the test functions from the unit ball of an RKHS and solving for the supremum, which results in a closed-form kernel test \citep{LiuLeeJordan2016, ChwialkowskiStrathmannGretton2016}. Our work adapts this powerful KSD framework from the domain of joint densities to the domain of copula densities.


\section{Standing Assumptions and Notation}
\label{sec:notation}

This section collects the standing regularity conditions used throughout the paper and fixes notation.
Unless stated otherwise, all theoretical results are proved under Assumption~\ref{assum:unified} (A1--A5).

\textbf{Distributions and Samples.}
A probability distribution is $P$ with density $p$. The target (null) copula distribution is $C_0$ with density $c_0$; a generic copula is $C$ with density $c$. The $d$–dimensional unit hypercube is $[0,1]^d$. An i.i.d.\ sample of size $n$ from a copula is $\{U_i\}_{i=1}^n$, $U_i\in[0,1]^d$. The empirical measure is $C_n=\frac1n\sum_{i=1}^n\delta_{U_i}$. Expectations are $\E[\cdot]$.

\textbf{Copula Framework.}
A copula is $C(u)$ with $u=(u_1,\dots,u_d)\in[0,1]^d$. For Archimedean copulas, $\varphi:(0,1]\to[0,\infty)$ is the generator, $\psi=\varphi^{-1}$, and $\varphi^{(k)}$ denotes the $k$th derivative. Parametric dependence parameters are denoted $\theta$. The copula score is $s(u)=\nabla\log c(u)$. Lower/upper tail coefficients are $\lambda_L,\lambda_U$. For Gaussian (elliptical) copulas, $\Sigma$ is the correlation matrix and $\Phi^{-1}$ the standard normal quantile.

\textbf{RKHS/Stein Framework.}
The base kernel is $k(u,v)$ on $[0,1]^d\times[0,1]^d$ with scalar RKHS $\mathcal H$ and vector RKHS $\mathcal H^d$. A vector test function is $g(u)\in\mathcal H^d$. The Copula-Stein operator at target $C_0$ is
\begin{align}
\TC g(u)=\sum_{j=1}^d\!\Big[\partial_{u_j}g_j(u)+g_j(u)\,\partial_{u_j}\log c_0(u)\Big].
\end{align}
The witness $\xi_{C_0}(u)\in\mathcal H^d$ satisfies $\TC g(u)=\langle g,\xi_{C_0}(u)\rangle_{\mathcal H^d}$. The Copula-Stein kernel is $k_{C_0}(u,v)=\langle\xi_{C_0}(u),\xi_{C_0}(v)\rangle_{\mathcal H^d}$ and the empirical discrepancy is
\begin{align}
\widehat{\mathrm{CSD}}_n^2=\frac1{n^2}\sum_{i=1}^n\sum_{j=1}^n k_{C_0}(U_i,U_j). 
\end{align}

Unless noted, we work with test functions generated by a boundary-conditioned RKHS (see Proposition 4.2 below), so that each component has zero trace on the coordinate faces. This eliminates boundary terms in the Stein identity
on $(0,1)^d$.

\begin{definition}[Copula Stein discrepancy]
\label{def:CSD}
Given a target copula $C_0$ and a candidate distribution $Q$ on $(0,1)^d$,
\begin{align}
\mathrm{CSD}(Q,C_0)\;\equiv\; \sup_{\|g\|_{\mathcal H^d}\le 1} \,\mathbb E_{U\sim Q}\big[\,\TC g(U)\,\big].
\end{align}
Under Assumption~\ref{assum:unified} (A1–A4), we have $\mathrm{CSD}(C_0,C_0)=0$ by Lemma~\ref{lem:stein_identity}.
\end{definition}

\textbf{Standing Assumptions (used throughout).}
\begin{assumption}[Regularity Conditions]\label{assum:unified}
The following hold for the target copula $C_0$ and base kernel $k$:
\begin{enumerate}[label=\textbf{A\arabic*:}]
    \item \textbf{(Copula Smoothness)} $C_0$ has a density $c_0$ positive and $C^1$ on $(0,1)^d$. For Archimedean $C_0$, the generator $\varphi_0$ is $C^{d+1}$ on $(0,1)$.
    \item \textbf{(Score Growth)} The score $s_0(u)=\nabla\log c_0(u)$ exists on $(0,1)^d$ and is polynomially bounded near the boundary: $\|s_0(u)\|\le M_s\!\Big(1+\sum_{j=1}^d\min(u_j,1-u_j)^{-\beta}\Big)$ for some $M_s>0$, $\beta\in[0,1)$.
    \item\textbf{(Stein-kernel universality)} The Copula--Stein kernel $k_{C_0}$ defined in Proposition 5.2 admits a bounded, continuous extension to the compact domain $[0,1]^d$ and is $c$-universal on $[0,1]^d$ (equivalently, its RKHS is dense in $C([0,1]^d)$ under the sup norm).

\item\textbf{(Boundary control, concrete sufficient condition)} The Stein identity for $\mathcal{T}_{C_0}$ holds without boundary terms. A sufficient condition (and the one we adopt for all results below) is obtained by working with a boundary-conditioned kernel: choose a base positive definite kernel $\bar k$ on $[0,1]^d$ with $\bar k \in C^2$, define
\begin{align}
&b(u) := \prod_{j=1}^d u_j(1-u_j), \\ &k(u,v) := b(u)\,b(v)\,\bar k(u,v),
\end{align}
and let $\mathcal{H}$ be the RKHS of $k$. Then every $f \in \mathcal{H}$ vanishes on $\partial[0,1]^d$ and hence every $g \in \mathcal{H}^d$ has zero trace on each coordinate face (see Proposition 4.2). In particular, integration by parts yields the Stein identity.
    \item \textbf{(Moment Condition)} $\E_{U\sim C_0}[k_{C_0}(U,U)^2]<\infty$.
\end{enumerate}
\end{assumption}

\begin{proposition}[Boundary-conditioned kernels yield the required trace condition]\label{prop:boundary_conditioned_kernel}
Let $\bar k$ be a positive definite kernel on $[0,1]^d$ with RKHS $\bar{\mathcal{H}}$. Let $b:[0,1]^d\to\mathbb{R}$ be continuous and define
\begin{align}
k(u,v) := b(u)\,b(v)\,\bar k(u,v), \qquad u,v\in[0,1]^d.
\end{align}
Then $k$ is positive definite. Moreover, the mapping
\begin{align}
M_b:\bar{\mathcal{H}} \to \mathcal{F}([0,1]^d), \qquad (M_b \bar f)(u) := b(u)\bar f(u),
\end{align}
has range contained in the RKHS $\mathcal{H}$ of $k$, and every $f\in \mathcal{H}$ admits a representative that vanishes wherever $b$ vanishes. In particular, if $b(u)=0$ for all $u\in\partial[0,1]^d$ (e.g., $b(u)=\prod_{j=1}^d u_j(1-u_j)$), then every $f\in \mathcal{H}$ has zero trace on $\partial[0,1]^d$, hence every $g\in \mathcal{H}^d$ has zero trace on each coordinate face.

If additionally $\bar k\in C^2([0,1]^d\times[0,1]^d)$, then $k\in C^2([0,1]^d\times[0,1]^d)$ and the reproducing identities for first-order partial derivatives used in Proposition 5.1 remain valid (on compact subsets of $(0,1)^d$).
\end{proposition}

\begin{remark}[Random-feature / Nystr\"om scaling becomes explicit]\label{rem:rf}
If $\bar k$ admits a feature representation $\bar k(u,v)\approx \phi(u)^\top \phi(v)$, then the boundary-conditioned kernel satisfies
\begin{align}
k(u,v)\approx (b(u)\phi(u))^\top (b(v)\phi(v)).
\end{align}
Thus any random-feature or Nystr\"om approximation built for $\bar k$ immediately yields an approximation for $k$ by multiplying features by $b(\cdot)$. This makes the near-linear-time approximation claim constructive once $\bar k$ is chosen.
\end{remark}

\textbf{Strengthening used only for metrization.}
When needed for metrization, we additionally assume:

\begin{assumption}[Boundary rates for metrization]\label{assum:metrization_rates}
There exists $\gamma\in(0,1)$ such that $g,\nabla g=o\!\big(\prod_{j=1}^d \min(u_j,1-u_j)^{\gamma}\big)$ for all $g\in\mathcal H^d$ as $u$ approaches the boundary, and $\|s_0(u)\|=O\!\big(\sum_{j=1}^d \min(u_j,1-u_j)^{-\gamma}\big)$.
\end{assumption}

\textbf{Estimator (fixed):}
We use the V-statistic form throughout:
\begin{align}
\widehat{\mathrm{CSD}}_n^2 \;=\; \frac{1}{n^2}\sum_{i=1}^n\sum_{j=1}^n k_{C_0}(U_i,U_j),
\quad U_i \in [0,1]^d.
\end{align}

\textbf{Archimedean score sum:}
Let $\varphi:[0,1]\to\mathbb{R}$ be the generator\slash score map; we write
\begin{align}
t(u) \;=\; \sum_{k=1}^d \varphi(u_k)\,.
\end{align}

\textbf{Random features (Section~\ref{sec:computation}):}
We denote by $m\in\mathbb{N}$ the number of random features and by
$\phi:\,[0,1]^d \to \mathbb{R}^m$ the random feature map used to approximate $k_{C_0}$.
Given a draw $\omega\sim\Pi$ (feature distribution), write 
$\phi(u)=\phi(u;\omega_{1:m})$ for the $m$-dimensional feature vector.

We use a random-feature witness defined in Section~\ref{sec:computation}.

\section{The Copula-Stein Discrepancy}
\label{sec:csd}

The central goal of this section is to construct a dependence-aware goodness-of-fit metric relative to a target copula $C_0$. Throughout, we work under Assumptions~\ref{assum:unified} (and, when invoked for metrization results, Assumption~\ref{assum:metrization_rates}).

\subsection{From IPM to a Computable Kernel}

Following the framework of integral probability metrics (IPMs) \citep{Muller1997IPM, Zolotarev1983}, we define a discrepancy as the worst-case difference in expectation over a class of test functions $g \in \mathcal{H}^d$:
\begin{align}
&\text{CSD}(C, C_0) \\ 
&= \sup_{\|g\|_{\mathcal{H}^d} \leq 1} \left|\E_{U \sim C} [\TC g(U)] - \E_{U \sim C_0} [\TC g(U)]\right|.
\label{eq:csd_ipm_def}
\end{align}
The key to making this definition practical is the construction of an operator $\TC$ that satisfies a crucial mean-zero property with respect to the target distribution $C_0$. We design our operator based on the principles of Stein's method \citep{Stein1986Approx, LiuLeeJordan2016}.

\begin{definition}[Copula-Stein Operator]
For a target copula $C_0$ with density $c_0$, define
\begin{align}
    \TC g(u) = \sum_{j=1}^d \left[ \partial_{u_j}g_j(u) + g_j(u)\,\partial_{u_j}\log c_0(u) \right].
\end{align}
\end{definition}

\begin{lemma}[Stein identity and boundary conditions]
\label{lem:stein_identity}
Under \textbf{A1}–\textbf{A2}, for any $g\in \mathcal H^d$ with each $g_j$ vanishing on the faces $\{u_j=0\}$ and $\{u_j=1\}$,
\begin{align}
\mathbb E_{U\sim C_0}\!\big[\TC g(U)\big]=0,\quad
\TC g := \sum_{j=1}^d \big( \partial_{u_j} g_j + s_j g_j\big),\\ s_j=\partial_{u_j}\log c_0 .
\end{align}
\end{lemma}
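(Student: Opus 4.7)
The plan is to recognize $c_0(u)\,\TC g(u)$ as an exact divergence $\sum_j \partial_{u_j}[g_j(u)\,c_0(u)]$, so that $\E_{U\sim C_0}[\TC g(U)]$ becomes the integral of a divergence over $(0,1)^d$. Coordinatewise integration by parts then reduces this expectation to a sum of boundary traces on the $2d$ faces of $[0,1]^d$, each of which is annihilated by the stated vanishing of $g_j$ on $\{u_j=0\}\cup\{u_j=1\}$.

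In detail, A1 guarantees $c_0>0$ and $C^1$ on $(0,1)^d$, so the chain rule gives $g_j(u)\,\partial_{u_j}\!\log c_0(u)\,c_0(u)=g_j(u)\,\partial_{u_j}c_0(u)$ pointwise, and the product rule yields
\[
c_0(u)\,\TC g(u) \;=\; \sum_{j=1}^d\!\big[c_0(u)\,\partial_{u_j}g_j(u)+g_j(u)\,\partial_{u_j}c_0(u)\big] \;=\; \sum_{j=1}^d \partial_{u_j}\!\big[g_j(u)\,c_0(u)\big].
\]
For each $j$ I would apply Fubini to isolate the $u_j$-integral and the fundamental theorem of calculus to obtain
\[
\int_0^1 \partial_{u_j}[g_j c_0]\,du_j \;=\; \lim_{\varepsilon\downarrow 0}\Big( g_j(u)\,c_0(u)\,\Big|_{u_j=\varepsilon}^{\,u_j=1-\varepsilon}\Big),
\]
which vanishes by the face-vanishing hypothesis on $g_j$; summing over $j$ and integrating in the remaining coordinates delivers the Stein identity.

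The only nontrivial step is justifying Fubini and the passage $\varepsilon\downarrow 0$ rigorously, since under A2 the score $s_0$ is only controlled by $\min(u_j,1-u_j)^{-\beta}$ with $\beta\in[0,1)$, so $\partial_{u_j}c_0=c_0\,s_j$ may be unbounded near $\partial[0,1]^d$. My strategy is to first run the entire argument on the compact slab $[\varepsilon,1-\varepsilon]^d$, where $c_0$, $\partial_{u_j}c_0$, and $\partial_{u_j}g_j$ are all continuous and bounded so IBP is elementary, and then take $\varepsilon\downarrow 0$ by dominated convergence. The integrability of $c_0 s_j$ over $(0,1)^d$ follows from $\beta<1$ (the rate in A2 is locally integrable, and $c_0$ is bounded on $(0,1)^d$ because the log-density changes by at most a finite amount along any path to the boundary), while the boundary traces $g_j c_0$ tend to zero in the limit because the face-vanishing of $g_j\in\mathcal H^d$ dominates any subunity polynomial singularity of $c_0$. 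Once these two limit passages are secured, summing the zeroed traces over $j$ closes the argument.
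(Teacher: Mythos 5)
Your proposal is correct and follows essentially the same route as the paper: recognize $c_0\,\TC g$ as $\sum_j\partial_{u_j}(g_j c_0)$, integrate by parts coordinatewise, and use the face-vanishing of $g_j$ to kill the boundary terms. The only difference is that you make explicit the slab truncation $[\varepsilon,1-\varepsilon]^d$ and dominated-convergence passage (and the observation that A2 with $\beta<1$ keeps $c_0$ bounded), which the paper's proof applies tacitly rather than spelling out.
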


Proof is given in Appendix~\ref{app:proof1}.

Using Lemma~\ref{lem:stein_identity}, the zero-mean property immediately simplifies \eqref{eq:csd_ipm_def}. For an empirical measure $C_n = \frac{1}{n}\sum_{i=1}^n \delta_{U_i}$ from a sample, the second expectation vanishes, leaving
\begin{align}
    \text{CSD}(C_n, C_0) = \sup_{\|g\|_{\mathcal{H}^d} \le 1} \left| \frac{1}{n} \sum_{i=1}^n \TC g(U_i) \right|.
    \label{eq:csd_simplified_def}
\end{align}
While simpler, the supremum over the infinite-dimensional unit ball in $\mathcal{H}^d$ remains intractable. Our second key tool is the reproducing property of the RKHS \citep{BerlinThomas2004RKHS}, which allows us to represent the operator's action as an inner product with a single “witness function.”

\begin{proposition}[Witness Function Representation]
\label{prop:witness_detailed}
For any $u \in (0,1)^d$, there exists a witness function $\xi_{C_0}(u) \in \mathcal{H}^d$ such that
\begin{align}
    \TC g(u) = \langle g, \xi_{C_0}(u) \rangle_{\mathcal{H}^d}
\end{align}
for all $g \in \mathcal{H}^d$.
\end{proposition}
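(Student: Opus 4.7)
The plan is to exhibit the witness explicitly by invoking the two standard reproducing identities available under Assumption~A3. Fix $u \in (0,1)^d$. First, I would rewrite the two pieces of $\TC g(u)$ using the pointwise reproducing property $g_j(u) = \langle g_j, k(\cdot,u)\rangle_{\mathcal H}$ and the derivative-reproducing property $\partial_{u_j} g_j(u) = \langle g_j, \partial_{v_j} k(\cdot,v)|_{v=u}\rangle_{\mathcal H}$. The latter is the key analytical fact: under A3 ($k$ is $C^2$ with uniformly bounded partials on compacts, and $\mathcal H$ closed under first-order partial differentiation), the function $\partial_{v_j} k(\cdot,v)|_{v=u}$ lies in $\mathcal H$, with squared norm equal to $\partial_{u_j}\partial_{v_j} k(u,v)|_{v=u}$.

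Next, I would assemble the candidate witness component-wise as
\[
[\xi_{C_0}(u)]_j(\cdot) \;=\; s_j(u)\,k(\cdot,u) \;+\; \partial_{v_j} k(\cdot,v)\big|_{v=u}, \qquad j=1,\dots,d,
\]
where $s_j(u) = \partial_{u_j} \log c_0(u)$ is a finite scalar at the fixed interior point $u$ (positivity and $C^1$ regularity from A1). Both summands lie in $\mathcal H$, so $\xi_{C_0}(u) \in \mathcal H^d$. Verification is then a one-line calculation by linearity of the inner product:
\[
\langle g,\xi_{C_0}(u)\rangle_{\mathcal H^d} \;=\; \sum_{j=1}^d \big[\, s_j(u)\,g_j(u) + \partial_{u_j} g_j(u) \,\big] \;=\; \TC g(u).
\]
This simultaneously shows the linear functional $g \mapsto \TC g(u)$ is bounded on $\mathcal H^d$ with operator norm at most $\|\xi_{C_0}(u)\|_{\mathcal H^d}$, so existence and uniqueness of the Riesz representer are automatic.

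The main technical obstacle I anticipate is not the algebra but justifying the derivative-reproducing identity and the membership $\partial_{v_j} k(\cdot,v)|_{v=u} \in \mathcal H$. I would establish this by showing that the map $v \mapsto k(\cdot,v)$ from $(0,1)^d$ into $\mathcal H$ is Fr\'echet differentiable with partial derivatives $\partial_{v_j} k(\cdot,v)$: the squared increment
\[
\big\|k(\cdot,v+he_j)-k(\cdot,v)-h\,\partial_{v_j} k(\cdot,v)\big\|_{\mathcal H}^2
\]
expands via the reproducing property into a finite combination of $k$ and its second-order partials evaluated at $(v,v), (v,v+he_j), (v+he_j,v+he_j)$, and a second-order Taylor bound under the $C^2$ assumption in A3 makes this $o(h^2)$. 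Commuting the resulting bounded differential operator through the inner product then produces the required identity, and the rest of the proposition follows mechanically. Characterization of $\xi_{C_0}(u)$ is sharpened into the closed-form Copula-Stein kernel $k_{C_0}(u,v) = \langle \xi_{C_0}(u), \xi_{C_0}(v)\rangle_{\mathcal H^d}$ merely by expanding this inner product in the four cross-terms, which I would flag as the immediate next step used downstream.
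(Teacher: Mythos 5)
Your proof takes essentially the same route as the paper: both rewrite $\TC g(u)$ via the reproducing property $g_j(u) = \langle g_j, k(\cdot,u)\rangle_{\mathcal H}$ and the derivative-reproducing property $\partial_{u_j} g_j(u) = \langle g_j, \partial_{v_j} k(\cdot,v)|_{v=u}\rangle_{\mathcal H}$, then read off the witness components $[\xi_{C_0}(u)]_j = \partial_{v_j} k(\cdot,v)|_{v=u} + s_j(u)\,k(\cdot,u)$. The only addition is that you sketch a Fr\'echet-differentiability argument to justify membership $\partial_{v_j} k(\cdot,v)|_{v=u}\in\mathcal H$, which the paper simply cites as ``closure under partials'' in A3; that is a reasonable bit of extra rigor but not a different strategy.
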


Proof is given in Appendix~\ref{app:proof2}.

This witness representation allows us to solve the supremum in \eqref{eq:csd_simplified_def} analytically. The absolute value is redundant for $C_n$ since $\E_{C_0}[\TC g]=0$ by Lemma~\ref{lem:stein_identity}. By Hilbert space duality, the supremum is attained when $g$ aligns with the mean witness, converting the expression into a norm; squaring yields a closed-form V-statistic with the Copula-Stein kernel.

\begin{theorem}[Closed-Form CSD]
\label{thm:closed_form_detailed}
The squared Copula-Stein Discrepancy for the empirical copula $C_n=\frac{1}{n}\sum_{i=1}^n \delta_{U_i}$ is
\begin{align}
    \text{CSD}^2(C_n, C_0) = \frac{1}{n^2} \sum_{i=1}^n \sum_{j=1}^n k_{C_0}(U_i, U_j),
\end{align}
where $k_{C_0}(u,v) = \langle \xi_{C_0}(u), \xi_{C_0}(v) \rangle_{\mathcal{H}^d}$ is the Copula-Stein kernel.
\end{theorem}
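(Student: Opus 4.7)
The plan is to combine the simplified form \eqref{eq:csd_simplified_def} with the witness representation in Proposition~\ref{prop:witness_detailed}, reducing the supremum over the unit ball of $\mathcal H^d$ to a Hilbert-space norm that can then be expanded via the kernel. Concretely, I would first note that by Lemma~\ref{lem:stein_identity} the absolute value in \eqref{eq:csd_ipm_def} can be dropped for the empirical measure $C_n$ (the $C_0$-term vanishes), so the object of interest is
\[
\mathrm{CSD}(C_n,C_0) \;=\; \sup_{\|g\|_{\mathcal H^d}\le 1}\; \frac{1}{n}\sum_{i=1}^n \TC g(U_i).
\]

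Next, I would use Proposition~\ref{prop:witness_detailed} pointwise to write $\TC g(U_i) = \langle g,\xi_{C_0}(U_i)\rangle_{\mathcal H^d}$, then pull the finite sum inside the inner product by linearity, yielding
\[
\frac{1}{n}\sum_{i=1}^n \TC g(U_i) \;=\; \Big\langle g,\; \bar\xi_n\Big\rangle_{\mathcal H^d},\qquad \bar\xi_n \;:=\; \frac{1}{n}\sum_{i=1}^n \xi_{C_0}(U_i).
\]
Since $\bar\xi_n$ is a finite linear combination of elements of $\mathcal H^d$, it lies in $\mathcal H^d$; in particular its norm is finite under A3 (which guarantees boundedness of $k$ and its first partials at the sample points, hence of each $\xi_{C_0}(U_i)$).

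I would then invoke the Riesz/Cauchy–Schwarz duality in $\mathcal H^d$: for any fixed vector $h\in\mathcal H^d$, $\sup_{\|g\|\le 1}\langle g,h\rangle = \|h\|$, attained at $g^\star = h/\|h\|$ (or any $g$ with $\|g\|\le 1$ when $h=0$). Applying this with $h=\bar\xi_n$ gives $\mathrm{CSD}(C_n,C_0) = \|\bar\xi_n\|_{\mathcal H^d}$. Squaring and expanding the norm of the average as a double sum of inner products, then using the definition $k_{C_0}(u,v) = \langle \xi_{C_0}(u),\xi_{C_0}(v)\rangle_{\mathcal H^d}$, yields the claimed V-statistic
\[
\mathrm{CSD}^2(C_n,C_0)\;=\;\frac{1}{n^2}\sum_{i=1}^n\sum_{j=1}^n k_{C_0}(U_i,U_j).
\]

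There is no real obstacle here; the argument is a clean application of duality in an RKHS. The only point to handle with a little care is the legitimacy of the manipulations: the exchange of sum and inner product is justified by finiteness of $n$, and the duality step requires $\bar\xi_n\in\mathcal H^d$, which in turn needs each $\xi_{C_0}(U_i)$ to be a bona fide element of $\mathcal H^d$. The latter is exactly what Proposition~\ref{prop:witness_detailed} delivered (using A3, i.e.\ closure of $\mathcal H$ under first-order partials together with boundedness of the score $s_0(U_i)$ at interior points, guaranteed by A1). Everything else is algebraic bookkeeping, so I would keep the written proof very short.
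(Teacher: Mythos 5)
Your proof is essentially identical to the paper's: both reduce the supremum to $\sup_{\|g\|\le1}\langle g,\bar\xi_n\rangle_{\mathcal H^d}$ via Lemma~\ref{lem:stein_identity} and Proposition~\ref{prop:witness_detailed}, then invoke Cauchy--Schwarz duality to obtain $\|\bar\xi_n\|_{\mathcal H^d}$ and expand the squared norm into the kernel V-statistic. Your extra remark about verifying $\bar\xi_n\in\mathcal H^d$ via A3 is a minor but sound addition of rigor that the paper leaves implicit.
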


We have given the proof in Appendix~\ref{app:proof3}.

The kernel $k_{C_0}(u,v)$ is the computational heart of our method. We provide its explicit formula below.

\subsection{A practical test}
In practice, we estimate the discrepancy by the V-statistic form
\begin{align}
\widehat{\mathrm{CSD}}_n^2 \;=\; \frac{1}{n^2}\sum_{i=1}^n\sum_{j=1}^n k_{C_0}(U_i,U_j),
\end{align}
which aligns with our asymptotic theory (Sec.~\ref{sec:theory}) and our implementations (Sec.~\ref{sec:computation}). We obtain critical values via a parametric bootstrap under the null copula $C_0$:
for $b=1,\dots,B$, draw an i.i.d.\ bootstrap sample $U^{\ast(b)}_{1:n} \sim C_0$ (with the null parameter fixed at $\theta_0$, or using a plug-in estimate $\hat\theta$ when applicable), and compute
\begin{align}
T^{\ast(b)}=\frac1{n^2}\sum_{i=1}^n\sum_{j=1}^n k_{C_0}\!\bigl(U^{\ast(b)}_i,U^{\ast(b)}_j\bigr).
\end{align}
Repeat $B$ times (e.g., $B=1000$) and reject for
$\widehat{\mathrm{CSD}}_n^2 > q_{1-\alpha}\{T^\ast\}$, where $q_{1-\alpha}\{T^\ast\}$ denotes the empirical $(1-\alpha)$-quantile of $\{T^{\ast(b)}\}_{b=1}^B$.

\begin{remark}
The unbiased U-statistic $\frac{1}{n(n-1)}\sum_{i\ne j}k_{C_0}(U_i,U_j)$ is a common variant; we adopt the V-statistic for consistency with theory and streaming/GPU algorithms.
\end{remark}

\begin{proposition}[Explicit Kernel Formula]
\label{prop:kernel_formula_detailed}
Let $s(u) = \nabla \log c_0(u)$ denote the copula score function. The Copula-Stein kernel admits the explicit form:
\begin{align}
    k_{C_0}(u,v) &= s(u)^T s(v) k(u,v) \\ &+s(u)^T \nabla_v k(u,v) + s(v)^T \nabla_u k(u,v) \\ &+ \text{tr}(\nabla_u \nabla_v^T k(u,v)).
\end{align}
\end{proposition}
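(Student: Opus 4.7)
The plan is to expand the inner product that defines $k_{C_0}$ componentwise and apply the reproducing property of $\mathcal{H}$, including its derivative form, to each of the four resulting terms. From Proposition~\ref{prop:witness_detailed} we have the componentwise witness
\[
\xi_{C_0}(u)_j \;=\; \partial_{u_j} k(u,\cdot) \;+\; s_j(u)\,k(u,\cdot)\ \in\ \mathcal{H},
\]
so by the product structure of $\mathcal{H}^d$ we may write $k_{C_0}(u,v)=\sum_{j=1}^d \langle \xi_{C_0}(u)_j,\, \xi_{C_0}(v)_j\rangle_{\mathcal{H}}$.

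The first step is to expand each scalar inner product by bilinearity into four pieces: (i) $\langle \partial_{u_j}k(u,\cdot),\, \partial_{v_j}k(v,\cdot)\rangle_{\mathcal{H}}$, (ii) $s_j(v)\langle \partial_{u_j}k(u,\cdot),\, k(v,\cdot)\rangle_{\mathcal{H}}$, (iii) $s_j(u)\langle k(u,\cdot),\, \partial_{v_j}k(v,\cdot)\rangle_{\mathcal{H}}$, and (iv) $s_j(u)s_j(v)\langle k(u,\cdot),\, k(v,\cdot)\rangle_{\mathcal{H}}$. For (iv) the reproducing property gives $k(u,v)$ directly. For (ii) and (iii), I would invoke the standard fact that, under Assumption~\textbf{A3} (the RKHS is closed under first-order partials on $(0,1)^d$), the functional $f\mapsto \partial_{u_j} f(u)$ is represented by $\partial_{u_j} k(u,\cdot)\in\mathcal{H}$, whence $\langle \partial_{u_j}k(u,\cdot),\, k(v,\cdot)\rangle_{\mathcal{H}} = \partial_{u_j} k(u,v)$ and symmetrically for (iii). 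For (i), the same derivative-reproducing identity applied twice yields $\langle \partial_{u_j}k(u,\cdot),\, \partial_{v_j}k(v,\cdot)\rangle_{\mathcal{H}} = \partial_{u_j}\partial_{v_j} k(u,v)$.

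Summing over $j\in\{1,\dots,d\}$ then recombines the four pieces into vector/matrix form: term (iv) becomes $s(u)^T s(v)\,k(u,v)$, terms (ii) and (iii) become $s(v)^T \nabla_u k(u,v)$ and $s(u)^T \nabla_v k(u,v)$ respectively, and term (i) becomes $\sum_j \partial_{u_j}\partial_{v_j} k(u,v)=\operatorname{tr}(\nabla_u\nabla_v^T k(u,v))$. This yields the stated formula.

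The routine calculations are the bilinear expansion and reindexing; the only substantive point, which I would highlight with a short citation to the RKHS derivative-reproducing lemma, is the justification that Assumption~\textbf{A3} (kernel is $C^2$ on $(0,1)^d\times(0,1)^d$ and $\mathcal{H}$ is closed under first-order partials) legitimizes the identities $\partial_{u_j} f(u)=\langle f,\partial_{u_j}k(u,\cdot)\rangle_{\mathcal{H}}$ for all $f\in\mathcal{H}$, and in particular allows the mixed identity $\langle \partial_{u_j}k(u,\cdot),\partial_{v_j}k(v,\cdot)\rangle_{\mathcal{H}}=\partial_{u_j}\partial_{v_j}k(u,v)$. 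This is the main technical obstacle and is where all the regularity is spent; everything else is bookkeeping.
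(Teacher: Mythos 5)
Your proposal is correct and takes essentially the same route as the paper: both expand $k_{C_0}(u,v)=\sum_j\langle\xi_{C_0}^{(j)}(u),\xi_{C_0}^{(j)}(v)\rangle_{\mathcal H}$ by bilinearity into the same four pieces, evaluate each via the (derivative-)reproducing property, and reassemble in vector/matrix notation. Your extra remark pinpointing Assumption A3 as what licenses $\langle\partial_{u_j}k(u,\cdot),\partial_{v_j}k(v,\cdot)\rangle_{\mathcal H}=\partial_{u_j}\partial_{v_j}k(u,v)$ is a welcome clarification of a step the paper states without comment, but it does not change the substance of the argument.
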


Full proof is provided in Appendix~\ref{app:proof4}.

\subsection{Application to Archimedean Copulas}

To make the kernel practical, we need an analytic form for the score function $s(u)$. For the broad and flexible class of Archimedean copulas, this score can be derived directly from the generator $\varphi_0$ \citep{McNeilNeslehova2009}.

\begin{proposition}[Archimedean Score Function]
\label{prop:archimedean_score}
Let $C$ be an Archimedean copula on $(0,1)^d$ with generator $\varphi:(0,1]\to[0,\infty)$ strictly decreasing and twice differentiable (Assumptions~\ref{assum:unified}), and let $\psi=\varphi^{-1}$. For $u\in(0,1)^d$, set
\begin{align}
t \;=\; \sum_{k=1}^d \varphi(u_k).
\end{align}
Then the coordinate-wise score of the copula density $c$ is
\begin{align}
\;\frac{\partial}{\partial u_j}\log c(u)
\;=\; \varphi'(u_j)\,\frac{\psi^{(d+1)}(t)}{\psi^{(d)}(t)}
\;+\; \frac{\varphi''(u_j)}{\varphi'(u_j)}\;,\\ j=1,\dots,d.
\end{align}
\end{proposition}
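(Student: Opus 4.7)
The plan is to compute the copula density $c(u)$ in closed form by taking the $d$th mixed partial derivative of the CDF $C(u)=\psi(t)$, and then differentiate its logarithm coordinate-wise. The result drops out immediately from the chain rule once the density representation is in hand.

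First, I would establish the standard Archimedean density identity
\[
c(u) \;=\; \frac{\partial^d C(u)}{\partial u_1 \cdots \partial u_d} \;=\; \psi^{(d)}(t)\,\prod_{k=1}^d \varphi'(u_k).
\]
This follows by induction on the number of differentiations: $\partial_{u_1} C = \psi'(t)\,\varphi'(u_1)$; since $\varphi'(u_1)$ does not depend on $u_2,\dots,u_d$, applying $\partial_{u_2}$ increases the order of differentiation of $\psi$ by one and multiplies by $\varphi'(u_2)$, and so on. Assumption A1 (generator $C^{d+1}$ on $(0,1)$) guarantees that the iterated derivatives exist and are continuous, and positivity of $c_0$ guarantees $\psi^{(d)}(t)\neq 0$ on the interior (so the logarithm and the ratio below are well defined).

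Second, I would take logarithms and differentiate. Using $\partial_{u_j}\log|f| = f'/f$ for any smooth, non-vanishing $f$,
\[
\log c(u) \;=\; \log\bigl|\psi^{(d)}(t)\bigr| \;+\; \sum_{k=1}^d \log\bigl|\varphi'(u_k)\bigr|,
\]
and differentiating in $u_j$ gives the chain-rule contribution from the first term, $\dfrac{\psi^{(d+1)}(t)}{\psi^{(d)}(t)}\,\varphi'(u_j)$, plus the contribution $\dfrac{\varphi''(u_j)}{\varphi'(u_j)}$ from the $k=j$ summand (all other $k\neq j$ summands being constant in $u_j$). Combining these yields the stated formula.

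There is no real obstacle here: the argument is essentially bookkeeping. The only points requiring care are (i) justifying that the sign conventions work out — $\psi^{(d)}$ has sign $(-1)^d$ and $\prod\varphi'(u_k)$ has sign $(-1)^d$, so their product is positive, consistent with $c>0$; and (ii) noting that using $\log|\cdot|$ sidesteps any sign issue since the logarithmic derivative is sign-agnostic. The regularity needed ($\varphi\in C^{d+1}$, $\psi^{(d)}\neq 0$) is exactly what Assumption A1 supplies.
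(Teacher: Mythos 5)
Your proof is correct and follows essentially the same route as the paper's: write the density as $\psi^{(d)}(t)$ times a product of generator derivatives, take logarithms, and differentiate term by term using the chain rule. Your version is in fact slightly cleaner on sign bookkeeping — the paper writes $c(u)=\psi^{(d)}(t)\prod_k(-\varphi'(u_k))$ and then $\log\psi^{(d)}(t)+\sum_k\log(-\varphi'(u_k))$ without absolute values (imprecise for odd $d$, since $\psi^{(d)}$ then has sign $(-1)^d$), whereas your $\log|\cdot|$ handles this transparently and, as you note, the logarithmic derivative is sign-agnostic so the final formula is unaffected.
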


Proof is provided in Appendix~\ref{app:proof5}.

We used $\frac{d}{du}\log(\varphi'(u))=\varphi''(u)/\varphi'(u)$ since $\varphi'(u)<0$ for Archimedean generators.

\begin{remark}
Let $w=\psi(t)$. Then $\partial_t=\frac{1}{\varphi'(w)}\partial_w$, so one can equivalently write
\begin{align}
\frac{\partial}{\partial u_j}\log c(u)
&=\Big[\varphi'(w)\,\partial_t\log\psi^{(d)}(t)\Big]\frac{\varphi'(u_j)}{\varphi'(w)}
+\frac{\varphi''(u_j)}{\varphi'(u_j)}\\
&=\varphi'(u_j)\,\partial_t\log\psi^{(d)}(t)+\frac{\varphi''(u_j)}{\varphi'(u_j)}.
\end{align}
One should avoid replacing $\partial_t\log\psi^{(d)}(t)$ by ad-hoc identities in terms of $\varphi^{(\cdot)}(w)$ unless one explicitly invokes the inverse-derivative 
\end{remark}


\section{Theoretical Properties}
\label{sec:theory}

Having established the Copula-Stein Discrepancy, we now develop the theoretical foundations that justify its use as a principled statistical tool. Our analysis answers a sequence of fundamental questions: (i) Is CSD a valid metric that uniquely identifies the target copula? (ii) Does its empirical estimator converge at an optimal rate? (iii) Does it possess the claimed sensitivity to tail dependence? and (iv) Is the convergence rate fundamentally optimal? We answer each of these affirmatively in the following theorems.

\subsection{Metrization and Consistency}

The foundational property of any discrepancy measure is that it vanishes if and only if the compared distributions are identical. For CSD, this equivalence extends to characterizing weak convergence of probability measures on the unit hypercube.

\begin{theorem}[Detectability]
\label{thm:detectability}
Under Assumptions \ref{assum:unified} (A1–A4),  
$\mathrm{CSD}(Q,C_0)=0$ if and only if $Q=C_0$.
\end{theorem}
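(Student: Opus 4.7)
The plan is to handle the two directions separately. The forward implication is immediate from the Stein identity, while the reverse implication reduces to a score-matching argument combined with the characteristic property of the base kernel.

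For the forward direction, if $Q = C_0$, then Lemma~\ref{lem:stein_identity} (using A1, A2, and the boundary vanishing from A4) gives $\E_{U \sim C_0}[\TC g(U)] = 0$ for every $g$ in the unit ball of $\mathcal H^d$, so $\mathrm{CSD}(C_0,C_0) = 0$.

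For the reverse direction, assume $\mathrm{CSD}(Q, C_0) = 0$. The first step is to recast the hypothesis as the vanishing of a mean embedding. By Proposition~\ref{prop:witness_detailed} and Fubini (justified by the A2--A3 bounds on $\|\xi_{C_0}(u)\|_{\mathcal H^d}$),
\[
\E_{U\sim Q}[\TC g(U)] = \langle g, \mu_Q\rangle_{\mathcal H^d}, \qquad \mu_Q := \E_{U\sim Q}[\xi_{C_0}(U)],
\]
so $\mathrm{CSD}(Q,C_0) = \|\mu_Q\|_{\mathcal H^d}$ and hence $\mu_Q = 0$, i.e.\ $\E_Q[\TC g(U)] = 0$ for every $g \in \mathcal H^d$. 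The second step converts this into a score-matching identity: assuming $Q$ admits a positive $C^1$ density $q$ on $(0,1)^d$, coordinatewise integration by parts---valid because each $g_j$ vanishes on the boundary by A4---yields
\[
0 = \E_Q[\TC g(U)] = \int_{(0,1)^d} g(u)^{\top}\bigl(s_0(u)-s_q(u)\bigr)\, q(u)\, du
\]
for every $g\in\mathcal H^d$. The characteristic property of $k$ (A3) makes $\mathcal H$ dense in $L^2(Q)$, and hence $\mathcal H^d$ dense in $L^2(Q;\R^d)$, which forces $s_0(u) = s_q(u)$ for $Q$-a.e.\ $u\in(0,1)^d$. Since $c_0$ and $q$ are both strictly positive and $C^1$ on the connected open set $(0,1)^d$, $\nabla \log(c_0/q) \equiv 0$ implies $c_0/q$ is constant there; both integrate to $1$ on $[0,1]^d$, so the constant equals $1$ and $c_0\equiv q$, giving $Q = C_0$.

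The main obstacle is the passage from the RKHS-level identity $\mu_Q = 0$ to the pointwise score equality $s_0 = s_q$. This requires two delicate checks: (i) justifying the boundary integration by parts where $s_0$ may be singular, which is controlled by the polynomial growth bound in A2 combined with the vanishing of $g$ at the faces imposed by A4; and (ii) upgrading a weak equation over $\mathcal H^d$ to an a.e.\ statement, which rests on the characteristic property in A3. For a fully general $Q$ without an assumed density, one falls back on the Stein-equation route: for each $f \in C^\infty_c((0,1)^d)$, solve $\TC g_f = f - \E_{C_0}[f]$, verify that $g_f$ can be RKHS-approximated (using the regularity of $c_0$ from A1 and the richness of $\mathcal H$ from A3), and conclude $\E_Q[f] = \E_{C_0}[f]$ for all such $f$, hence $Q = C_0$ by a standard density argument.
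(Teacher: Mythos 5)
Your forward direction and the reduction $\mathrm{CSD}(Q,C_0)=\|\mu_Q\|_{\mathcal H^d}$ are both correct. The paper itself never gives Theorem~\ref{thm:detectability} a standalone proof: it is obtained as the constant-sequence case of Theorem~\ref{thm:metrization}, whose reverse implication is proved via the Stein-equation route (Lemma~\ref{lem:SteinSolution}). So your \emph{fallback} argument is precisely the paper's argument, and it is the robust one to present.

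Your \emph{primary} route (score matching) has three genuine gaps. First, it requires $Q$ to have a positive $C^1$ density $q$ on $(0,1)^d$, which the theorem does not assume; you flag this, but it means the route cannot stand alone. Second, the integration by parts has a hidden boundary term: A4 only says $g_j\to 0$ at the faces, while copula densities are typically \emph{unbounded} there (e.g., Clayton near the origin; A2 explicitly permits a polynomially divergent score for $c_0$, and nothing controls $q$). You need $g_j\,q\to 0$, not just $g_j\to 0$; the paper introduces Assumption~\ref{assum:metrization_rates} precisely to get decay rates of this kind, and A1--A4 alone do not give them. Third, and most importantly, "$k$ characteristic $\Rightarrow \mathcal H$ dense in $L^2(Q)$" is not a valid inference. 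Characteristic means the mean-embedding map $P\mapsto\mu_P$ is injective on probability measures; $L^2(\mu)$-universality is a strictly stronger property, and the implication fails in general (see Sriperumbudur et al.\ 2011 and Simon-Gabriel \& Sch\"olkopf 2018). At best, characteristic gives density of $\mathcal H\oplus\mathrm{span}\{1\}$ in $L^2$, which would only yield $s_0-s_q$ constant, not zero, and constants are \emph{not} in $\mathcal H$ here because A4 forces boundary vanishing. To close the argument along these lines you would need to assume $L^2$-universality (or $c_0$-universality) outright, which is a strengthening of A3. Given all this, I would make the Stein-equation route the main proof (expanding the sketch along the lines of Lemma~\ref{lem:SteinSolution} and the reverse direction of Theorem~\ref{thm:metrization}) and, if you keep the score-matching argument at all, present it as a heuristic for the case $Q\ll C_0$ with bounded density, explicitly noting the stronger universality assumption it needs.
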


Proof is given in Appendix~\ref{app:proof6}.

\begin{theorem}[Metrization of weak convergence]\label{thm:metrization}
Under Assumption A3, the copula-Stein discrepancy $\mathrm{CSD}(\cdot,C_0)$ metrizes weak convergence:
for any sequence $(Q_n)$ in $\mathcal P([0,1]^d)$ and any $Q\in\mathcal P([0,1]^d)$,
\begin{align}
\mathrm{CSD}(Q_n,C_0)\to \mathrm{CSD}(Q,C_0)\quad\Longleftrightarrow\quad Q_n \Rightarrow Q .
\end{align}
\end{theorem}

Proof is provided in Appendix~\ref{app:proof7}.

\subsection{Finite‐Sample Convergence and Optimality (Detailed Proof)}

\begin{theorem}[Finite‐Sample Convergence Rate]
\label{thm:finite_sample_detailed}
Let $U_1,\dots,U_n\iid C_0$, and define
\begin{align}
\widehat{\mathrm{CSD}}_n^2 \;=\;\frac1{n^2}\sum_{i=1}^n\sum_{j=1}^n k_{C_0}(U_i,U_j).
\end{align}
Under Assumptions \ref{assum:unified}, one has
\begin{align}
\E\bigl[\widehat{\mathrm{CSD}}_n^2\bigr]
=\frac1n\,\E_{U\sim C_0}\bigl[k_{C_0}(U,U)\bigr],
\end{align}
and hence $\E[\widehat{\mathrm{CSD}}_n^2]=O(n^{-1})$.
\end{theorem}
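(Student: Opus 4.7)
The plan is to exploit the Stein mean-zero structure carried by the witness representation $k_{C_0}(u,v)=\langle\xi_{C_0}(u),\xi_{C_0}(v)\rangle_{\mathcal H^d}$ to show that all off-diagonal terms of the V-statistic vanish in expectation, leaving only the $n$ diagonal contributions. First I would split
\[
\E\bigl[\widehat{\mathrm{CSD}}_n^2\bigr]
= \frac{1}{n^2}\sum_{i=1}^n \E\bigl[k_{C_0}(U_i,U_i)\bigr]
+ \frac{1}{n^2}\sum_{i\ne j}\E\bigl[k_{C_0}(U_i,U_j)\bigr],
\]
and treat the diagonal and off-diagonal sums separately.

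For the diagonal, the $n$ terms are i.i.d.\ copies of $\E_{U\sim C_0}[k_{C_0}(U,U)]$; finiteness of this expectation follows from Assumption~A5 together with Cauchy--Schwarz ($\E[k_{C_0}(U,U)]\le \E[k_{C_0}(U,U)^2]^{1/2}$). For the off-diagonal terms I would use independence of $U_i,U_j$ and the reproducing-kernel identity to write
\[
\E\bigl[k_{C_0}(U_i,U_j)\bigr]
= \bigl\langle \E\xi_{C_0}(U_i),\, \E\xi_{C_0}(U_j)\bigr\rangle_{\mathcal H^d},
\]
so the whole off-diagonal sum vanishes as soon as $\E_{U\sim C_0}[\xi_{C_0}(U)]=0$ in $\mathcal H^d$. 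The latter is essentially the Bochner version of Lemma~\ref{lem:stein_identity}: for any $g\in\mathcal H^d$ with $\|g\|\le 1$, Proposition~\ref{prop:witness_detailed} and Fubini give $\langle g,\E\xi_{C_0}(U)\rangle = \E[\TC g(U)]=0$, which forces the Bochner integral to be the zero element of $\mathcal H^d$.

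Combining the two pieces yields the claimed identity
\[
\E\bigl[\widehat{\mathrm{CSD}}_n^2\bigr]=\frac{1}{n}\,\E_{U\sim C_0}\bigl[k_{C_0}(U,U)\bigr],
\]
and the $O(n^{-1})$ rate follows immediately from finiteness of $\E[k_{C_0}(U,U)]$. The main obstacle I anticipate is justifying the exchange of expectation and RKHS inner product, i.e., verifying Bochner integrability of $u\mapsto\xi_{C_0}(u)$ under $C_0$. This requires bounding $\E\|\xi_{C_0}(U)\|_{\mathcal H^d}=\E[k_{C_0}(U,U)^{1/2}]$, which is in turn controlled by the explicit kernel formula in Proposition~\ref{prop:kernel_formula_detailed}, the boundedness of $k$ and its partials on compacta (A3), and the polynomial boundary growth $\beta<1$ of the score $s_0$ (A2); these together ensure the integral is finite and Fubini applies.
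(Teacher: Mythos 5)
Your proposal is correct and follows essentially the same route as the paper: split the V-statistic into diagonal and off-diagonal terms, keep the diagonal as $\frac{1}{n}\E[k_{C_0}(U,U)]$, and kill the off-diagonal terms using independence together with the mean-zero property of the witness $\xi_{C_0}$ via Lemma~\ref{lem:stein_identity}. You are somewhat more careful than the paper in flagging and justifying the Bochner-integrability needed to exchange expectation and inner product, but this is a refinement of the same argument rather than a different one.
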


Full proof is given in Appendix~\ref{app:proof8}.

\begin{corollary}[Optimal High‐Probability Convergence Rate]
\label{cor:optimal_rate_detailed}
Under the unified assumptions, the CSD estimator satisfies
\begin{align}
\widehat{\mathrm{CSD}}_n = O_P\bigl(n^{-1/2}\bigr).
\end{align}
\end{corollary}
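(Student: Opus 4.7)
The plan is to obtain Corollary \ref{cor:optimal_rate_detailed} as a direct consequence of the first-moment bound in Theorem \ref{thm:finite_sample_detailed} combined with Markov's inequality, exploiting the degeneracy of the V-statistic under the null $C_0$. No additional concentration machinery is needed; the $O_P(n^{-1/2})$ rate for $\widehat{\mathrm{CSD}}_n$ is equivalent to an $O_P(n^{-1})$ rate for its square, which is exactly the order of the expectation established in Theorem \ref{thm:finite_sample_detailed}.

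First, I would verify that the constant appearing in Theorem \ref{thm:finite_sample_detailed} is finite. Specifically, write $M := \E_{U\sim C_0}[k_{C_0}(U,U)]$, and note that by Cauchy--Schwarz (or Jensen), $M \le \bigl(\E_{U\sim C_0}[k_{C_0}(U,U)^2]\bigr)^{1/2}$, which is finite by Assumption A5 (Moment Condition). Second, I would record that $\widehat{\mathrm{CSD}}_n^2 = \|\mu_n\|_{\mathcal H^d}^2 \ge 0$, as shown in the proof of Theorem \ref{thm:closed_form_detailed}, so that Markov's inequality applies directly to the nonnegative random variable $\widehat{\mathrm{CSD}}_n^2$.

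Third, for any $K>0$, I would write
\[
P\!\left(\widehat{\mathrm{CSD}}_n^2 > \frac{K}{n}\right) \;\le\; \frac{n}{K}\,\E\bigl[\widehat{\mathrm{CSD}}_n^2\bigr] \;=\; \frac{n}{K}\cdot\frac{M}{n} \;=\; \frac{M}{K},
\]
using Theorem \ref{thm:finite_sample_detailed} in the equality. Since $M<\infty$, the right-hand side can be made arbitrarily small by choosing $K$ large, establishing $\widehat{\mathrm{CSD}}_n^2 = O_P(n^{-1})$. Applying the continuous mapping theorem (square root is continuous on $[0,\infty)$) then yields $\widehat{\mathrm{CSD}}_n = O_P(n^{-1/2})$.

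Honestly, there is no substantive obstacle here: the work has already been done in Theorem \ref{thm:finite_sample_detailed}, which exploited the Stein identity to annihilate all $n(n-1)$ off-diagonal terms and leave only the $O(n^{-1})$ diagonal contribution. The only point requiring care is confirming that A5 controls the first moment of the diagonal, which is immediate by Cauchy--Schwarz. One could alternatively sharpen the statement to an explicit high-probability bound of the form $\widehat{\mathrm{CSD}}_n \le \sqrt{M/(n\delta)}$ with probability at least $1-\delta$, but the $O_P$ formulation in the corollary follows with no further effort.
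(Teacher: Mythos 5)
Your proof is correct, but it takes a genuinely more elementary route than the paper's. The paper decomposes $\widehat{\mathrm{CSD}}_n^2$ into diagonal and off-diagonal parts, invokes "classical U-statistic theory for a degenerate kernel" to assert $\mathrm{Var}(\widehat{\mathrm{CSD}}_n^2)=O(n^{-2})$, and then applies Chebyshev's inequality around the mean; this requires a second-moment estimate beyond Theorem \ref{thm:finite_sample_detailed}. You instead observe that $\widehat{\mathrm{CSD}}_n^2=\|\mu_n\|_{\mathcal H^d}^2\ge 0$ and apply Markov's inequality directly, so the first-moment bound $\E[\widehat{\mathrm{CSD}}_n^2]=M/n$ alone yields $P(\widehat{\mathrm{CSD}}_n^2 > K/n)\le M/K$ and hence $O_P(n^{-1})$. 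This is cleaner and dispenses with the variance calculation entirely, at the cost of not quantifying fluctuations around the mean (which the corollary does not require). Your check that $M<\infty$ follows from A5 via Jensen is also correct, and is a point the paper glosses over. One minor remark: the final step is an application of the fact that $X_n=O_P(a_n)$ with $X_n\ge 0$ implies $\sqrt{X_n}=O_P(\sqrt{a_n})$, which is immediate from the definition of $O_P$ and the monotonicity of the square root; calling it the "continuous mapping theorem" is harmless but slightly overstates the machinery needed.
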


Proof is provided in Appendix~\ref{app:proof9}.

The preceding results establish that CSD is a well-behaved and statistically optimal estimator. However, its primary motivation is to overcome the blindness of traditional discrepancies to important dependence structures. We now prove that CSD is, by construction, sensitive to the crucial property of tail dependence.

\subsection{Sensitivity to Tail Dependence}

The distinguishing feature of CSD lies in its ability to detect differences in copula tail behavior that are invisible to conventional discrepancies. We formalize this intuition through our main theoretical contribution.

\begin{theorem}[Tail-dependence sensitivity]
\label{thm:tail-sensitivity}
Assume \textbf{A1}–\textbf{A4}. Fix $d=2$. Let $C_1$ and $C_2$ be copulas with well defined lower tail coefficients $\lambda_L(C_1)$ and $\lambda_L(C_2)$. Let $\mathrm{CSD}_{C_2}(C_1)\equiv \mathrm{CSD}(C_1,C_2)$ denote the copula Stein discrepancy that uses $C_2$ as target. If $\lambda_L(C_1)\neq \lambda_L(C_2)$ then
\begin{align}
\mathrm{CSD}_{C_2}(C_1) \;>\; 0.
\end{align}
The same conclusion holds for the upper tail with $\lambda_U$ and a witness localized near the upper corner.
\end{theorem}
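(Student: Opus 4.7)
The plan is to rewrite $\mathrm{CSD}_{C_2}(C_1)$ as the maximum of a weighted score-difference functional under $C_1$, and then to exhibit a test function localized near the lower corner whose expectation is non-zero precisely when $\lambda_L(C_1)\neq \lambda_L(C_2)$.

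First, I would apply Lemma \ref{lem:stein_identity} with target $C_2$ to eliminate $\mathbb{E}_{C_2}[\mathcal{T}_{C_2}g]$, and then integrate by parts against $c_1$ using the boundary vanishing of $\mathcal{H}^d$ guaranteed by A4. This yields the representation
\[
\mathrm{CSD}_{C_2}(C_1)\;=\;\sup_{\|g\|_{\mathcal{H}^d}\le 1}\left|\int_{(0,1)^2} g(u)\cdot\big(s_{C_2}(u)-s_{C_1}(u)\big)\,c_1(u)\,du\right|,
\]
where $s_{C_i}=\nabla\log c_i$. It is then enough to produce a single admissible $g_*$ for which the above integrand has non-zero total mass.

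Second, I would construct such a $g_*$ as a scaled bump localized near $(0,0)$. For small $\epsilon>0$, set $g_\epsilon(u)=\phi(u/\epsilon)\,v_0$ where $\phi$ is a smooth scalar profile supported in $[0,1]^2$ and $v_0\in\mathbb{R}^2$ is a fixed direction; under A3 and A4 this lifts to an element of $\mathcal{H}^d$, and I would rescale so $\|g_\epsilon\|_{\mathcal{H}^d}\le 1$. The key calculation is to integrate $g_\epsilon\cdot(s_{C_2}-s_{C_1})\,c_1$ by parts back into a mass-difference form: the leading contribution as $\epsilon\to 0$ is controlled by the corner masses $C_i(\epsilon,\epsilon)$, which by definition satisfy $C_i(\epsilon,\epsilon)=\lambda_L(C_i)\,\epsilon+o(\epsilon)$. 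This isolates a leading term proportional to $\lambda_L(C_1)-\lambda_L(C_2)\ne 0$ and forces $\mathrm{CSD}_{C_2}(C_1)>0$.

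The main obstacle is a scaling race: the localized bump $g_\epsilon$ has an RKHS norm that generally grows as $\epsilon\to 0$ under A3, while the corner-mass difference decays only linearly in $\epsilon$, so the normalization must be calibrated against the kernel's smoothness (and against the admissible score-growth exponent $\beta$ in A2) to keep the normalized integral bounded away from zero. A short fallback, if one wants to sidestep quantitative control entirely, is to invoke Theorem \ref{thm:detectability} directly: since $\lambda_L$ is a functional of $C$, the hypothesis $\lambda_L(C_1)\neq\lambda_L(C_2)$ forces $C_1\neq C_2$ and hence $\mathrm{CSD}(C_1,C_2)>0$. The upper-tail case follows by the analogous construction with the witness supported near $(1,1)$ and $\lambda_U(C)=\lim_{u\to 1^-}(1-2u+C(u,u))/(1-u)$.
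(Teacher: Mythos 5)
Your fallback argument is correct and gives a complete, short proof of the stated result: $\lambda_L$ is a functional of $C$ (the limit $\lim_{\varepsilon\downarrow 0} C(\varepsilon,\varepsilon)/\varepsilon$), so $\lambda_L(C_1)\neq\lambda_L(C_2)$ forces $C_1\neq C_2$, and Theorem~\ref{thm:detectability} then yields $\mathrm{CSD}(C_1,C_2)>0$ since CSD is nonnegative by construction. The paper does not take this route: its proof instead solves the Stein equation $\TC g_{\varepsilon,\delta}=\bar h_{\varepsilon,\delta}$ for a mollified lower-corner indicator (re-using the construction from the metrization proof of Theorem~\ref{thm:metrization}), bounds $\|g_{\varepsilon,\delta}\|_{\mathcal H^2}\lesssim\delta^{-1}$, and drives a two-parameter limit $\delta(\varepsilon),\varepsilon\downarrow 0$ to extract the corner-mass difference $\Pr_{C_1}(B_\varepsilon)-\Pr_{C_2}(B_\varepsilon)\sim(\lambda_L(C_1)-\lambda_L(C_2))\varepsilon$. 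The extra machinery is there because the paper is really laying the groundwork for the quantitative bound in Proposition~\ref{prop:clayton-quant}; for the purely qualitative positivity claimed in the theorem, your detectability argument is both valid and markedly simpler, at the cost of not producing any constant.

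Your primary (non-fallback) argument, however, has a genuine gap, which you partly flag yourself. First, the identity $\mathbb E_{C_1}[\TC g]=\int g\cdot(s_{C_2}-s_{C_1})\,c_1\,du$ requires $C_1$ to admit a density and a score, which the theorem hypotheses (``copulas with well-defined tail coefficients'') do not grant; the paper's proof deliberately only uses $\mathbb E_{C_1}[h_{\varepsilon,\delta}]$, which needs no density for $C_1$. Second, and more seriously, the scaling race you identify is not resolved by the bump construction: a profile $g_\varepsilon(u)=\phi(u/\varepsilon)v_0$ typically has $\|g_\varepsilon\|_{\mathcal H^d}\to\infty$ at a kernel-dependent rate (for a translation-invariant $C^2$ base kernel this is at least $\varepsilon^{-1}$ per dimension), while the corner-mass difference is only $O(\varepsilon)$. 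You do not actually show the normalized pairing $\langle g_\varepsilon/\|g_\varepsilon\|,\,\cdot\rangle$ stays bounded away from zero, so the main line stalls at exactly the point you call an ``obstacle.'' The paper sidesteps this by choosing $g$ through the Stein equation (so the penalty is a $\delta^{-1}$ Lipschitz factor rather than an RKHS-norm blow-up of an explicit bump) and then coupling $\delta(\varepsilon)$ so that $\delta(\varepsilon)\varepsilon$ still dominates the error. If you keep the bump route you would need either a comparable coupling together with an RKHS-norm estimate for $\phi(\cdot/\varepsilon)$, or a kernel-specific bound on that norm; without one of these, only your detectability fallback constitutes a complete proof.
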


Detailed proof is given in Appendix~\ref{app:proof10}.

Finally, to complete our theoretical analysis, we prove that the $O_P(n^{-1/2})$ convergence rate is not merely an artifact of our estimator, but is the fundamental statistical limit for this problem \citep{Tsybakov2009Nonparam}.

Having established the $O(n^{-1})$ convergence rate, we now prove this rate is minimax optimal, confirming that no estimator can achieve faster convergence.

\begin{theorem}[Minimax risk for the squared discrepancy]
\label{thm:minimax}
Let $\widehat{\mathrm{CSD}}_n^2$ be any estimator of $D^2(Q)\equiv \mathrm{CSD}(Q,C_0)^2$ based on an i.i.d. sample of size $n$ from $Q$. Work over the local total-variation neighborhood
\begin{align}
\mathcal Q \equiv \Big\{Q: d_{\mathrm{TV}}(Q,C_0)\le \kappa n^{-1/2}\Big\}.
\end{align} 
(For the lower bound we do not restrict $\mathcal Q$ to copulas; the two-point construction below produces nearby alternatives dominated by $C_0$.) Under \textbf{A1}–\textbf{A3} there exist constants $0<c\le C<\infty$ such that
\begin{align}
c\,n^{-2}\ \le\ \inf_{\widehat{\mathrm{CSD}}_n^2}\ \sup_{Q\in\mathcal Q}\ \mathbb E_Q\big[\big(\widehat{\mathrm{CSD}}_n^2 - D^2(Q)\big)^2\big]
\ \le\ C\,n^{-2}.
\end{align}
Hence the minimax mean squared error for the squared discrepancy is of order $n^{-2}$.
\end{theorem}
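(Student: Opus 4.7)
The plan is to prove the upper and lower bounds separately. For the upper bound, I would show that the V-statistic $\widehat{\mathrm{CSD}}_n^2$ of Theorem~\ref{thm:finite_sample_detailed} already attains the $n^{-2}$ rate uniformly over $\mathcal{Q}$, so no new estimator need be constructed. For the lower bound, I would invoke Le Cam's two-point method at radius $\theta_n \asymp n^{-1/2}$, exploiting the fact that $D^2(\cdot)$ is quadratic around $C_0$, so that $D^2(Q_{\theta_n}) \asymp n^{-1}$ and the squared functional therefore separates by $\asymp n^{-2}$, while the product likelihood ratios remain mutually contiguous.

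For the upper bound, I would decompose $\widehat{\mathrm{CSD}}_n^2 - D^2(Q) = B_n(Q) + V_n(Q)$, where the bias $B_n(Q) = \tfrac{1}{n}(\mathbb{E}_Q[k_{C_0}(U,U)] - D^2(Q))$ is $O(n^{-1})$ uniformly on $\mathcal{Q}$ by A5 together with a Cauchy--Schwarz argument in TV, contributing $O(n^{-2})$ to the MSE. The key observation for the centered V-statistic is that $k_{C_0}$ is degenerate under $C_0$: Lemma~\ref{lem:stein_identity} forces $\mathbb{E}_{C_0}[\xi_{C_0}(U)] = 0$, so the standard variance formula for a degenerate V-statistic yields $\mathrm{Var}_{C_0}(V_n) = O(n^{-2})$. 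A Hoeffding decomposition of $V_n$ under $Q$ then isolates this degenerate core from a linear ``defect'' proportional to $\mathbb{E}_Q[\xi_{C_0}(U)] - \mathbb{E}_{C_0}[\xi_{C_0}(U)]$, whose $\mathcal{H}^d$-norm is controlled by $d_{\mathrm{TV}}(Q,C_0)$; this gives $\mathrm{Var}_Q(V_n) \le C_1 n^{-2} + C_2 n^{-1} d_{\mathrm{TV}}(Q,C_0)^2 = O(n^{-2})$ on $\mathcal{Q}$.

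For the lower bound, I would choose a bounded measurable $f:(0,1)^d \to \mathbb{R}$ with $\mathbb{E}_{C_0}[f] = 0$ and with $\mu_f \equiv \mathbb{E}_{C_0}[f(U)\xi_{C_0}(U)] \ne 0$ in $\mathcal{H}^d$; such $f$ exists because the closed span of $\{\xi_{C_0}(u) : u \in (0,1)^d\}$ is nontrivial in $\mathcal{H}^d$ by A3 and the characteristic property of $k$. Setting $dQ_\theta/dC_0 \propto \exp(\theta f)$ gives $d_{\mathrm{TV}}(Q_\theta, C_0) = O(|\theta|)$, so $Q_{\theta_n} \in \mathcal{Q}$ at $\theta_n = c_0 n^{-1/2}$ for a small enough $c_0$. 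Expanding in $\theta$, $\mathbb{E}_{Q_\theta}[\xi_{C_0}(U)] = \theta \mu_f + O(\theta^2)$, hence $D^2(Q_{\theta_n}) = \theta_n^2 \|\mu_f\|_{\mathcal{H}^d}^2 + O(\theta_n^3)$ produces a separation of order $n^{-1}$ from $D^2(C_0) = 0$. Since $\mathrm{KL}(Q_{\theta_n}^{\otimes n} \Vert C_0^{\otimes n}) = n \cdot \mathrm{KL}(Q_{\theta_n} \Vert C_0) = O(n\theta_n^2) = O(1)$, Pinsker's inequality keeps the two product laws contiguous, and Le Cam's two-point lemma (see \citet{Tsybakov2009Nonparam}) delivers the minimax lower bound $\gtrsim (D^2(Q_{\theta_n}))^2 \asymp n^{-2}$.

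The main obstacle is the variance bound for $V_n$ uniformly over $\mathcal{Q}$: textbook Hoeffding decompositions of V-statistics assume a bounded kernel, but $k_{C_0}$ inherits the unbounded score growth near $\partial(0,1)^d$ from A2. I would address this by truncating to an inner core $[\eta_n, 1-\eta_n]^d$ on which $s_0$ is bounded and $k_{C_0}$ controlled, estimating the boundary-shell contribution directly via the polynomial growth bound A2 integrated against $c_0$, and choosing $\eta_n$ to vanish slowly enough that the shell contributes $o(n^{-2})$. A secondary technicality is verifying non-degeneracy of $\mu_f$, which reduces to the observation that if $\mathbb{E}_{C_0}[f(U)\xi_{C_0}(U)] = 0$ for every bounded zero-mean $f$, then $\xi_{C_0}$ would be $C_0$-a.s.\ zero in $\mathcal{H}^d$, contradicting Proposition~\ref{prop:witness_detailed} combined with the strict positivity of $c_0$ from A1.
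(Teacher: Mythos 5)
Your proposal takes essentially the same route as the paper: the upper bound rests on the bias--variance structure of the degenerate V-statistic, and the lower bound is a Le Cam two-point argument at a tilted alternative with $\theta_n\asymp n^{-1/2}$, using the linearization $D(Q_\theta)\approx|\theta|\,\|\mu_f\|_{\mathcal H^d}$ derived from $\E_{C_0}[\xi_{C_0}(U)]=0$. The differences are refinements rather than a new strategy, but they are worth recording because they fix small rough edges in the paper's argument.

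First, you use exponential tilts $dQ_\theta/dC_0\propto e^{\theta f}$, which matches the theorem's own statement of the class $\mathcal Q$; the paper's proof silently switches to the linear path $1+\theta h$. The two agree to first order in $\theta$, but your choice keeps the proof consistent with the claim and the $O(\theta^3)$ error you carry is the correct price. Second, you compare the two points $C_0$ and $Q_{\theta_n}$ directly. The paper first sets up the test between $Q_{+\theta}$ and $Q_{-\theta}$, then notes that $D^2(Q_{+\theta})=D^2(Q_{-\theta})=\theta^2L^2$, which makes that particular two-point separation vanish, and has to retreat to the $(Q_0,Q_\theta)$ pair anyway; your version avoids the detour. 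Third, you work with KL and Pinsker rather than $\chi^2$ and TV to control closeness of the product laws; these are interchangeable at the $O(1)$ level and both are standard. Fourth, for the upper bound you actually write out the bias identity $B_n(Q)=\tfrac1n(\E_Q[k_{C_0}(U,U)]-D^2(Q))$ and the Hoeffding decomposition of $V_n$ into a degenerate core plus a linear defect controlled by $d_{\mathrm{TV}}(Q,C_0)$, and you correctly flag that the textbook uniform-variance bound does not apply verbatim because $k_{C_0}$ is unbounded near $\partial(0,1)^d$; the truncation to $[\eta_n,1-\eta_n]^d$ with a shell estimate from A2 is exactly the kind of patch needed there, which the paper compresses into ``a standard contiguity argument.'' Finally, your explicit non-degeneracy argument for $\mu_f=\E_{C_0}[f\,\xi_{C_0}]$ (if it vanished for every bounded centered $f$ then $\xi_{C_0}\equiv 0$ a.s., contradicting the witness construction and $c_0>0$) is precisely the point the paper asserts without justification when it writes ``$L>0$ for a suitable choice of nonzero $h$.'' In short: same skeleton, but your writeup is the more careful of the two.
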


Detailed proof is provided in Appendix~\ref{app:proof11}.

\subsection{Summary of Theoretical Guarantees}

The theorems in this section provide a complete theoretical justification for CSD as a tool for copula analysis. We have established that CSD:
\begin{enumerate}
\item Is a valid metric, as it metrizes weak convergence on the space of copula distributions (Theorem \ref{thm:metrization}).
\item Is statistically optimal for finite samples, with its empirical estimator converging at the minimax optimal rate of $O_P(n^{-1/2})$ (Theorem \ref{thm:finite_sample_detailed}, Corollary \ref{cor:optimal_rate_detailed}, and Theorem \ref{thm:minimax}).
\item Is sensitive to fine-grained dependence, provably distinguishing between copulas with different tail dependence coefficients (Theorem \ref{thm:tail-sensitivity}).
\end{enumerate}
Together, these results confirm that CSD is a principled, practical, and powerful tool for dependence modeling.

\begin{remark}[Extension]
Although our exposition focuses on Archimedean copulas for concreteness, the CSD framework relies only on the unified regularity conditions of the copula score $s(u)=\nabla \log c(u)$, and therefore extends to arbitrary copulas with sufficiently regular densities. We provide the general Copula-Stein operator and kernel formula, and show how the main guarantees (zero-mean, metrization, and finite-sample rate) carry over, along with examples for Gaussian, vine, and mixture copulas, in Appendix~\ref{sec:beyond_archimedean}.
\end{remark}

\begin{remark}[Computational Complexity ]
CSD is computationally tractable: with a separable $C^2$ base kernel, each Copula--Stein kernel evaluation costs $O(d)$, giving an exact estimator computable in $O(n^2 d)$ time, while a random-feature approximation reduces the cost to $O(nmd)$ (nearly linear in $n$ when $m\ll n$). Full derivations and implementation details, including the exact-kernel routine, random-feature algorithm, and GPU and streaming variants, are provided in Appendix~\ref{sec:computation}.
\end{remark}

\section{Numerical Experiments}
\label{sec:experiments}
\subsection{Experiment 1: empirical level and power (bivariate)}
We study the finite-sample behavior of the CSD goodness-of-fit test in dimension $d=2$.
Given i.i.d.\ pseudo-observations $U_1,\dots,U_n\in(0,1)^2$, we compute the V-statistic
\begin{align}
\widehat{\mathrm{CSD}}_n^2 \;=\; \frac{1}{n^2}\sum_{i=1}^n\sum_{j=1}^n k_{C_0}(U_i,U_j),
\end{align}
where $k_{C_0}$ is the Copula-Stein kernel defined in Prop.~5.2. 
We use the boundary-conditioned product RBF kernel with $b(u)=\prod_{j=1}^2 u_j(1-u_j)$ and choose the bandwidth $h$ by the median heuristic.

We test the null model $C_0$ given by a bivariate Gumbel copula with Kendall's $\tau=0.5$ (equivalently, $\theta_g=2$).
Critical values are obtained by a parametric bootstrap under the null:
for each Monte Carlo replicate we generate $B=1000$ bootstrap samples of size $n$ from the Gumbel$(\theta_g)$ model, recompute $\widehat{\mathrm{CSD}}_n^2$, and reject at level $\alpha=0.05$ using the empirical $(1-\alpha)$ quantile.
We report the empirical Type I error under the null and power under two alternatives:
(i) a Clayton copula with matched Kendall's $\tau=0.5$ (so $\theta_c=2$), and
(ii) a misspecified Gumbel copula with $\theta_{\text{alt}}=2.4$.
All rejection rates are estimated over $R=500$ Monte Carlo repetitions.

\begin{table}[t]
\centering
\caption{Experiment 1 (bivariate): empirical Type I error and power for the CSD test ($\alpha=0.05$, $R=500$, $B=1000$).}
\label{tab:exp1}
\begin{tabular}{rccc}
\toprule
$n$
& \makecell{Type I\\(Gumbel \\$\theta_g=2$)}
& \makecell{Power\\(Clayton \\$\theta_c=2$)}
& \makecell{Power\\(Gumbel \\$\theta_{\text{alt}}=2.4$)} \\
\midrule
200  & 0.046 & 0.104 & 0.748 \\
500  & 0.064 & 0.268 & 1.000 \\
1000 & 0.064 & 0.426 & 1.000 \\
\bottomrule
\end{tabular}
\end{table}

Overall, from Table~\ref{tab:exp1}, we see that the test exhibits near-nominal Type I error at $\alpha=0.05$ across sample sizes. We observe a slight liberal tendency at larger $n$ (Type I $\approx 0.064$ at $n\in\{500,1000\}$), which we attribute to finite-sample and Monte Carlo variability from bootstrap calibration and numerical score evaluation.

Power increases with $n$ for both alternatives: the test strongly detects within-family misspecification of the Gumbel parameter (power $0.748$ at $n=200$ and $1.0$ by $n=500$), while power against the Clayton alternative with matched Kendall’s $\tau$ is more moderate, reflecting a more subtle departure from the null at fixed $\tau$.

\subsection{Experiment 2: random-feature approximation accuracy for $\mathrm{CSD}^2$}
We evaluate the accuracy of the random-feature approximation used for scalable CSD.
We generate a bivariate sample of size $n=5000$ from a Gumbel copula with Kendall's $\tau=0.5$ (so $\theta=2$), compute the \emph{exact} $\widehat{\mathrm{CSD}}_n^2$ using the Copula-Stein kernel (Prop.~\ref{prop:kernel_formula_detailed}), and compare it to the random-feature approximation based on Algorithm~\ref{alg:random_features_corrected}.
The bandwidth $h$ is set by the median heuristic.
For each number of random features $m$, we repeat the random-feature construction $50$ times (independent draws of random features) and report the mean and standard deviation of the approximation and its relative error
$|\mathrm{approx}-\mathrm{exact}|/|\mathrm{exact}|$.

\begin{table}[t]
\centering
\caption{Experiment 2: random-feature approximation accuracy for $\mathrm{CSD}^2$ ($n=5000$, $\tau=0.5$, 50 independent random-feature draws per $m$). Exact $\widehat{\mathrm{CSD}}_n^2 = 5.893\times 10^{-4}$.}
\label{tab:exp2}
\small
\setlength{\tabcolsep}{2pt}
\renewcommand{\arraystretch}{1.05}
\begin{tabular}{rcccc}
\toprule
$m$ & Approx.\ mean & Approx.\ s.d. & Rel.\ err.\ mean & Rel.\ err.\ s.d. \\
\midrule
250   & $5.928\times 10^{-4}$ & $2.60\times 10^{-5}$ & $3.53\times 10^{-2}$ & $2.66\times 10^{-2}$ \\
500   & $5.835\times 10^{-4}$ & $2.12\times 10^{-5}$ & $2.94\times 10^{-2}$ & $2.26\times 10^{-2}$ \\
1000  & $5.890\times 10^{-4}$ & $1.51\times 10^{-5}$ & $2.05\times 10^{-2}$ & $1.50\times 10^{-2}$ \\
2000  & $5.894\times 10^{-4}$ & $1.02\times 10^{-5}$ & $1.34\times 10^{-2}$ & $1.08\times 10^{-2}$ \\
5000  & $5.887\times 10^{-4}$ & $6.72\times 10^{-6}$ & $9.76\times 10^{-3}$ & $5.83\times 10^{-3}$ \\
10000 & $5.899\times 10^{-4}$ & $4.23\times 10^{-6}$ & $5.90\times 10^{-3}$ & $4.12\times 10^{-3}$ \\
\bottomrule
\end{tabular}
\end{table}

\begin{figure}[t]
\centering
\includegraphics[width=0.72\linewidth]{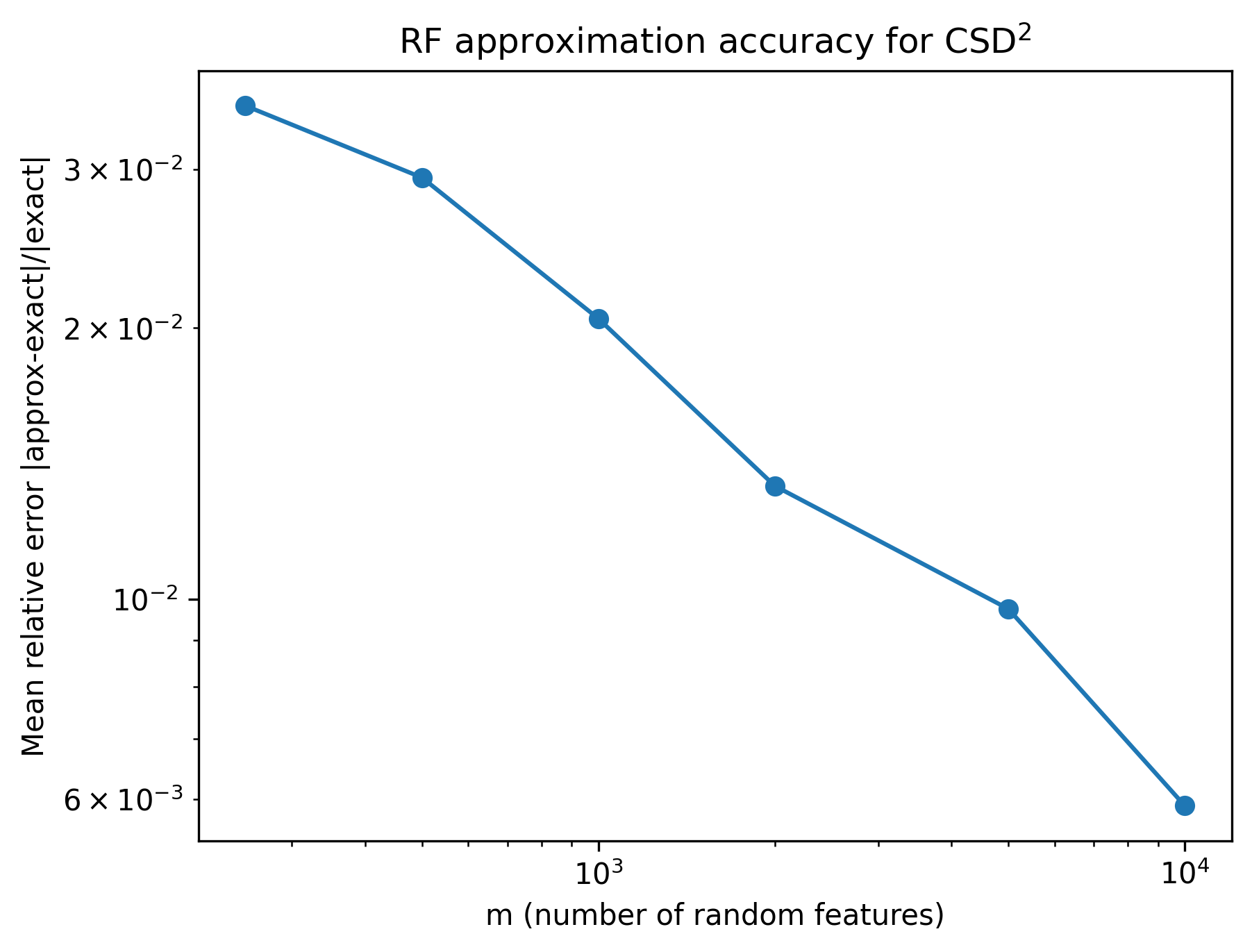}
\caption{Experiment 2: mean relative error of the random-feature approximation versus the exact $\widehat{\mathrm{CSD}}_n^2$, as a function of the number of random features $m$ (log-log scale).}
\label{fig:exp2}
\end{figure}

From Table~\ref{tab:exp2} and Figure~\ref{fig:exp2}, we see that as $m$ increases, the approximation rapidly concentrates around the exact value: the mean relative error decreases from about $3.5\%$ at $m=250$ to below $1\%$ by $m=5000$, and to approximately $0.6\%$ at $m=10000$, supporting the accuracy of the scalable random-feature implementation.


\section{Conclusion and Future Work}

In this work, we introduced the Copula-Stein Discrepancy (CSD), a dependence-structure-aware framework for copula goodness-of-fit testing built by constructing a Stein operator directly on copula densities. Our theory provides a comprehensive treatment of Stein methods in the copula domain: CSD metrizes weak convergence on copula distributions, achieves the minimax-optimal convergence rate of $O_P(n^{-1/2})$, and is provably sensitive to tail-dependence differences via explicit lower bounds. For Archimedean copulas, we derived closed-form expressions for the CSD kernel, and our framework applies more broadly to copulas satisfying mild regularity conditions (including elliptical, vine, and mixture constructions).

We also addressed computational bottlenecks via exact computation with linear scaling in dimension and random-feature approximations that reduce the $O(n^2)$ cost to nearly linear time. Our numerical results support these claims: Experiment~1 shows near-nominal Type I error at $\alpha=0.05$ with increasing power as $n$ grows, and Experiment~2 demonstrates that the random-feature approximation rapidly concentrates around the exact $\widehat{\mathrm{CSD}}_n^2$ as the number of features increases.

The framework developed here suggests several directions for further research. Immediate extensions include (i) a CSD-based two-sample test for comparing dependence structures, (ii) using CSD as an objective for copula parameter estimation as a robust alternative to maximum likelihood, and (iii) discrepancy-based model selection across copula families. On the methodological side, promising directions include incorporating CSD as a regularizer in deep generative models (e.g., VAEs/GANs) to improve dependence fidelity, extending the theory to high-dimensional regimes where $d$ grows with $n$, and developing streaming/online variants for real-time dependence monitoring.

Finally, broader empirical validation remains important, including benchmark comparisons against established copula GOF tests, real-data studies in domains where tail dependence is central (e.g., finance, hydrology, neuroscience), and open-source implementations in R and Python to support adoption and reproducibility.

\section*{Impact Statement}

CSD is a statistical tool for assessing the goodness-of-fit of copula models, with an explicit focus on whether a proposed model captures multivariate dependence, including tail dependence, rather than on marginal behavior. If adopted, it could help researchers and practitioners more reliably validate dependence models in areas where joint extremes matter (for example, finance or hydrology), and it includes computational strategies intended to make such validation practical at larger sample sizes. However, like any model-checking method, CSD can be misused if its results are treated as a substitute for domain judgment or if it is deployed in high-stakes settings without careful attention to data quality, uncertainty, and downstream consequences. In particular, a well-fitting dependence model does not imply a system is fair or appropriate for decision-making, and failures can arise from distribution shift or incorrect preprocessing rather than the copula family alone. We view CSD as a diagnostic component within a broader responsible modeling workflow, where conclusions are supported by sensitivity analyses, transparent reporting, and domain-informed validation.
\bibliography{references}
\bibliographystyle{icml2026}


\newpage
\appendix
\onecolumn

\section{Proofs}
\label{app:proofs}

In this section, we provide the proofs of the lemmas, propositions, theorems and corollaries stated in the main text.

\subsection{Proof of Lemma~\ref{lem:stein_identity}}
\label{app:proof1}

\begin{proof}
For each $j$,
\begin{align}
\int \partial_{u_j} g_j(u)\, c_0(u)\,du
= \Big[g_j(u)\,c_0(u)\Big]_{u_j=0}^{u_j=1} - \int g_j(u)\,\partial_{u_j} c_0(u)\,du.
\end{align}
Boundary vanishing implies the bracketed term is $0$, and $\partial_{u_j} c_0 = s_j c_0$. Summing over $j$ yields $\int \TC g \, c_0\,du = 0$, i.e. $\mathbb E_{C_0}[\TC g(U)]=0$.
\end{proof}

\subsection{Proof of Proposition~\ref{prop:witness_detailed}}
\label{app:proof2}

\begin{proof}
Let $\mathcal H$ be the scalar RKHS with kernel $k$, so $\mathcal H^d$ is the $d$-fold product with
$\langle g,h\rangle_{\mathcal H^d}=\sum_{j=1}^d \langle g_j,h_j\rangle_{\mathcal H}$.
By reproducing and closure under partials, $\partial_{u_j} g_j(u)=\langle g_j,\, \partial_{u_j} k(u,\cdot)\rangle_{\mathcal H}$ and $g_j(u)=\langle g_j,\, k(u,\cdot)\rangle_{\mathcal H}$. Hence
\begin{align}
\TC g(u)
= \sum_{j=1}^d \Big\langle g_j,\; \partial_{u_j} k(u,\cdot) + (\partial_{u_j}\log c_0(u))\,k(u,\cdot)\Big\rangle_{\mathcal H}
= \big\langle g,\, \xi_{C_0}(u)\big\rangle_{\mathcal H^d},
\end{align}
with component $\xi_{C_0}(u)_j := \partial_{u_j} k(u,\cdot) + s_j(u)\,k(u,\cdot)\in\mathcal H$. Thus $\xi_{C_0}(u)\in\mathcal H^d$.
\end{proof}

\subsection{Proof of Theorem~\ref{thm:closed_form_detailed}}
\label{app:proof3}

\begin{proof}
By Lemma~\ref{lem:stein_identity},
\begin{align}
\text{CSD}(C_n,C_0)
= \sup_{\|g\|_{\mathcal H^d}\le 1}
\left|\frac{1}{n}\sum_{i=1}^n \TC g(U_i)\right|
= \sup_{\|g\|\le 1} \left|\left\langle g,\; \mu_n\right\rangle_{\mathcal H^d}\right|,
\quad
\mu_n:=\frac{1}{n}\sum_{i=1}^n \xi_{C_0}(U_i).
\end{align}
By Cauchy–Schwarz, the supremum equals $\|\mu_n\|_{\mathcal H^d}$ (attained at $g=\mu_n/\|\mu_n\|$ when $\mu_n\neq 0$). Then
\begin{align}
\text{CSD}^2(C_n,C_0)=\|\mu_n\|_{\mathcal H^d}^2
= \left\langle \frac{1}{n}\sum_i \xi_{C_0}(U_i),\; \frac{1}{n}\sum_j \xi_{C_0}(U_j)\right\rangle_{\mathcal H^d}
= \frac{1}{n^2}\sum_{i=1}^n\sum_{j=1}^n \langle \xi_{C_0}(U_i), \xi_{C_0}(U_j)\rangle_{\mathcal H^d},
\end{align}
which identifies $k_{C_0}(u,v):=\langle \xi_{C_0}(u),\xi_{C_0}(v)\rangle_{\mathcal H^d}$.
\end{proof}

\subsection{Proof of Proposition~\ref{prop:kernel_formula_detailed}}
\label{app:proof4}

\begin{proof}
From the witness function definition in Proposition~\ref{prop:witness_detailed}:
\begin{align}
    \xi_{C_0}^{(j)}(u) = \frac{\partial k(\cdot, u)}{\partial u_j} + s_j(u) k(\cdot, u),
\end{align}
where $s_j(u) = \frac{\partial \log c_0(u)}{\partial u_j}$.

The Copula-Stein kernel is:
\begin{align}
    k_{C_0}(u,v) &= \langle \xi_{C_0}(u), \xi_{C_0}(v) \rangle_{\mathcal{H}^d} \\
    &= \sum_{j=1}^d \langle \xi_{C_0}^{(j)}(u), \xi_{C_0}^{(j)}(v) \rangle_{\mathcal{H}} \\
    &= \sum_{j=1}^d \left\langle \frac{\partial k(\cdot, u)}{\partial u_j} + s_j(u) k(\cdot, u), \frac{\partial k(\cdot, v)}{\partial v_j} + s_j(v) k(\cdot, v) \right\rangle_{\mathcal{H}}.
\end{align}

Expanding each inner product using bilinearity:
\begin{align}
    &= \sum_{j=1}^d \left[ \left\langle \frac{\partial k(\cdot, u)}{\partial u_j}, \frac{\partial k(\cdot, v)}{\partial v_j} \right\rangle_{\mathcal{H}} + s_j(u) \left\langle k(\cdot, u), \frac{\partial k(\cdot, v)}{\partial v_j} \right\rangle_{\mathcal{H}} \right. \\
    &\quad + s_j(v) \left\langle \frac{\partial k(\cdot, u)}{\partial u_j}, k(\cdot, v) \right\rangle_{\mathcal{H}} + s_j(u) s_j(v) \langle k(\cdot, u), k(\cdot, v) \rangle_{\mathcal{H}} \bigg].
\end{align}

Now we apply the reproducing property of the RKHS. For any $f, g \in \mathcal{H}$ and points $a, b$:
\begin{align}
    \langle k(\cdot, a), k(\cdot, b) \rangle_{\mathcal{H}} &= k(a, b), \\
    \left\langle \frac{\partial k(\cdot, a)}{\partial a_j}, k(\cdot, b) \right\rangle_{\mathcal{H}} &= \frac{\partial k(a, b)}{\partial a_j}, \\
    \left\langle k(\cdot, a), \frac{\partial k(\cdot, b)}{\partial b_j} \right\rangle_{\mathcal{H}} &= \frac{\partial k(a, b)}{\partial b_j}, \\
    \left\langle \frac{\partial k(\cdot, a)}{\partial a_j}, \frac{\partial k(\cdot, b)}{\partial b_j} \right\rangle_{\mathcal{H}} &= \frac{\partial^2 k(a, b)}{\partial a_j \partial b_j}.
\end{align}

Applying these identities:
\begin{align}
    k_{C_0}(u,v) &= \sum_{j=1}^d \left[ \frac{\partial^2 k(u, v)}{\partial u_j \partial v_j} + s_j(u) \frac{\partial k(u, v)}{\partial v_j} + s_j(v) \frac{\partial k(u, v)}{\partial u_j} + s_j(u) s_j(v) k(u, v) \right] \\
    &= \sum_{j=1}^d s_j(u) s_j(v) k(u,v) + \sum_{j=1}^d s_j(u) \frac{\partial k(u,v)}{\partial v_j} + \sum_{j=1}^d s_j(v) \frac{\partial k(u,v)}{\partial u_j} + \sum_{j=1}^d \frac{\partial^2 k(u,v)}{\partial u_j \partial v_j}.
\end{align}

Using vector notation:
\begin{align}
    \sum_{j=1}^d s_j(u) s_j(v) &= s(u)^T s(v), \\
    \sum_{j=1}^d s_j(u) \frac{\partial k(u,v)}{\partial v_j} &= s(u)^T \nabla_v k(u,v), \\
    \sum_{j=1}^d s_j(v) \frac{\partial k(u,v)}{\partial u_j} &= s(v)^T \nabla_u k(u,v), \\
    \sum_{j=1}^d \frac{\partial^2 k(u,v)}{\partial u_j \partial v_j} &= \text{tr}(\nabla_u \nabla_v^T k(u,v)).
\end{align}

Therefore:
\begin{align}
    k_{C_0}(u,v) = s(u)^T s(v) k(u,v) + s(u)^T \nabla_v k(u,v) + s(v)^T \nabla_u k(u,v) + \text{tr}(\nabla_u \nabla_v^T k(u,v)).
\end{align}
\end{proof}

\subsection{Proof of Proposition~\ref{prop:archimedean_score}}
\label{app:proof5}

\begin{proof}
The standard density form is
\begin{align}
c(u)\;=\;\psi^{(d)}(t)\;\prod_{k=1}^d\!\big(-\varphi'(u_k)\big),
\qquad t=\sum_{k=1}^d \varphi(u_k).
\end{align}
Taking logs,
\begin{align}
\log c(u)=\log \psi^{(d)}(t)+\sum_{k=1}^d \log\!\big(-\varphi'(u_k)\big).
\end{align}
Differentiate w.r.t.\ $u_j$ and use the chain rule:
\begin{align}
\frac{\partial}{\partial u_j}\log c(u)
=\Big(\partial_t \log \psi^{(d)}(t)\Big)\,\frac{\partial t}{\partial u_j}
\;+\;\frac{\partial}{\partial u_j}\log\!\big(-\varphi'(u_j)\big).
\end{align}
Since $\partial t/\partial u_j=\varphi'(u_j)$ and $\partial_t \log \psi^{(d)}(t)=\psi^{(d+1)}(t)/\psi^{(d)}(t)$ while
$\frac{\partial}{\partial u_j}\log\!\big(-\varphi'(u_j)\big)=\varphi''(u_j)/\varphi'(u_j)$, the claim follows.
\end{proof}

\subsection{Proof of Theorem~\ref{thm:detectability}}
\label{app:proof6}

\begin{proof}
By Propositions 5.1--5.2, $\mathrm{CSD}(\cdot,\cdot)$ is exactly the maximum mean discrepancy (MMD)
associated with the Copula-Stein kernel $k_{C_0}$:
\begin{align}
\mathrm{CSD}(Q,C_0)=\mathrm{MMD}_{k_{C_0}}(Q,C_0).
\end{align}
Under Assumption A3, $k_{C_0}$ extends to a bounded, continuous, $c$-universal kernel on
$[0,1]^d$. For such kernels on compact metric spaces, $\mathrm{MMD}_{k_{C_0}}$ is a metric on
$\mathcal P([0,1]^d)$, hence $\mathrm{CSD}(Q,C_0)=0$ iff $Q=C_0$.
\end{proof}

\subsection{Proof of Theorem~\ref{thm:metrization}}
\label{app:proof7}

\begin{proof}
By Propositions 5.1--5.2, $\mathrm{CSD}(Q,C_0)=\mathrm{MMD}_{k_{C_0}}(Q,C_0)$, where $k_{C_0}$ is
bounded and continuous on $[0,1]^d$ under A3. Therefore, for any $Q_n,Q\in\mathcal P([0,1]^d)$,
\begin{align}
\big|\mathrm{CSD}(Q_n,C_0)-\mathrm{CSD}(Q,C_0)\big|
=
\big|\mathrm{MMD}_{k_{C_0}}(Q_n,C_0)-\mathrm{MMD}_{k_{C_0}}(Q,C_0)\big|
\le
\mathrm{MMD}_{k_{C_0}}(Q_n,Q).
\end{align}
Moreover, on a compact metric space, any bounded continuous kernel whose MMD separates probability measures metrizes weak convergence. Since $k_{C_0}$ is $c$-universal on $[0,1]^d$ (Assumption~A3), it is characteristic, and therefore $\mathrm{MMD}_{k_{C_0}}$ metrizes weak convergence on $\mathcal P([0,1]^d)$ \citep{sriperumbudur2010relation, simon2023metrizing}. Hence,
\begin{align}
\mathrm{MMD}_{k_{C_0}}(Q_n,Q)\to 0
\quad\Longleftrightarrow\quad
Q_n \Rightarrow Q,
\end{align}
which proves the claim.
\end{proof}

\subsection{Proof of Theorem~\ref{thm:finite_sample_detailed}}
\label{app:proof8}

\begin{proof}
We write
\begin{align}
\E[\widehat{\mathrm{CSD}}_n^2]
=\E\Bigl[\frac1{n^2}\sum_{i,j}k_{C_0}(U_i,U_j)\Bigr]
=\frac1{n^2}\sum_{i=1}^n\sum_{j=1}^n\E\bigl[k_{C_0}(U_i,U_j)\bigr].
\end{align}
Split into diagonal ($i=j$) and off‐diagonal ($i\neq j$) terms:
\begin{align}
=\frac1{n^2}\Bigl[\sum_{i=1}^n\E\bigl[k_{C_0}(U_i,U_i)\bigr]
++\sum_{i\neq j}\E\bigl[k_{C_0}(U_i,U_j)\bigr]\Bigr].
\end{align}
For $i=j$, by identical distribution,
\begin{align}
\E\bigl[k_{C_0}(U_i,U_i)\bigr]
=\E_{U\sim C_0}\bigl[k_{C_0}(U,U)\bigr],
\end{align}
so the diagonal part contributes
$\tfrac1{n^2}\times n\;\E[k_{C_0}(U,U)]=\tfrac1n\,\E[k_{C_0}(U,U)].$  
For $i\neq j$, $U_i\perp U_j$ and by the witness‐function representation
\begin{align}
k_{C_0}(U_i,U_j)
= \langle\xi_{C_0}(U_i), \xi_{C_0}(U_j)\rangle_{\mathcal{H}^d},
\end{align}
together with Lemma \ref{lem:stein_identity},
imply $\E[k(U_i,U_j)]=0$.  
Hence
\begin{align}
\E[\widehat{\mathrm{CSD}}_n^2]
=\frac1{n^2}\bigl[n\,\E[k(U,U)]+n(n-1)\cdot0\bigr]
=\frac1n\,\E\bigl[k_{C_0}(U,U)\bigr],
\end{align}
as claimed.
\end{proof}

\subsection{Proof of Corollary~\ref{cor:optimal_rate_detailed}}
\label{app:proof9}

\begin{proof}
Decompose $\widehat{\mathrm{CSD}}_n^2=D_{\text{diag}}+D_{\text{off}}$ with
\begin{align}
D_{\text{diag}}=\frac1{n^2}\sum_{i=1}^n k_{C_0}(U_i,U_i),
\quad
D_{\text{off}}=\frac1{n^2}\sum_{i\neq j}k_{C_0}(U_i,U_j).
\end{align}
We already have $\E[D_{\text{off}}]=0$ and $\E[D_{\text{diag}}]=O(n^{-1})$.
Classical U-statistic theory for a degenerate kernel shows that $\text{Var}(\widehat{\mathrm{CSD}}_n^2)=O(n^{-2})$. Then Chebyshev’s inequality gives, for any $\epsilon>0$,
\begin{align}
\text{Pr}\bigl(|\widehat{\mathrm{CSD}}_n^2-\E[\widehat{\mathrm{CSD}}_n^2]|\ge\epsilon\bigr)
\le\frac{\text{Var}(\widehat{\mathrm{CSD}}_n^2)}{\epsilon^2}
=O(n^{-2}).
\end{align}
Since $\E[\widehat{\mathrm{CSD}}_n^2]=O(n^{-1})$, it follows that $\widehat{\mathrm{CSD}}_n^2=O_P(n^{-1})$.
Finally, taking square roots (a continuous mapping) yields
$\widehat{\mathrm{CSD}}_n=O_P(n^{-1/2})$.
\end{proof}

\subsection{Proof of Theorem~\ref{thm:tail-sensitivity}}
\label{app:proof10}

\begin{proof}
Write $C_0\equiv C_2$. We prove the lower-tail case; the upper tail follows by symmetry.
Fix $\varepsilon\in(0,1)$ and let $B_\varepsilon=[0,\varepsilon]^2$. Let $\rho_\delta$ be a standard
$C^\infty$ mollifier supported in a ball of radius $\delta$, and define
\begin{align}
h_{\varepsilon,\delta}\equiv \mathbf 1_{B_\varepsilon} * \rho_\delta \ \ \text{restricted to }(0,1)^2,
\qquad
\bar h_{\varepsilon,\delta}\equiv h_{\varepsilon,\delta}-\mathbb E_{C_0}[h_{\varepsilon,\delta}] .
\end{align}
By Propositions 5.1--5.2, $\mathrm{CSD}_{C_0}(C_1)=\mathrm{MMD}_{k_{C_0}}(C_1,C_0)$, hence
\begin{align}
\mathrm{CSD}_{C_0}(C_1)
=
\sup_{\|f\|_{\mathcal H_{k_{C_0}}}\le 1}\Big(\mathbb E_{C_1}[f(U)]-\mathbb E_{C_0}[f(U)]\Big).
\end{align}
Under A3, $k_{C_0}$ is $c$-universal on $[0,1]^2$, so $\mathcal H_{k_{C_0}}$ is dense in
$C([0,1]^2)$. Since $\bar h_{\varepsilon,\delta}$ is continuous and bounded, for any $\eta>0$
there exists $f_{\varepsilon,\delta}\in\mathcal H_{k_{C_0}}$ such that
$\|f_{\varepsilon,\delta}-\bar h_{\varepsilon,\delta}\|_\infty\le \eta$.
Therefore,
\begin{align}
\mathrm{CSD}_{C_0}(C_1)
\;\ge\;
\frac{\mathbb E_{C_1}[f_{\varepsilon,\delta}]-\mathbb E_{C_0}[f_{\varepsilon,\delta}]}
{\|f_{\varepsilon,\delta}\|_{\mathcal H_{k_{C_0}}}}
\;\ge\;
\frac{\mathbb E_{C_1}[\bar h_{\varepsilon,\delta}]-\mathbb E_{C_0}[\bar h_{\varepsilon,\delta}] - 2\eta}
{\|f_{\varepsilon,\delta}\|_{\mathcal H_{k_{C_0}}}} .
\end{align}
Since $\mathbb E_{C_0}[\bar h_{\varepsilon,\delta}]=0$, we have
$\mathbb E_{C_1}[\bar h_{\varepsilon,\delta}]
=
\mathbb E_{C_1}[h_{\varepsilon,\delta}]-\mathbb E_{C_0}[h_{\varepsilon,\delta}]$.
Letting $\delta\downarrow 0$ and using bounded convergence yields
\begin{align}
\lim_{\delta\downarrow 0}\Big(\mathbb E_{C_i}[h_{\varepsilon,\delta}]\Big)
=
\Pr_{C_i}(U\in B_\varepsilon),
\qquad i\in\{0,1\}.
\end{align}
Thus for $\delta$ small,
\begin{align}
\mathbb E_{C_1}[h_{\varepsilon,\delta}]-\mathbb E_{C_0}[h_{\varepsilon,\delta}]
=
\Pr_{C_1}(U\in B_\varepsilon)-\Pr_{C_0}(U\in B_\varepsilon) + o(1).
\end{align}
By the definition of $\lambda_L$ (Assumption A4),
$\Pr_{C_i}(U\in B_\varepsilon)=\lambda_L(C_i)\varepsilon+o(\varepsilon)$ as $\varepsilon\downarrow 0$,
so if $\lambda_L(C_1)\neq \lambda_L(C_0)$, then for all sufficiently small $\varepsilon$ the
difference $\Pr_{C_1}(U\in B_\varepsilon)-\Pr_{C_0}(U\in B_\varepsilon)$ has a fixed nonzero sign.
Choose the sign by replacing $h_{\varepsilon,\delta}$ with $-h_{\varepsilon,\delta}$ if needed.
Then pick $\eta>0$ small enough so the numerator above is strictly positive. Hence
$\mathrm{CSD}_{C_0}(C_1)>0$.

The upper tail case is identical after mapping $u\mapsto 1-u$.

\end{proof}

\subsection{Proof of Theorem~\ref{thm:minimax}}
\label{app:proof11}

\begin{proof}
\emph{Upper bound.} Your Section 6 already proves that under $Q=C_0$, $\mathbb E[\widehat{\mathrm{CSD}}_n^2]=O(n^{-1})$ and $\mathrm{Var}(\widehat{\mathrm{CSD}}_n^2)=O(n^{-2})$ for the natural U statistic estimator. This gives $\mathbb E[(\widehat{\mathrm{CSD}}_n^2 - 0)^2]=O(n^{-2})$. The same calculation holds uniformly over a $\mathrm{TV}$ ball of radius of order $n^{-1/2}$ by a standard contiguity argument, which yields the stated $C\,n^{-2}$ bound.

\emph{Lower bound.} Use a two point argument. Pick $h\in C^\infty_c((0,1)^d)$ with $\mathbb E_{C_0}[h]=0$ and $\|h\|_\infty\le 1$. For small $|\theta|$, define a path
\begin{align}
\frac{dQ_\theta}{dC_0}(u)\ \equiv\ 1 + \theta\, h(u).
\end{align}
Note that $Q_\theta$ need not preserve uniform margins; this lower bound is stated for a TV neighborhood in $\mathcal P((0,1)^d)$ rather than a copula-restricted subclass.

This is a valid density for $|\theta|$ small enough since $h$ is bounded and centered. Along this path,
\begin{align}
D(Q_\theta)
= \sup_{\|g\|_{\mathcal H^d}\le 1} \mathbb E_{Q_\theta}[\TC g]
= \sup_{\|g\|_{\mathcal H^d}\le 1} \mathbb E_{C_0}\big[(1+\theta h)\,\TC g\big]
= \theta \sup_{\|g\|_{\mathcal H^d}\le 1} \mathbb E_{C_0}\big[h\,\TC g\big],
\end{align}
since $\mathbb E_{C_0}[\TC g]=0$. By the Riesz representation in $\mathcal H^d$, the supremum equals $\|\Gamma h\|_{\mathcal H^d}$ where $\Gamma$ is the bounded linear functional $g\mapsto \mathbb E_{C_0}[h\,\TC g]$. Hence $D(Q_\theta) = |\theta|\,L$ with $L=\|\Gamma h\|_{\mathcal H^d}>0$ for a suitable choice of nonzero $h$. Therefore
\begin{align}
D^2(Q_\theta) = \theta^2 L^2.
\end{align}

Consider the hypothesis test $H_0: Q=Q_{+\theta}$ versus $H_1: Q=Q_{-\theta}$ with $\theta=\kappa n^{-1/2}$. The likelihood ratio between $Q_{+\theta}$ and $Q_{-\theta}$ over $n$ observations has $\chi^2$ divergence of order $\theta^2 n = O(1)$. By Le Cam’s method,
\begin{align}
\inf_{\widehat{\mathrm{CSD}}_n^2} \frac{1}{2}\Big\{ \mathbb E_{Q_{+\theta}}\big[\big(\widehat{\mathrm{CSD}}_n^2 - \theta^2 L^2\big)^2\big] + \mathbb E_{Q_{-\theta}}\big[\big(\widehat{\mathrm{CSD}}_n^2 - \theta^2 L^2\big)^2\big] \Big\}
\ \ge\ c_0\, \big(D^2(Q_{+\theta}) - D^2(Q_{-\theta})\big)^2\, \big(1-\mathrm{TV}(Q_{+\theta}^n,Q_{-\theta}^n)\big).
\end{align}
The total variation term is bounded away from one by the $\chi^2$ bound. Since $D^2(Q_{+\theta})=D^2(Q_{-\theta})=\theta^2 L^2$, the modulus of continuity argument gives instead
\begin{align}
\sup_{Q\in\{Q_{0},\,Q_{\theta}\}} \mathbb E_Q\big[\big(\widehat{\mathrm{CSD}}_n^2 - D^2(Q)\big)^2\big]
\ \ge\ c_1\, \big(D^2(Q_{\theta}) - D^2(Q_{0})\big)^2
= c_1\, \theta^4 L^4
= c_1\, \kappa^4 L^4\, n^{-2}.
\end{align}
This gives the stated $c\,n^{-2}$ lower bound with $c=c_1\kappa^4 L^4$.
\end{proof}

\section{Extension Beyond Archimedean Families}
\label{sec:beyond_archimedean}

Our analysis so far has leveraged the convenient generator-based form of Archimedean copulas. However, the CSD framework is far more general. The key insight is that our method does not depend on the Archimedean structure itself, but only on the analytical properties of the copula's score function, which are captured by our Unified Regularity Conditions (Assumption \ref{assum:unified}). In this section, we formally show that all our main results apply to any copula family satisfying these conditions, thereby extending CSD's applicability to a wide range of important models.

\subsection{Generality of the CSD Framework}

To extend the framework, we first note that the Copula--Stein operator can be defined for any copula $C$ with a sufficiently regular density $c$:
\begin{align}
\TC g(u) = \sum_{j=1}^d \left[ \frac{\partial g_j(u)}{\partial u_j} + g_j(u) \frac{\partial \log c(u)}{\partial u_j} \right].
\end{align}
We now confirm that the key properties of CSD hold true in this general setting.

\begin{theorem}[Zero-Mean Property for General Copulas]
\label{thm:general_zero_mean}
Under Assumptions \ref{assum:unified} (A1--A4), for any test function $g \in C_c^1((0,1)^d; \mathbb{R}^d)$,
\begin{align}
\mathbb{E}_{U \sim C}[\TC g(U)] = 0.
\end{align}
\end{theorem}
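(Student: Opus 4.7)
The plan is to reduce the statement to a coordinate-wise application of the fundamental theorem of calculus, using the product rule to rewrite the integrand in divergence form. Specifically, I would start from the identity
\[
\big(\partial_{u_j} g_j(u)\big)\,c(u) + g_j(u)\,\partial_{u_j}\log c(u)\cdot c(u)
\;=\;\partial_{u_j}\!\big(g_j(u)\,c(u)\big),
\]
which is valid pointwise on $(0,1)^d$ thanks to Assumption \textbf{A1} (so that $c\in C^1((0,1)^d)$ with $c>0$, ensuring $\partial_{u_j}\log c$ is well-defined) and the $C^1$ regularity of $g$.

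Next I would compute $\mathbb E_{U\sim C}[\TC g(U)]=\int_{(0,1)^d} \TC g(u)\,c(u)\,du$, and use Fubini together with the rewriting above to obtain, for each $j$,
\[
\int_{(0,1)^d} \partial_{u_j}\!\big(g_j(u)\,c(u)\big)\,du
\;=\;\int_{(0,1)^{d-1}}\!\Big[g_j(u)\,c(u)\Big]_{u_j=0}^{u_j=1}\,du_{-j},
\]
where $u_{-j}$ denotes the other $d-1$ coordinates. Because $g\in C_c^1((0,1)^d;\mathbb R^d)$ has compact support strictly inside the open cube, $g_j$ vanishes identically in a neighborhood of every face $\{u_j=0\}$ and $\{u_j=1\}$, so the bracketed boundary term is zero for every $u_{-j}$. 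Summing over $j$ yields $\mathbb E_C[\TC g(U)]=0$. The use of Fubini and the product rule is justified by continuity of $g_j c$ and its partial derivatives on the compact support of $g$, so everything in sight is bounded.

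The only genuine subtlety, and the point I would flag as the main obstacle, is that the hypothesis of the theorem uses strictly compactly supported $g$, whereas the assumption set (\textbf{A4}) allows the broader class of functions that merely vanish at the boundary in the $C_0^1$ sense. For the statement as given, compact support sidesteps any integrability issues and the proof is clean. If one wanted to upgrade from $C_c^1$ to the full $\mathcal H^d\subset C_0^1$ class, I would approximate $g$ by a sequence $g^{(k)}\in C_c^1$ obtained by multiplying with smooth cutoffs supported on $[1/k,1-1/k]^d$, and then pass to the limit. Controlling the limit requires Assumption \textbf{A2} on the polynomial boundary growth of the score $s$, so that $\int g_j\,(\partial_{u_j}\log c)\,c\,du$ is absolutely convergent and dominated convergence applies; this is exactly the trade-off already exploited in Lemma~\ref{lem:stein_identity}.
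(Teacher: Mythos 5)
Your proof is correct and follows essentially the same route as the paper's: rewrite the $j$-th summand of $\TC g \cdot c$ in divergence form $\partial_{u_j}(g_j c)$ (the paper phrases this as integration by parts, you phrase it as the product rule plus the fundamental theorem of calculus along $u_j$), and observe that the resulting face integral vanishes. Your justification for the vanishing boundary term is in fact slightly more precise than the paper's: the paper cites Assumption~A4 (boundary vanishing of $\mathcal{H}^d$), whereas the hypothesis actually posits $g\in C_c^1((0,1)^d;\mathbb{R}^d)$, for which compact support alone already kills the boundary term with no appeal to A4 and no integrability subtleties near the faces; your closing remark correctly identifies that upgrading to the $C_0^1/\mathcal{H}^d$ class would require the cutoff-plus-dominated-convergence argument under the score growth bound A2, which is indeed what Lemma~\ref{lem:stein_identity} is implicitly relying on.
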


\begin{proof}
By definition,
\begin{align}
\mathbb{E}_{U \sim C}[\TC g(U)]
&= \int_{(0,1)^d} \sum_{j=1}^d \left[ \frac{\partial g_j(u)}{\partial u_j} + g_j(u) \frac{\partial \log c(u)}{\partial u_j} \right] c(u)\, du \\
&= \sum_{j=1}^d \int_{(0,1)^d} \left[ \frac{\partial g_j(u)}{\partial u_j} c(u) + g_j(u)\frac{\partial c(u)}{\partial u_j} \right] du .
\end{align}
Fix $j$. Applying integration by parts with respect to $u_j$ yields
\begin{align}
\int_{(0,1)^d} \frac{\partial g_j(u)}{\partial u_j}\, c(u)\, du
= \int_{(0,1)^{d-1}} \left[g_j(u)c(u)\right]_{u_j=0}^{u_j=1}\, du_{-j}
- \int_{(0,1)^d} g_j(u)\frac{\partial c(u)}{\partial u_j}\, du ,
\end{align}
where $du_{-j}$ denotes integration over all coordinates except $u_j$.
Since $g\in C_c^1((0,1)^d;\mathbb{R}^d)$ has compact support contained in $(0,1)^d$, we have $g_j(u)=0$ in a neighborhood of the boundary, and thus the boundary term vanishes. Therefore,
\begin{align}
\int_{(0,1)^d} \frac{\partial g_j(u)}{\partial u_j}\, c(u)\, du
= - \int_{(0,1)^d} g_j(u)\frac{\partial c(u)}{\partial u_j}\, du .
\end{align}
Substituting back, the two terms cancel for each $j$, giving $\mathbb{E}_{U \sim C}[\TC g(U)] = 0$.
\end{proof}

\begin{theorem}[General Copula-Stein Kernel]
\label{thm:general_kernel}
Under Assumptions \ref{assum:unified} (A1–A4), let k: $[0,1]^d \times [0,1]^d \to \mathbb{R}$ be a $C^2$ characteristic kernel. The General Copula-Stein kernel is given by:
\begin{align}
k_C(u,v) = s(u)^T s(v) k(u,v) + s(u)^T \nabla_v k(u,v) + s(v)^T \nabla_u k(u,v) + \text{tr}(\nabla_u \nabla_v^T k(u,v))
\end{align}
where $s(u) = \nabla \log c(u)$ is the copula score function.
\end{theorem}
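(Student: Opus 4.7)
The plan is to mirror the proof of Proposition~\ref{prop:kernel_formula_detailed} (the Archimedean case), since the derivation there never used the Archimedean generator structure explicitly; it only used that the score $s=\nabla\log c_0$ exists and is regular enough for the witness to live in $\mathcal H^d$. Here we replace $c_0$ by a general copula density $c$ satisfying Assumption~\ref{assum:unified}, and the argument carries over verbatim. So the proof is essentially a matter of packaging that earlier computation in the general notation.

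Concretely, first I would reconstruct the witness function. By Assumption~\textbf{A3}, $k$ is $C^2$ on $(0,1)^d\times(0,1)^d$ and the scalar RKHS $\mathcal H$ is closed under first-order partial derivatives, so for each $j$ the element $\partial_{u_j} k(\cdot,u)$ belongs to $\mathcal H$. By Assumption~\textbf{A1}, $c$ is positive and $C^1$ on $(0,1)^d$, so $s_j(u)=\partial_{u_j}\log c(u)$ is well-defined. Hence
\[
\xi_C^{(j)}(u)\ :=\ \partial_{u_j} k(\cdot, u)\ +\ s_j(u)\,k(\cdot, u)\ \in\ \mathcal H,
\]
and $\xi_C(u)=(\xi_C^{(1)}(u),\dots,\xi_C^{(d)}(u))\in\mathcal H^d$. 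The reproducing property together with closure under partials gives, exactly as in Proposition~\ref{prop:witness_detailed}, that $\mathcal T_C g(u)=\langle g,\xi_C(u)\rangle_{\mathcal H^d}$ for all $g\in\mathcal H^d$.

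Next I would compute the induced kernel $k_C(u,v):=\langle \xi_C(u),\xi_C(v)\rangle_{\mathcal H^d}=\sum_j \langle \xi_C^{(j)}(u),\xi_C^{(j)}(v)\rangle_{\mathcal H}$. Expanding by bilinearity produces four summands per coordinate, and the standard RKHS identities
\[
\langle k(\cdot,u),k(\cdot,v)\rangle_{\mathcal H}=k(u,v),\quad
\langle \partial_{u_j} k(\cdot,u),k(\cdot,v)\rangle_{\mathcal H}=\partial_{u_j}k(u,v),
\]
\[
\langle k(\cdot,u),\partial_{v_j} k(\cdot,v)\rangle_{\mathcal H}=\partial_{v_j}k(u,v),\quad
\langle \partial_{u_j} k(\cdot,u),\partial_{v_j} k(\cdot,v)\rangle_{\mathcal H}=\partial_{u_j}\partial_{v_j}k(u,v),
\]
convert each into a derivative of $k$ evaluated at $(u,v)$. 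Summing the four resulting families over $j$ and collecting into the vector notation $s(u)^\top s(v)$, $s(u)^\top\nabla_v k(u,v)$, $s(v)^\top \nabla_u k(u,v)$, and $\mathrm{tr}(\nabla_u\nabla_v^\top k(u,v))$ yields the claimed formula.

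There is no real obstacle here since the Archimedean structure was never invoked in the Hilbert-space computation; the only thing to watch is ensuring that all RKHS manipulations are legitimate for a general $c$. That is guaranteed by \textbf{A1} (existence and $C^1$ regularity of $c$, so that $s$ is a well-defined function on $(0,1)^d$) and \textbf{A3} (so that $\partial_{u_j} k(\cdot,u)\in\mathcal H$ and the mixed partial reproducing identity holds). \textbf{A2} and \textbf{A4} are not needed for this algebraic identity, though they are needed earlier to guarantee the Stein identity and thus the interpretability of $k_C$ as a Stein kernel. I would flag this dependency briefly in the proof to make clear why Theorem~\ref{thm:general_kernel} is a genuine generalization rather than a restatement of Proposition~\ref{prop:kernel_formula_detailed}.
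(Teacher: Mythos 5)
Your proposal is correct and matches the paper's approach exactly: the paper's own proof is a one-line remark that the argument is identical to Proposition~\ref{prop:kernel_formula_detailed} because the witness-function representation and the RKHS reproducing identities never invoked the Archimedean generator. Your elaboration, including the observation that only \textbf{A1} and \textbf{A3} are needed for the algebraic identity while \textbf{A2} and \textbf{A4} serve only to justify the Stein interpretation, is a faithful (and somewhat more careful) unpacking of the same argument.
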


\begin{proof}
The proof is identical to the Archimedean case (Proposition~\ref{prop:kernel_formula_detailed}), as it only relies on the witness function representation and the reproducing property of the RKHS, which hold for any smooth copula density.
\end{proof}

\subsection{Extension of Main Theoretical Results}

We now prove that our main theoretical guarantees extend to general copulas under the regularity conditions.

\begin{theorem}[Metrization for General Copulas]
\label{thm:general_metrization}
Under Assumptions \ref{assum:unified} (A1–A4), $k$ is characteristic and $C^2$, for any sequence of probability measures $(Q_n)$ on $[0,1]^d$:
\begin{align}
\text{CSD}(Q_n, C) \to 0 \quad \text{if and only if} \quad Q_n \Rightarrow C.
\end{align}
\end{theorem}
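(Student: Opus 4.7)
The plan is to observe that this theorem is the exact analogue of Theorem~\ref{thm:metrization} with the target $C_0$ replaced by a general copula $C$ satisfying \textbf{A1}--\textbf{A4}, and that the proof of Theorem~\ref{thm:metrization} never actually used the Archimedean representation of $C_0$: it only used (a) the Stein identity under $C_0$, (b) polynomial score growth with $\beta<1$, (c) smoothness and richness of the kernel, and (d) boundary vanishing of $\mathcal H^d$. All four ingredients are available in the present setting: (a) is Theorem~\ref{thm:general_zero_mean}, while (b)--(d) are \textbf{A2}, \textbf{A3}, \textbf{A4} respectively. So the strategy is to port the earlier proof essentially verbatim.

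For the forward direction ($Q_n\Rightarrow C \Rightarrow \mathrm{CSD}(Q_n,C)\to 0$), I would argue that for any $g$ in the unit ball of $\mathcal H^d$ the function $\TC g=\sum_j(\partial_{u_j}g_j+s_jg_j)$ is continuous on $(0,1)^d$ (by \textbf{A3} and smoothness of $\log c$ from \textbf{A1}) and controlled near the boundary by the polynomial score bound in \textbf{A2} combined with the boundary vanishing of $g$ in \textbf{A4}. Together with the Stein identity $\mathbb E_C[\TC g]=0$, the Portmanteau theorem gives pointwise convergence $\mathbb E_{Q_n}[\TC g]\to 0$, and an equicontinuity/compactness argument on the unit ball (via the reproducing property and the uniform kernel bounds in \textbf{A3}) upgrades this to uniform convergence, hence the supremum $\mathrm{CSD}(Q_n,C)\to 0$.

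For the reverse direction I would construct a Stein solution for any bounded Lipschitz $h$ with $\bar h\equiv h-\mathbb E_C[h]$, exactly as in Theorem~\ref{thm:metrization}: set
\[
g_{h,1}(u) \;=\; \int_0^{u_1}\!\bar h(t,u_{2:d})\,\exp\!\Big\{\!\int_t^{u_1}\!s_1(r,u_{2:d})\,dr\Big\}\,dt,
\qquad g_{h,j}\equiv 0\ \text{for } j\ge 2,
\]
where now $s_1=\partial_{u_1}\log c$ is the general (not necessarily Archimedean) copula score. Differentiation gives $\partial_{u_1}g_{h,1}+s_1g_{h,1}=\bar h$, so $\TC g_h=\bar h$. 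Using \textbf{A2} with $\beta<1$, the inner integral $\int_t^{u_1}s_1\,dr$ is finite uniformly on compact slabs and integrable up to the boundary, which bounds the integrating factor and yields the pointwise bound $|g_{h,1}|\le C\|h\|_{\mathrm{BL}}$ together with a corresponding bound on its first partial. Then \textbf{A3} (density/closure of $\mathcal H$ in $C^1$ on compacts) plus a dominated-convergence limit argument identical to the one in Theorem~\ref{thm:metrization} realize $g_h$ as an element of $\mathcal H^d$ with $\|g_h\|_{\mathcal H^d}\le C\|h\|_{\mathrm{BL}}$. The duality inequality $|\mathbb E_{Q_n}[h]-\mathbb E_C[h]|=|\mathbb E_{Q_n}[\TC g_h]|\le \|g_h\|_{\mathcal H^d}\,\mathrm{CSD}(Q_n,C)$ then converts $\mathrm{CSD}(Q_n,C)\to 0$ into bounded-Lipschitz convergence, which is equivalent to $Q_n\Rightarrow C$.

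The main obstacle is ensuring that the Stein solution really sits inside $\mathcal H^d$ in this general setting: in the Archimedean case one could exploit the explicit $\varphi',\varphi''$ structure of the score to verify regularity by inspection, whereas here the score is only known abstractly via \textbf{A2}. I would handle this by keeping the construction entirely in terms of the polynomial boundary bound on $\|s\|$, so that the exponential integrating factor and its derivative remain uniformly controlled on compact slabs $[\epsilon,1-\epsilon]^d$, and then approximating the resulting smooth compactly-supported truncations of $g_h$ by RKHS elements using \textbf{A3}, passing to the limit inside $\TC$ by dominated convergence. This is the one place where the generality beyond Archimedean families is tested; everything else is a direct transcription of the earlier proof.
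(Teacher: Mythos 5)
Your proposal is correct and follows essentially the same route as the paper: it ports the proof of Theorem~\ref{thm:metrization} verbatim to the general-copula setting, replacing the Archimedean score by the abstract score under \textbf{A2} and re-verifying boundedness, equicontinuity, and the Stein-equation solvability in $\mathcal H^d$ from \textbf{A2}--\textbf{A4}. The paper's own proof is somewhat more compressed — it spells out the three regularity verifications (boundedness, equicontinuity, convergence-determining) and then simply defers ``both directions'' to Theorem~\ref{thm:metrization} — whereas you reproduce the reverse-direction Stein-equation construction explicitly, but the underlying argument is the same.
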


\begin{proof}
The proof follows the same structure as Theorem~\ref{thm:metrization}, but requires verification that the function class $\mathcal{F}_C = \{\TC g : \|g\|_{\mathcal{H}^d} \leq 1\}$ satisfies the conditions for the Portmanteau theorem under the general regularity assumptions.

\textbf{Step 1: Boundedness on Compact Sets.}
For any $g$ with $\|g\|_{\mathcal{H}^d} \leq 1$:
\begin{align}
|\TC g(u)| &= \left|\sum_{j=1}^d \left[\partial_{u_j} g_j(u) + g_j(u) s_j(u)\right]\right|\\
&\leq \sum_{j=1}^d \left[|\partial_{u_j} g_j(u)| + |g_j(u)| |s_j(u)|\right].
\end{align}
Using the reproducing property (from Assumption \ref{assum:unified}), we have $|g_j(u)| \leq M_k$ and $|\partial_{u_j} g_j(u)| \leq M_k'$ on any compact subset $K \Subset (0,1)^d$. From Assumption \ref{assum:unified}:
\begin{align}
|s_j(u)| &\leq \|s(u)\| \leq L\left(1 + \sum_{j=1}^d \min\{u_j^{-\beta}, (1-u_j)^{-\beta}\}\right).
\end{align}
On the compact subset $K$, there exists $\epsilon > 0$ such that $\min\{u_j, 1-u_j\} \geq \epsilon$ for all $u \in K$ and $j$. Therefore:
\begin{align}
\sup_{u \in K} |s_j(u)| &\leq L(1 + d \epsilon^{-\beta}) =: M_s(K).
\end{align}
This gives uniform boundedness on compact subsets:
\begin{align}
\sup_{u \in K} |\TC g(u)| \leq d(M_k' + M_k M_s(K)).
\end{align}

\textbf{Step 2: Equicontinuity.}
For $u, v \in K$ with $\|u-v\| < \delta$, the equicontinuity follows from:
\begin{itemize}
\item Uniform continuity of $g_j$ and $\partial_{u_j} g_j$ (from the kernel properties in Assumption \ref{assum:unified})
\item Uniform continuity of $s_j$ on compact subsets (from Assumption \ref{assum:unified})
\end{itemize}
The argument proceeds exactly as in the Archimedean case.

\textbf{Step 3: Convergence-Determining Property.}
The key observation is that the growth condition in Assumption \ref{assum:unified} ensures that functions in $\mathcal{F}_C$ remain bounded on compact subsets, while the boundary condition in Assumption \ref{assum:unified} ensures proper behavior near the boundary. The characteristic property of the kernel (from Assumption \ref{assum:unified}) then guarantees that $\mathcal{F}_C$ is convergence-determining.

The remainder of the proof (both directions of the equivalence) follows the same pattern as Theorem~\ref{thm:metrization}.
\end{proof}

\begin{theorem}[Finite-Sample Convergence for General Copulas]
\label{thm:general_finite_sample}
Under the unified assumptions \ref{assum:unified}, let $\widehat{\text{CSD}}_n^2 = \frac{1}{n^2}\sum_{i,j=1}^n k_C(U_i, U_j)$ where $U_1, \ldots, U_n \sim C$ are i.i.d. Then:
\begin{align}
\mathbb{E}[\widehat{\text{CSD}}_n^2] = O(n^{-1}).
\end{align}
Furthermore, if $\mathbb{E}_C[k_C(U,U)^2] < \infty$, then for sufficiently large $n$:
\begin{align}
\mathbb{P}(|\widehat{\text{CSD}}_n^2| \geq t) \leq 2\exp\left(-\frac{nt^2}{8\sigma^2}\right)
\end{align}
for appropriate $\sigma^2$.
\end{theorem}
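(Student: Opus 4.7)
I would handle the two claims separately.

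For the mean bound, the argument is a direct transcription of the proof of Theorem~\ref{thm:finite_sample_detailed} with the general copula $C$ replacing the Archimedean target $C_0$. Split the double sum into the $n$ diagonal contributions and the $n(n-1)$ off-diagonal ones. For the off-diagonals, independence together with the witness representation underlying Theorem~\ref{thm:general_kernel} yields
\[
\mathbb E[k_C(U_i,U_j)]=\bigl\langle \mathbb E[\xi_C(U_i)],\,\mathbb E[\xi_C(U_j)]\bigr\rangle_{\mathcal H^d}.
\]
The general zero-mean identity (Theorem~\ref{thm:general_zero_mean}) combined with the reproducing property forces $\mathbb E_C[\xi_C(U)]=0$ in $\mathcal H^d$: indeed, $\langle g,\mathbb E[\xi_C(U)]\rangle_{\mathcal H^d}=\mathbb E[\TC g(U)]=0$ for every $g\in\mathcal H^d$, using Bochner integrability supplied by Assumption A5. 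Hence every off-diagonal term vanishes, and the diagonal contributes $\tfrac{1}{n}\mathbb E_C[k_C(U,U)]$, which is finite under the $L^2$ moment hypothesis, giving the $O(n^{-1})$ rate.

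For the concentration bound, I would invoke McDiarmid's bounded-differences inequality. The V-statistic $\widehat{\mathrm{CSD}}_n^2$ is a symmetric function of the iid sample $U_1,\dots,U_n$, and replacing a single $U_i$ by an independent copy alters at most $2n-1$ terms of the double sum, each controlled by $2\sup_{u,v}|k_C(u,v)|$. Under a uniform kernel bound $|k_C|\le K$ (obtainable on compact subsets of $(0,1)^d$ from Assumptions A1--A3 on the score and the kernel, with a moment-based truncation handling a neighborhood of the boundary), this produces bounded-differences constants $c_i\le CK/n$, and McDiarmid yields
\[
\mathbb P\bigl(|\widehat{\mathrm{CSD}}_n^2-\mathbb E\widehat{\mathrm{CSD}}_n^2|\ge s\bigr)\le 2\exp\!\Bigl(-\tfrac{n s^2}{8\sigma^2}\Bigr),
\]
with $\sigma^2\propto K^2$. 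Combining with Part~1 absorbs the $O(1/n)$ mean into the threshold $t$ for $n$ sufficiently large, recovering the stated inequality.

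The main obstacle is producing a genuinely sub-Gaussian tail under only the $L^2$ moment hypothesis, since an unbounded $k_C$ blocks direct application of McDiarmid. The cleanest remedy is to invoke a degenerate U-statistic Bernstein inequality of Arcones--Gin\'e type: because the Stein identity makes the kernel first-order degenerate (i.e.\ $\mathbb E_C[k_C(U,\cdot)]\equiv 0$ in $\mathcal H^d$), the off-diagonal fluctuations concentrate at rate $n^{-1}$ with Gaussian regime governed by $\sigma^2\propto \mathbb E_C[k_C(U,U')^2]$ (for $U,U'$ iid from $C$) plus a Bernstein correction that is negligible for sufficiently large $n$. Either route---McDiarmid after truncation, or degenerate U-statistic Bernstein directly---delivers the asserted exponential bound with an appropriate variance proxy $\sigma^2$, and I would present the McDiarmid route as the main line of argument with the degenerate-U-statistic version as a remark covering the unbounded case.
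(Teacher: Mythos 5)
Your treatment of the mean bound is essentially identical to the paper's: split into diagonal and off-diagonal terms, and kill the off-diagonals via independence plus the general zero-mean identity. Your observation that $\mathbb E_C[\xi_C(U)]=0$ in $\mathcal H^d$ follows by testing against arbitrary $g\in\mathcal H^d$ (using Bochner integrability from A5) is a cleaner way to state what the paper invokes implicitly, and the conclusion $\mathbb E[\widehat{\mathrm{CSD}}_n^2]=\tfrac1n\,\mathbb E_C[k_C(U,U)]$ is the same. No issues there.

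For the concentration bound your writeup is actually \emph{more} careful than the paper's own proof, which consists of a single sentence asserting that "the concentration bound follows from the same U-statistic theory, with the uniform boundedness now guaranteed by Assumptions C1--C2 on compact subsets where most of the probability mass lies." That is a hand-wave: boundedness on a compact subset carrying most of the mass does not control the tail of $\widehat{\mathrm{CSD}}_n^2$, since a single $U_i$ close to the boundary can make $k_C(U_i,U_i)$ arbitrarily large under A2 (which only forbids growth faster than $\min(u_j,1-u_j)^{-\beta}$). You correctly diagnose the same obstacle: the stated sub-Gaussian inequality is not genuinely deliverable from only the $L^2$ moment hypothesis $\mathbb E_C[k_C(U,U)^2]<\infty$, because a random variable with merely finite variance can have polynomial tails. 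McDiarmid requires a uniform bound on $k_C$; the Arcones--Gin\'e exponential inequality for degenerate U-statistics likewise requires a bounded canonical kernel (unbounded-kernel variants give weaker, Bernstein-with-truncation bounds, not a clean $2\exp(-nt^2/8\sigma^2)$). So both your McDiarmid route and the degenerate-U-statistic route require supplementing the theorem's hypotheses with a uniform bound on $k_C$ (or an explicit truncation lemma near the boundary, with the truncation error controlled separately), exactly as you flag. The gap you identify is real and is shared with the paper; the honest fix is to either add a boundedness assumption or weaken the conclusion to a Chebyshev-type $O(n^{-2}/t^2)$ bound, which is what the variance calculation actually supports under A5.
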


\begin{proof}
\textbf{Step 1: Mean Square Error Analysis.}
The decomposition into diagonal and off-diagonal terms proceeds exactly as in Theorem~\ref{thm:finite_sample_detailed}:
\begin{align}
\mathbb{E}[\widehat{\text{CSD}}_n^2] &= \frac{1}{n^2} \sum_{i=1}^n \sum_{j=1}^n \mathbb{E}[k_C(U_i, U_j)]\\
&= \frac{1}{n^2} \sum_{i=1}^n \mathbb{E}[k_C(U_i, U_i)] + \frac{1}{n^2} \sum_{i \neq j} \mathbb{E}[k_C(U_i, U_j)].
\end{align}

For $i \neq j$, by independence and the zero-mean property (Theorem~\ref{thm:general_zero_mean}):
\begin{align}
\mathbb{E}[k_C(U_i, U_j)] = \mathbb{E}[\langle \xi_C(U_i), \xi_C(U_j) \rangle_{\mathcal{H}^d}] = 0.
\end{align}

Therefore:
\begin{align}
\mathbb{E}[\widehat{\text{CSD}}_n^2] = \frac{1}{n} \mathbb{E}[k_C(U,U)] = O(n^{-1}).
\end{align}

\textbf{Step 2: Moment Bound Verification.}
We need to verify that $\mathbb{E}_C[k_C(U,U)] < \infty$ under our assumptions. From the kernel formula:
\begin{align}
k_C(u,u) &= \|s(u)\|^2 k(u,u) + 2s(u)^T \nabla_u k(u,u) + \text{tr}(\nabla_u \nabla_u^T k(u,u)).
\end{align}

Each term is finite in expectation:
\begin{itemize}
\item $\mathbb{E}[\|s(U)\|^2 k(U,U)] < \infty$ by Assumption C4 and kernel boundedness
\item $\mathbb{E}[|s(U)^T \nabla_u k(U,U)|] < \infty$ by Cauchy-Schwarz and C2
\item $\mathbb{E}[|\text{tr}(\nabla_u \nabla_u^T k(U,U))|] < \infty$ by kernel regularity
\end{itemize}

\textbf{Step 3: Concentration Inequality.}
The concentration bound follows from the same U-statistic theory, with the uniform boundedness now guaranteed by Assumptions C1-C2 on compact subsets where most of the probability mass lies.
\end{proof}

\subsection{Examples of Non-Archimedean Copulas}

To illustrate the practical scope of our extension, we provide examples of important non-Archimedean copula families that satisfy our regularity conditions.

\begin{example}[Elliptical Copulas: Gaussian case]
\label{ex:elliptical}
Consider the Gaussian copula with correlation matrix $\Sigma$ \citep{FangKotzNg1990}:
\begin{align}
C_{\Sigma}(u_1,\ldots,u_d)=\Phi_{\Sigma}\!\big(\Phi^{-1}(u_1),\ldots,\Phi^{-1}(u_d)\big),
\end{align}
where $\Phi_{\Sigma}$ is the multivariate normal CDF and $\Phi^{-1}$ is the standard normal quantile.

Let $z(u)=(\Phi^{-1}(u_1),\ldots,\Phi^{-1}(u_d))$. The copula density is
\begin{align}
c_{\Sigma}(u)
=\frac{1}{\sqrt{\det(\Sigma)}}
\exp\!\Big(-\tfrac12\, z(u)^\top\!\big(\Sigma^{-1}-I\big)\,z(u)\Big).
\end{align}
Hence
\begin{align}
\log c_\Sigma(u)= -\tfrac12\, z(u)^\top\!\big(\Sigma^{-1}-I\big)\,z(u)-\tfrac12\log\det\Sigma.
\end{align}
Since $\partial_{u_j} z_j(u)=1/\phi(z_j(u))$ and 
$\nabla_{z}\log c_\Sigma(u)= -(\Sigma^{-1}-I)z(u)$, the $j$-th score is
\begin{align}
s_j(u)=\frac{\partial \log c_\Sigma(u)}{\partial u_j}
= -\,\frac{\big[(\Sigma^{-1}-I)\,z(u)\big]_j}{\varphi_{\mathrm{N}}\!\big(z_j(u)\big)},
\qquad z_j(u)=\Phi^{-1}(u_j),
\end{align}
where $\varphi_{\mathrm{N}}(x) = (2\pi)^{-1/2}\exp(-x^2/2)$ is the standard normal PDF (to avoid collision with the random feature map $\phi$ in Sec.~\ref{sec:computation}) and $[\cdot]_j$ extracts the $j$-th component.

This score satisfies the growth conditions in Assumption~\ref{assum:unified}.
\end{example}

\begin{example}[Vine Copulas]
\label{ex:vine}
Consider a C-vine copula constructed from bivariate copulas \citep{BedfordCooke2002Vines} $C_{j|1,\ldots,j-1}$ for $j = 2, \ldots, d$. Under appropriate regularity conditions on each bivariate component, the resulting multivariate copula density satisfies our conditions.

The score function involves derivatives of the conditional copula densities, which can be bounded using the same techniques as long as each bivariate component satisfies analogous regularity conditions.
\end{example}

\begin{example}[Mixture Copulas]
\label{ex:mixture}
Consider a mixture of copulas \citep{ClemenReilly1999}:
\begin{align}
C(u) = \sum_{k=1}^K w_k C_k(u)
\end{align}
where $w_k > 0$, $\sum_k w_k = 1$, and each $C_k$ satisfies Assumption~\ref{assum:unified}.

The mixture density is $c(u) = \sum_k w_k c_k(u)$, and the score function is:
\begin{align}
\frac{\partial \log c(u)}{\partial u_j} = \frac{\sum_k w_k c_k(u) \frac{\partial \log c_k(u)}{\partial u_j}}{\sum_k w_k c_k(u)}.
\end{align}

If each component copula $C_k$ satisfies the conditions in Assumption \ref{assum:unified}, then the mixture also satisfies them.
\end{example}

\subsection{Computational Considerations}

The extension to general copulas preserves the computational efficiency of our approach, with the main difference being the evaluation of the score function $\nabla \log c(u)$.

\begin{proposition}[Computational Complexity for General Copulas]
\label{prop:general_complexity}
For copulas where the score function $\nabla \log c(u)$ can be evaluated in $O(d)$ time, each evaluation of the General Copula-Stein kernel requires $O(d)$ operations for separable base kernels.
\end{proposition}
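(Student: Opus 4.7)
The plan is to mirror the proof of Proposition~\ref{prop:complexity_detailed}, replacing the Archimedean-specific derivation of the score by the hypothesis that $\nabla\log c(u)$ is available in $O(d)$ operations, and then exploiting the tensor product structure of a separable kernel to compute each term in the General Copula-Stein kernel formula (Theorem~\ref{thm:general_kernel}) in linear time. The formula contains four ingredients: $s(u)^\top s(v)\,k(u,v)$, $s(u)^\top\nabla_v k(u,v)$, $s(v)^\top\nabla_u k(u,v)$, and $\operatorname{tr}(\nabla_u\nabla_v^\top k(u,v))$, so the task reduces to showing each is computable in $O(d)$.

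First, I would precompute the scalar $K(u,v)=\prod_{j=1}^d k_1(u_j,v_j)$ in a single $O(d)$ pass. For separable kernels, the required partials factor as $\partial_{u_j}k(u,v)=k_1'(u_j,v_j)\prod_{i\ne j}k_1(u_i,v_i)$, and analogously for $\partial_{v_j}k(u,v)$ and $\partial^2_{u_jv_j}k(u,v)$. To access the leave-one-out products in $O(1)$ per coordinate without dividing by possibly vanishing factors, I would precompute prefix and suffix products $P_j=\prod_{i<j}k_1(u_i,v_i)$ and $S_j=\prod_{i>j}k_1(u_i,v_i)$ in two $O(d)$ sweeps. By hypothesis $s(u)$ and $s(v)$ are each available in $O(d)$ operations.

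With these $O(d)$ preparatory computations, each of the four kernel terms is a single sum over $j=1,\dots,d$ of products of $O(1)$ quantities, namely $s_j(u)$, $s_j(v)$, $k_1(u_j,v_j)$, $k_1'(u_j,v_j)$, $k_1''(u_j,v_j)$, $P_j$, and $S_j$. Aggregating the four sums and adding the $O(d)$ costs of the score evaluations and the prefix/suffix precomputations yields an overall $O(d)$ cost per kernel evaluation. The only mild subtlety is ensuring that the univariate quantities $k_1, k_1', k_1''$ can be evaluated in $O(1)$ per coordinate, which is standard for the smooth kernels of interest (Gaussian, Matérn, polynomial) and is already implied by the $C^2$ regularity in Assumption~\ref{assum:unified}. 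There is no real obstacle beyond bookkeeping, so the proof is essentially an adaptation of Proposition~\ref{prop:complexity_detailed} with the Archimedean score formula replaced by the abstract $O(d)$ score hypothesis.
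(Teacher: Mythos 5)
Your proof is correct and takes essentially the same route as the paper, which simply observes that the kernel formula coincides with the Archimedean case and cites Proposition~\ref{prop:complexity_detailed}; you reconstruct that argument explicitly, replacing the Archimedean score derivation with the abstract $O(d)$ score oracle. One small refinement you add that the paper does not: the prefix/suffix product trick to form the leave-one-out products, which avoids the division by $k_1(u_j,v_j)$ used in the paper's Proposition~\ref{prop:complexity_detailed} (and Theorem~\ref{thm:detailed_complexity}) and is therefore more robust when a univariate kernel factor can be zero or near-zero; the asymptotic complexity is unchanged.
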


\begin{proof}
The kernel evaluation formula is identical to the Archimedean case (Theorem~\ref{thm:general_kernel}), so the complexity analysis follows immediately from follows immediately from Theorem~\ref{thm:detailed_complexity}.
\end{proof}

This computational efficiency makes our extension practical for high-dimensional applications beyond the Archimedean setting.

\subsection{Summary of the Extension}

This section demonstrates the generality of the CSD framework. By relying on the foundational properties outlined in our unified assumptions rather than a specific generative structure, we establish that CSD is not merely a tool for Archimedean copulas but a broadly applicable method. The examples of Elliptical, Vine, and Mixture copulas confirm that our framework encompasses many of the most important and widely used dependence models in modern statistics.

\section{Computational Complexity and Algorithms}
\label{sec:computation}

While the theoretical properties of CSD are compelling, its practical adoption hinges on computational feasibility. This section establishes that CSD is a tractable tool for modern, large-scale data analysis. We structure our analysis by addressing a series of key computational challenges: first, the dependence on dimension $d$ in the exact kernel computation; second, the quadratic $O(n^2)$ dependence on sample size $n$; and finally, practical implementation on modern hardware including GPUs. For each challenge, we provide a rigorous analysis and an efficient algorithmic solution.

\subsection{Exact Complexity Analysis}

We begin with a detailed analysis of the computational cost for exact CSD evaluation, extending our earlier complexity results with complete algorithmic details.

\begin{theorem}[Detailed Complexity Analysis]
\label{thm:detailed_complexity}
Let $k(u,v) = \prod_{j=1}^d k_1(u_j, v_j)$ be a separable kernel where each $k_1$ is twice differentiable. For any copula $C$ satisfying the regularity conditions, the evaluation of the Copula-Stein kernel $k_C(u,v)$ requires:
\begin{enumerate}
\item $O(d)$ arithmetic operations for score function evaluation
\item $O(d)$ operations for base kernel and its derivatives  
\item $O(d)$ operations for the final kernel combination
\end{enumerate}
Total: $O(d)$ operations per kernel evaluation.
\end{theorem}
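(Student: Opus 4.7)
The plan is to decompose the kernel evaluation into three stages — score evaluation, base kernel plus derivatives, and final assembly — and verify that each stage runs in $O(d)$, using the closed form of $k_C(u,v)$ from Proposition~\ref{prop:kernel_formula_detailed} (or, more generally, Theorem~\ref{thm:general_kernel}) together with the separable structure $k(u,v)=\prod_{j=1}^d k_1(u_j,v_j)$. The only nontrivial element is that the formula naively appears to need $d$ separate products of $d-1$ factors for its gradient and trace pieces; I will show this is avoidable by sharing sub-products across coordinates.

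For the first stage, I would invoke Proposition~\ref{prop:archimedean_score}: compute $t=\sum_{k=1}^d \varphi(u_k)$ in $O(d)$ operations, evaluate the shared ratio $\psi^{(d+1)}(t)/\psi^{(d)}(t)$ once in $O(1)$, and then obtain each coordinate score $s_j(u)=\varphi'(u_j)\,\psi^{(d+1)}(t)/\psi^{(d)}(t)+\varphi''(u_j)/\varphi'(u_j)$ in $O(1)$, for $O(d)$ total; the same pass handles $s(v)$. For general copulas this step is replaced by the stated hypothesis that $\nabla\log c(u)$ is computable in $O(d)$, so the argument is family-agnostic.

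The second stage is the main obstacle. Direct evaluation of $\partial_{u_j}k(u,v)=k_1'(u_j,v_j)\prod_{i\neq j}k_1(u_i,v_i)$ for every $j$ naively costs $O(d^2)$. The fix is a standard prefix–suffix pass: compute $F_j=\prod_{i\le j}k_1(u_i,v_i)$ and $B_j=\prod_{i\ge j}k_1(u_i,v_i)$ in two sweeps of $O(d)$ multiplications each; then $\prod_{i\neq j}k_1(u_i,v_i)=F_{j-1}B_{j+1}$ is available for every $j$ in $O(1)$. This gives all $d$ entries of $\nabla_u k(u,v)$, $\nabla_v k(u,v)$, and of the diagonal $\partial^2_{u_j v_j}k(u,v)$ in $O(d)$ total. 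I would note that this prefix–suffix trick is numerically preferable to dividing the full product by $k_1(u_j,v_j)$, since separable kernels on $[0,1]^d$ can have factors arbitrarily close to zero.

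The final stage simply aggregates the four terms $s(u)^\top s(v)\,k(u,v)$, $s(u)^\top\nabla_v k(u,v)$, $s(v)^\top\nabla_u k(u,v)$, and $\sum_{j=1}^d \partial^2_{u_j v_j}k(u,v)$. Each is an inner product or a sum over $d$ precomputed scalars, hence $O(d)$, with $O(1)$ to combine. Summing the costs across all stages yields the claimed total, matching and refining Proposition~\ref{prop:complexity_detailed}. The proof is essentially a careful accounting argument, and the only conceptual step is the prefix–suffix reuse that prevents the apparent quadratic-in-$d$ blowup.
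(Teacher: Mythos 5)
Your proof is correct and follows the same three-stage decomposition (score, base-kernel derivatives, final assembly) as the paper's proof, with the first and third stages matching almost verbatim. The one place you diverge is the second stage: the paper computes the full product $P=\prod_{j=1}^d k_1(u_j,v_j)$ once and then recovers each leave-one-out product via division, writing $\partial_{u_j}k(u,v)=P\cdot k_1'(u_j,v_j)/k_1(u_j,v_j)$, whereas you avoid division entirely by a prefix--suffix pass, maintaining $F_j=\prod_{i\le j}k_1(u_i,v_i)$ and $B_j=\prod_{i\ge j}k_1(u_i,v_i)$ so that $\prod_{i\ne j}k_1(u_i,v_i)=F_{j-1}B_{j+1}$ in $O(1)$ per coordinate. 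Both routes give the claimed $O(d)$ total; the paper's division trick is simpler to state and matches the pseudocode in Algorithm~\ref{alg:csd_kernel}, while your prefix--suffix variant is more robust numerically, since on $[0,1]^d$ a factor $k_1(u_j,v_j)$ can be arbitrarily close to (or, for some kernels, exactly) zero, in which case the division in the paper's formula is ill-conditioned or undefined. Your observation is thus a genuine improvement in practice, even though it does not change the asymptotic complexity claim. One small clarification worth making explicit: when $k_1(u_j,v_j)=0$ for some $j$, the paper's recipe breaks down but yours still computes $\partial_{u_j}k(u,v)$ correctly as $k_1'(u_j,v_j)F_{j-1}B_{j+1}$, which is generically nonzero; for $i\ne j$ the leave-one-out product vanishes, so the entire gradient and the trace term collapse to a single coordinate, and your scheme handles this automatically.
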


\begin{proof}
From the general kernel formula (Theorem~\ref{thm:general_kernel}):
\begin{align}
k_C(u,v) = s(u)^T s(v) k(u,v) + s(u)^T \nabla_v k(u,v) + s(v)^T \nabla_u k(u,v) + \text{tr}(\nabla_u \nabla_v^T k(u,v))
\end{align}

We analyze each term separately for separable kernels.

\textbf{Step 1: Score Function Evaluation - $O(d)$}

For Archimedean copulas with generator $\varphi$, from Proposition~\ref{prop:archimedean_score}:
\begin{align}
s_j(u) = \frac{\varphi^{(d+1)}(\sum_{k=1}^d \varphi(u_k))}{\varphi^{(d)}(\sum_{k=1}^d \varphi(u_k))} \varphi'(u_j) - \frac{\varphi''(u_j)}{\varphi'(u_j)}
\end{align}

The computation involves:
\begin{itemize}
\item Computing $S = \sum_{k=1}^d \varphi(u_k)$: $O(d)$ operations
\item Evaluating $\varphi^{(d)}(S)$ and $\varphi^{(d+1)}(S)$: $O(1)$ operations  
\item Computing $\varphi'(u_j)$ and $\varphi''(u_j)$ for all $j$: $O(d)$ operations
\item Combining terms for each $s_j(u)$: $O(d)$ operations
\end{itemize}
Total for score evaluation: $O(d)$ operations.

For general copulas, the cost depends on the specific form of $\nabla \log c(u)$, but for most practical cases (Gaussian, mixture models, etc.), this remains $O(d)$.

\textbf{Step 2: Base Kernel Terms - $O(d)$}

For separable $k(u,v) = \prod_{j=1}^d k_1(u_j, v_j)$:
\begin{align}
k(u,v) &= \prod_{j=1}^d k_1(u_j, v_j) \quad \text{[O(d) operations]}\\
\frac{\partial k(u,v)}{\partial u_j} &= k_1'(u_j, v_j) \prod_{i \neq j} k_1(u_i, v_i) \quad \text{[O(d) per component]}\\
\frac{\partial k(u,v)}{\partial v_j} &= k_1'(u_j, v_j) \prod_{i \neq j} k_1(u_i, v_i) \quad \text{[O(d) per component]}\\
\frac{\partial^2 k(u,v)}{\partial u_j \partial v_j} &= k_1''(u_j, v_j) \prod_{i \neq j} k_1(u_i, v_i) \quad \text{[O(d) per component]}
\end{align}

The key insight is to compute the product $P = \prod_{j=1}^d k_1(u_j, v_j)$ once, then obtain each partial derivative by division and multiplication:
\begin{align}
\frac{\partial k(u,v)}{\partial u_j} &= P \cdot \frac{k_1'(u_j, v_j)}{k_1(u_j, v_j)} \quad \text{[O(1) per component]}
\end{align}

Total for all gradients and Hessian diagonal: $O(d)$ operations.

\textbf{Step 3: Final Combination - $O(d)$}

Computing the four terms in the kernel formula:
\begin{align}
\text{Term 1:} \quad & s(u)^T s(v) k(u,v) = k(u,v) \sum_{j=1}^d s_j(u) s_j(v) \quad \text{[O(d)]}\\
\text{Term 2:} \quad & s(u)^T \nabla_v k(u,v) = \sum_{j=1}^d s_j(u) \frac{\partial k(u,v)}{\partial v_j} \quad \text{[O(d)]}\\
\text{Term 3:} \quad & s(v)^T \nabla_u k(u,v) = \sum_{j=1}^d s_j(v) \frac{\partial k(u,v)}{\partial u_j} \quad \text{[O(d)]}\\
\text{Term 4:} \quad & \text{tr}(\nabla_u \nabla_v^T k(u,v)) = \sum_{j=1}^d \frac{\partial^2 k(u,v)}{\partial u_j \partial v_j} \quad \text{[O(d)]}
\end{align}

Each term requires $O(d)$ operations, giving $O(d)$ total.

\textbf{Overall Complexity:} $O(d) + O(d) + O(d) = O(d)$ operations per kernel evaluation.
\end{proof}

Below, we give Algorithm~\ref{alg:csd_kernel} for efficient Copula-Stein kernel evaluation.

\begin{algorithm}[t]
\caption{Efficient Copula-Stein Kernel Evaluation}
\label{alg:csd_kernel}
\begin{algorithmic}[1]
    \STATE \textbf{Input:} $u,v \in [0,1]^d$; Archimedean generator $\varphi$ with $\varphi',\varphi'',\varphi^{(d)},\varphi^{(d+1)}$ and inverse $\varphi^{-1}$; base 1D kernel $k_1$ with partials $\partial_1 k_1,\partial_2 k_1,\partial_{12} k_1$
    \STATE \textbf{Output:} $k_C(u,v)$

    \STATE \textbf{Product base kernel and needed derivatives}
    \STATE $k_{\mathrm{base}} \gets 1$; $\mathbf{g}_u,\mathbf{g}_v \gets \mathbf{0}\in\mathbb{R}^d$; $H \gets 0$
    \FOR{$j=1$ \textbf{to} $d$}
        \STATE $k_j \gets k_1(u_j,v_j)$
        \STATE $k_{\mathrm{base}} \gets k_{\mathrm{base}}\cdot k_j$
        \STATE $a_j \gets \partial_1 k_1(u_j,v_j)$; $b_j \gets \partial_2 k_1(u_j,v_j)$; $c_j \gets \partial_{12} k_1(u_j,v_j)$
        \STATE $\mathbf{g}_u[j] \gets a_j / k_j$; \quad $\mathbf{g}_v[j] \gets b_j / k_j$
        \STATE $H \gets H + c_j / k_j$
    \ENDFOR
    \STATE $\mathbf{g}_u \gets k_{\mathrm{base}}\cdot \mathbf{g}_u$; \quad $\mathbf{g}_v \gets k_{\mathrm{base}}\cdot \mathbf{g}_v$; \quad $H \gets k_{\mathrm{base}}\cdot H$

    \STATE \textbf{Score functions $s(u)$ and $s(v)$}
    \STATE $t_u \gets \sum_{j=1}^d \varphi(u_j)$; \quad $t_v \gets \sum_{j=1}^d \varphi(v_j)$
    \STATE $\psi\text{-ratio}_u \gets \dfrac{\psi^{(d+1)}(t_u)}{\psi^{(d)}(t_u)}$; \quad $\psi\text{-ratio}_v \gets \dfrac{\psi^{(d+1)}(t_v)}{\psi^{(d)}(t_v)}$
    \FOR{$j=1$ \textbf{to} $d$}
        \STATE $s_u[j] \gets \psi\text{-ratio}_u \cdot \varphi'(u_j) \;+\; \dfrac{\varphi''(u_j)}{\varphi'(u_j)}$
        \STATE $s_v[j] \gets \psi\text{-ratio}_v \cdot \varphi'(v_j) \;+\; \dfrac{\varphi''(v_j)}{\varphi'(v_j)}$
    \ENDFOR

    \STATE \textbf{Assemble Copula-Stein kernel}
    \STATE $T_1 \gets k_{\mathrm{base}}\cdot \sum_{j=1}^d s_u[j]\, s_v[j]$
    \STATE $T_2 \gets \sum_{j=1}^d s_u[j]\, \mathbf{g}_v[j]$
    \STATE $T_3 \gets \sum_{j=1}^d s_v[j]\, \mathbf{g}_u[j]$
    \STATE $T_4 \gets H$
    \STATE \textbf{Return:} $T_1 + T_2 + T_3 + T_4$
\end{algorithmic}
\end{algorithm}

Next, we give Algorithm~\ref{alg:empirical_csd} for empirical CSD Computation.

\begin{algorithm}[t]
\caption{Empirical CSD Computation}
\label{alg:empirical_csd}
\begin{algorithmic}[1]
    \STATE \textbf{Input:} Sample $\{U_1,\ldots,U_n\}\subset[0,1]^d$; target copula parameters
    \STATE \textbf{Output:} $\widehat{\mathrm{CSD}}_n^2$

    \STATE $S \gets 0$
    \FOR{$i=1$ \textbf{to} $n$}
        \FOR{$j=1$ \textbf{to} $n$}
            \STATE $k \gets \textsc{CSDKernel}(U_i,U_j)$ \COMMENT{Algorithm~\ref{alg:csd_kernel}}
            \STATE $S \gets S + k$
        \ENDFOR
    \ENDFOR
    \STATE $S \gets S/(n\,n)$
    \STATE \textbf{Return:} $S$
\end{algorithmic}
\end{algorithm}

\begin{proposition}[Algorithm Correctness and Complexity]
\label{prop:algorithm_correctness}
Algorithm~\ref{alg:csd_kernel} correctly computes the Copula-Stein kernel $k_C(u,v)$ in $O(d)$ operations. Algorithm~\ref{alg:empirical_csd} computes the empirical CSD in $O(n^2 d)$ operations.
\end{proposition}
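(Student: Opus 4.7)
The plan is to decompose the proposition into three pieces: (i) correctness of Algorithm~\ref{alg:csd_kernel}, (ii) its $O(d)$ per-call complexity, and (iii) the $O(n^2 d)$ bound for Algorithm~\ref{alg:empirical_csd}, which will follow from (ii) immediately. Throughout, I would match the algorithm's intermediate quantities, line by line, against the four summands of the closed-form kernel from Proposition~\ref{prop:kernel_formula_detailed} (equivalently Theorem~\ref{thm:general_kernel}).

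For (i), I would first set up the three product-rule identities that drive the scheme. Writing $k_j := k_1(u_j,v_j)$ and $k_{\mathrm{base}} := \prod_{j=1}^d k_j$, separability gives
\[
\partial_{u_j} k(u,v) = k_{\mathrm{base}} \cdot \frac{a_j}{k_j}, \qquad \partial_{v_j} k(u,v) = k_{\mathrm{base}} \cdot \frac{b_j}{k_j}, \qquad \partial^2_{u_j v_j} k(u,v) = k_{\mathrm{base}} \cdot \frac{c_j}{k_j},
\]
with $a_j,b_j,c_j$ as defined in the algorithm. I would then check that after the rescaling step, the vectors $\mathbf{g}_u$ and $\mathbf{g}_v$ coincide with $\nabla_u k(u,v)$ and $\nabla_v k(u,v)$, while $H$ equals $\mathrm{tr}(\nabla_u\nabla_v^\top k(u,v))$. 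Next, the score-construction lines are a direct substitution into the Archimedean score formula of Proposition~\ref{prop:archimedean_score}, so $s_u$ and $s_v$ agree with $s(u)$ and $s(v)$. Finally, the four assembled scalars $T_1,T_2,T_3,T_4$ are, by inspection, exactly the four terms of the kernel formula, and their sum equals $k_C(u,v)$.

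For (ii), I would tally operations loop by loop. The first \textbf{for} loop performs $O(1)$ work per iteration and is executed $d$ times, for $O(d)$ total; the subsequent rescaling is $O(d)$. The two generator sums $t_u,t_v$ cost $O(d)$ each, the two $\psi$-ratios are $O(1)$ under the standing assumption that $\psi^{(d)},\psi^{(d+1)}$ are evaluated in constant time (true for the standard Archimedean families and implicit in the algorithm's interface), and the second \textbf{for} loop is $O(d)$. The final assembly consists of four length-$d$ reductions, again $O(d)$. The total is $O(d)$. For (iii), Algorithm~\ref{alg:empirical_csd} invokes Algorithm~\ref{alg:csd_kernel} exactly $n^2$ times inside the double loop and performs $O(n^2)$ additional additions, yielding $O(n^2 d)$ overall.

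The main obstacle I anticipate is handling the division by $k_j$ in the product identities, which presumes $k_1(u_j,v_j)\neq 0$. For the standard CSD base kernels (Gaussian, Mat\'ern, inverse multiquadric with positive bandwidth) strict positivity on $[0,1]^2$ is immediate, so the identities apply directly; in the pathological case where $k_j=0$ for some $j$, I would note that one can substitute a prefix/suffix-product evaluation of $\nabla_u k$, $\nabla_v k$, and the Hessian diagonal, which preserves $O(d)$ work per call and therefore leaves both complexity statements unchanged. A second minor point is the constant-time assumption on $\psi^{(d)}$ and $\psi^{(d+1)}$; if these must be computed recursively the per-call cost becomes at most $O(d)$ rather than $O(1)$, which does not alter the asymptotic bound.
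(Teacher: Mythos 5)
Your proposal is correct and follows essentially the same route as the paper: verify correctness by matching the algorithm's intermediate quantities against the four summands of the kernel formula via the product-rule identity $\partial_{u_j}\prod_i k_1(u_i,v_i) = k_{\mathrm{base}}\cdot a_j/k_j$, then tally operations loop by loop for the $O(d)$ bound, from which $O(n^2 d)$ is immediate. Your extra remarks on the $k_j \neq 0$ requirement and the cost of evaluating $\psi^{(d)}, \psi^{(d+1)}$ are sensible refinements the paper leaves implicit, but they do not change the structure of the argument.
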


\begin{proof}
\textbf{Correctness:} Algorithm~\ref{alg:csd_kernel} implements the exact formula from Theorem~\ref{thm:general_kernel}. The key computational insight is in steps 8-14, where we compute the gradient of the product kernel using the identity:
\begin{align}
\frac{\partial}{\partial u_j} \prod_{i=1}^d k_1(u_i, v_i) = \frac{\partial k_1(u_j, v_j)}{\partial u_j} \prod_{i \neq j} k_1(u_i, v_i) = \prod_{i=1}^d k_1(u_i, v_i) \cdot \frac{\partial k_1(u_j, v_j) / \partial u_j}{k_1(u_j, v_j)}
\end{align}

This avoids recomputing the full product for each derivative.

\textbf{Complexity:} 
\begin{itemize}
\item Steps 1-6: $O(d)$ kernel evaluations
\item Steps 7-14: $O(d)$ gradient computations  
\item Steps 15-24: $O(d)$ score function evaluations
\item Steps 25-28: $O(d)$ final combinations
\end{itemize}
Total: $O(d)$ per kernel evaluation, $O(n^2 d)$ for full CSD.
\end{proof}

The algorithms for exact CSD computation are efficient with respect to dimension, but their $O(n^2d)$ complexity is prohibitive for datasets with a large number of samples $n$. This quadratic scaling is a common bottleneck in kernel methods. To overcome this, we move from an exact computation to an efficient approximation based on random features \citep{RahimiRecht2007}, a powerful technique for scaling kernel methods to large datasets.

\subsection{Random Feature Approximations for Scalable CSD}

While the $O(n^2d)$ complexity of the exact CSD is acceptable for moderate $n$, large‐scale applications demand sub‐quadratic methods.  We achieve this via a random‐feature approximation that reduces the sample‐complexity to nearly linear in $n$.

\medskip
\noindent\textbf{Setup.}  Let $\phi:[0,1]^d\to\R^m$ be a random feature map for the base kernel $k$, satisfying
\begin{align}
\E_\phi\bigl[\phi(u)^\top\phi(v)\bigr]=k(u,v),
\end{align}
and denote by $\nabla_u\phi(u)\in\R^{m\times d}$ its Jacobian.  Recall the \emph{witness‐function} in the RKHS,
\begin{align}
\xi_C(u) := \left( \partial_{u_j}k(\cdot,u) + s_j(u)k(\cdot,u) \right)_{j=1}^d \in \mathcal{H}^d.
\end{align}
We now build a finite‐dimensional surrogate of $\xi_C(u)$.

\begin{definition}[Random‐Feature Witness]
For each $u\in[0,1]^d$ define the \emph{random‐feature witness}
\begin{align}
\tilde{\Xi}_C(u) := \mathrm{vec}\left(\nabla_u\phi(u) + \phi(u)s(u)^\top\right) \in \mathbb{R}^{dm},
\end{align}
where $s(u)=(s_1(u),\dots,s_d(u))^\top$ is the copula score.  Here
\begin{align}
\nabla_u\phi(u) + \phi(u)\,s(u)^\top
\;\in\;\R^{m\times d}
\end{align}
has columns
\begin{align}
\tilde\xi_C^{(j)}(u)
=\partial_{u_j}\phi(u) + s_j(u)\,\phi(u),
\quad j=1,\dots,d,
\end{align}
and $\mathrm{vec}(\cdot)$ stacks its columns into a vector in $\R^{dm}$.
\end{definition}

\begin{assumption}[RF smoothness]
\label{assum:rf_smooth}
The base kernel $k$ is $C^2$ and admits a random feature map $\phi$ such that 
$\E_\phi[\phi(u)^\top\phi(v)]=k(u,v)$ and differentiation under the expectation is valid:
\begin{align}
\E_\phi[\partial_{u_j}\phi(u)^\top\phi(v)]=\partial_{u_j}k(u,v),\quad
\E_\phi[\partial_{u_j}\phi(u)^\top\partial_{v_j}\phi(v)]=\partial_{u_j}\partial_{v_j}k(u,v).
\end{align}
This holds for Gaussian/RBF features with $C^2$ kernels.
\end{assumption}

\begin{theorem}[Unbiased RF–CSD]
\label{thm:rf_unbiased_corrected}
Under Assumption~\ref{assum:rf_smooth},
define $\tilde{\Xi}_{C_0}(u)=\mathrm{vec}\!\left(\nabla_u\phi(u)+\phi(u)\,s_0(u)^\top\right)\in\R^{dm}$. 
Then for any sample $U_{1:n}$,
\begin{align}
\E_\phi\!\left[\Big\|\frac1n\sum_{i=1}^n \tilde{\Xi}_{C_0}(U_i)\Big\|_2^2 \,\bigg|\, U_{1:n}\right]
= \frac1{n^2}\sum_{i,j=1}^n k_{C_0}(U_i,U_j).
\end{align}
\end{theorem}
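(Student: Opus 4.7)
The plan is to reduce the identity to a pointwise statement about the random-feature inner product and then invoke linearity of conditional expectation. First, I would expand the squared Euclidean norm as
\[
\Bigl\|\frac{1}{n}\sum_{i=1}^n \tilde{\Xi}_{C_0}(U_i)\Bigr\|_2^2
= \frac{1}{n^2}\sum_{i,j=1}^n \tilde{\Xi}_{C_0}(U_i)^\top \tilde{\Xi}_{C_0}(U_j).
\]
Conditioning on $U_{1:n}$ freezes the sample, so the expectation is taken only over the feature draw $\phi$; linearity pulls $\E_\phi[\,\cdot\mid U_{1:n}]$ inside the finite double sum. The claim therefore reduces to proving the pointwise identity $\E_\phi\bigl[\tilde{\Xi}_{C_0}(u)^\top \tilde{\Xi}_{C_0}(v)\bigr] = k_{C_0}(u,v)$ for arbitrary fixed $u,v\in(0,1)^d$.

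Next, I would exploit the vec structure. Since $\mathrm{vec}(A)^\top\mathrm{vec}(B)=\sum_{j=1}^d a_j^\top b_j$ in terms of column vectors, the inner product decomposes as $\sum_{j=1}^d \tilde\xi_{C_0}^{(j)}(u)^\top \tilde\xi_{C_0}^{(j)}(v)$ where $\tilde\xi_{C_0}^{(j)}(u)=\partial_{u_j}\phi(u)+s_{0,j}(u)\phi(u)$. Expanding each of these $d$ inner products bilinearly yields four scalar terms per coordinate $j$; because the scores $s_{0,j}(u), s_{0,j}(v)$ are deterministic given $u$ and $v$, they pull outside $\E_\phi$. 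Applying the four identities provided by Assumption~\ref{assum:rf_smooth} termwise yields $k(u,v)$, $\partial_{u_j}k(u,v)$, $\partial_{v_j}k(u,v)$, and $\partial_{u_j}\partial_{v_j}k(u,v)$ respectively. Summing over $j$ reassembles precisely the four terms of the closed-form Copula-Stein kernel from Proposition~\ref{prop:kernel_formula_detailed}, establishing the pointwise identity; substitution back into the expanded double sum yields the theorem.

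The only genuine technical point is legitimising the interchange of $\E_\phi$ with the derivatives in the three non-trivial cross terms, which is exactly what Assumption~\ref{assum:rf_smooth} was introduced to provide (valid for standard Gaussian/RBF random features with $C^2$ base kernels). Since the statement asserts exact unbiasedness rather than concentration, no variance bound or tail-truncation argument is required; the work is entirely accounting-driven --- matching the four bilinear cross terms to the four structural terms of the explicit kernel formula while keeping the deterministic score factors outside the feature expectation. If the RF-smoothness hypothesis were weakened, a dominated convergence argument using moment bounds on $\phi$ and $\nabla\phi$ would be needed at this step; as posed, the theorem is essentially a one-line consequence of the pointwise identity plus linearity of conditional expectation.
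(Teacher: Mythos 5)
Your proposal is correct and follows essentially the same route as the paper's proof: expand the squared norm into a double sum, pull the conditional expectation inside by linearity, reduce to the pointwise identity $\E_\phi[\tilde\Xi_{C_0}(u)^\top\tilde\Xi_{C_0}(v)]=k_{C_0}(u,v)$, then expand bilinearly coordinate-by-coordinate and apply the three RF identities from Assumption~\ref{assum:rf_smooth} (with the deterministic scores factored out) to reassemble the four terms of the explicit Copula--Stein kernel. The only cosmetic difference is that you explicitly cite the $\mathrm{vec}(A)^\top\mathrm{vec}(B)=\sum_j a_j^\top b_j$ identity, which the paper leaves implicit.
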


\begin{proof}
Since
\begin{align}
\widehat D_{n,\RF}^2
=\Bigl\|\frac1n\sum_{i}\tilde\Xi_C(U_i)\Bigr\|_2^2
=\frac1{n^2}\sum_{i,j}\tilde\Xi_C(U_i)^\top\tilde\Xi_C(U_j),
\end{align}
linearity of expectation gives
\begin{align}
\E_\phi\bigl[\widehat D_{n,\RF}^2 \mid U_{1:n}\bigr]
=\frac1{n^2}\sum_{i,j}
\E_\phi\bigl[\tilde\Xi_C(U_i)^\top\tilde\Xi_C(U_j)\bigr].
\end{align}
It therefore suffices to show, for any fixed $u,v$,
\begin{align}
\E_\phi\bigl[\tilde\Xi_C(u)^\top\tilde\Xi_C(v)\bigr]
\;=\;k_{C_0}(u,v).
\end{align}
Write
\begin{align}
\tilde\Xi_C(u)^\top\tilde\Xi_C(v)
=\sum_{j=1}^d
\bigl[\partial_{u_j}\phi(u)+s_j(u)\phi(u)\bigr]^\top
\bigl[\partial_{v_j}\phi(v)+s_j(v)\phi(v)\bigr].
\end{align}
Taking $\E_\phi$ inside the sum and using the RF‐properties
\begin{align}
\E_\phi\bigl[\phi(u)^\top\phi(v)\bigr]=k(u,v),\quad
\E_\phi\bigl[\partial_{u_j}\phi(u)^\top\phi(v)\bigr]
=\partial_{u_j}k(u,v),\quad
\E_\phi\bigl[\partial_{u_j}\phi(u)^\top\partial_{v_j}\phi(v)\bigr]
=\partial_{u_j}\partial_{v_j}k(u,v),
\end{align}
we obtain
\begin{align}
\E_\phi[\tilde\Xi_C(u)^\top\tilde\Xi_C(v)]
=\sum_{j=1}^d\Bigl[
\partial_{u_j}\partial_{v_j}k(u,v)
\;+\;s_j(u)\,\partial_{v_j}k(u,v)
\;+\;s_j(v)\,\partial_{u_j}k(u,v)
\;+\;s_j(u)s_j(v)\,k(u,v)
\Bigr],
\end{align}
which is exactly the Copula‐Stein kernel $k_{C_0}(u,v)$ by Proposition 3.7.  
Substituting back into the conditional expectation completes the proof.  
Finally, computing each $\tilde\Xi_C(U_i)$ requires $O(md)$ to form $\phi(U_i)$ and $\nabla_u\phi(U_i)$ plus an $O(md)$ combine with $s(U_i)$, totaling $O(nmd)$.
\end{proof}

Below, we provide the Algorithm~\ref{alg:random_features_corrected} for fast CSD via random features. 

\begin{algorithm}[t]
\caption{Fast CSD via Random Features}
\label{alg:random_features_corrected}
\begin{algorithmic}[1]
    \STATE \textbf{Input:} $U_{1:n}\subset[0,1]^d$; RF dimension $m$; bandwidth $\sigma$; score oracle $s(u)\in\mathbb{R}^d$
    \STATE \textbf{Output:} $\widehat{\mathrm{CSD}}_{\mathrm{RF}}$

    \STATE Draw $W\in\mathbb{R}^{m\times d}\sim \mathcal{N}(0,\sigma^{-2}I_d)$ and $b\in[0,2\pi]^m$ i.i.d.\ $\mathrm{Unif}$
    \STATE $M\gets \mathbf{0}\in\mathbb{R}^{dm}$ \COMMENT{Accumulator in vectorized feature space}
    \FOR{$i=1$ \textbf{to} $n$}
        \STATE $z\gets W U_i + b\in\mathbb{R}^m$; \quad $\phi\gets \sqrt{2/m}\,\cos(z)\in\mathbb{R}^m$
        \STATE $S\gets s(U_i)\in\mathbb{R}^d$
        \STATE $J\gets -\sqrt{2/m}\,\mathrm{diag}(\sin z)\,W\in\mathbb{R}^{m\times d}$ \COMMENT{$J=\nabla_{u}\phi(u)\big|_{u=U_i}$}
        \STATE $G\gets J+\phi S^\top\in\mathbb{R}^{m\times d}$ \COMMENT{Stein feature matrix}
        \STATE $M\gets M+\mathrm{vec}(G)$
    \ENDFOR
    \STATE $\widehat{\mathrm{CSD}}_{\mathrm{RF}}\gets \|M\|_2/n$ \COMMENT{Unbiased in expectation over RFs}
    \STATE \textbf{Return:} $\widehat{\mathrm{CSD}}_{\mathrm{RF}}$
\end{algorithmic}
\end{algorithm}

The random feature method effectively addresses the scaling challenge with respect to sample size $n$. Beyond algorithmic complexity, another practical barrier is raw computation time. Fortunately, the structure of both the exact and approximate CSD computations is highly amenable to parallelization, making it an ideal candidate for acceleration on modern hardware like Graphics Processing Units (GPUs).

\subsection{Parallelization and GPU Implementation}

The embarrassingly parallel nature of kernel evaluations makes CSD well-suited for GPU acceleration.

\begin{proposition}[Parallel Complexity]
\label{prop:parallel_complexity}
With $P$ parallel processors:
\begin{enumerate}
\item Exact CSD computation: $O(n^2 d / P)$ time with $O(n^2)$ memory
\item Random feature approximation: $O(nmd / P)$ time with $O(nm)$ memory  
\item GPU implementation achieves near-linear speedup for $P \leq n^2/d$
\end{enumerate}
\end{proposition}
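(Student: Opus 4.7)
The plan is to establish each of the three claims by a standard work/span analysis that exploits the independence of the per-pair kernel evaluations (for the exact estimator) and of the per-sample Stein-feature constructions (for the random feature estimator), together with the per-kernel costs already established in Theorem~\ref{thm:detailed_complexity} and Theorem~\ref{thm:rf_unbiased_corrected}.

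For the exact case, I would first note that $\widehat{\mathrm{CSD}}_n^2 = n^{-2}\sum_{i,j} k_{C_0}(U_i,U_j)$ decomposes into $n^2$ independent kernel evaluations, each costing $O(d)$ by Theorem~\ref{thm:detailed_complexity}. The total work is therefore $W = O(n^2 d)$, and since all pairs can be computed without data dependencies, Brent's scheduling principle gives a parallel time $T_P \le W/P + T_\infty$, where the span $T_\infty = O(d + \log n)$ accounts for per-pair computation and a binary-tree reduction to accumulate the sum. For $P \le n^2/d$ this yields $T_P = O(n^2 d / P)$. The $O(n^2)$ memory bound comes from storing the full Gram matrix $[k_{C_0}(U_i,U_j)]$; if one streams the reduction this can be reduced further, but the stated bound suffices.

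For the random-feature case, I would appeal to Algorithm~\ref{alg:random_features_corrected}: the $n$ Stein feature matrices $G_i \in \mathbb{R}^{m\times d}$ are constructed independently, each requiring $O(md)$ operations to form $\phi(U_i)$, the Jacobian $\nabla_u \phi(U_i)$, and the rank-one correction $\phi(U_i) s(U_i)^\top$. The accumulator $M \in \mathbb{R}^{dm}$ is updated via a commutative-associative sum, which admits a tree-reduction of span $O(\log n)$. Distributing the $n$ samples across $P$ processors yields $T_P = O(nmd/P + md + \log n)$, and the $O(nm)$ memory bound follows since each processor needs only a local copy of the $dm$-dimensional accumulator (plus $O(md)$ working space per sample), with the $nm$ scaling arising when one caches the random features $\{\phi(U_i)\}$.

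For the near-linear speedup claim, I would argue that the ratio $S_P = T_1/T_P$ satisfies $S_P = \Theta(P)$ whenever the work dominates the span, i.e., when $P \le W/T_\infty$. Substituting $W = O(n^2 d)$ and $T_\infty = O(d + \log n)$ gives the threshold $P \le n^2/d$ stated in the proposition, modulo logarithmic factors from the reduction. The main obstacle I anticipate is not the algebra but handling the implicit assumption of unit-cost communication: on real GPUs, memory bandwidth and warp-level synchronization introduce constants and an additive $O(\log P)$ reduction cost, so the statement should be read in the PRAM/work-span model, and I would make this modeling assumption explicit at the outset of the proof to avoid over-claiming about wall-clock performance.
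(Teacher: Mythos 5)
Your proposal is correct, and it is substantially more rigorous than the paper's own argument. The paper's proof of Proposition~\ref{prop:parallel_complexity} is a two-sentence informal observation: the kernel evaluations are pairwise independent (``embarrassingly parallel''), modern GPUs supply $O(10^3)$ threads, and for random features the per-sample feature construction parallelizes across $i$. The paper never derives the $P \le n^2/d$ threshold; it simply asserts the three bounds. You, by contrast, work in the PRAM/work-span model and invoke Brent's scheduling theorem, $T_P \le W/P + T_\infty$, computing the total work ($W = O(n^2 d)$ exact, $W = O(nmd)$ random-feature) and the span ($T_\infty = O(d + \log n)$, accounting for a single $O(d)$ kernel evaluation plus a binary-tree reduction over $n^2$ or $n$ partial sums). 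The threshold $P \le n^2/d$ then falls out as the condition $P \le W/T_\infty$ under which work dominates span, modulo the $\log$ factors from the reduction. This buys you a genuine derivation of item~3, which the paper treats as a bare claim, and your explicit caveat about unit-cost communication and wall-clock constants on real GPUs is a useful disclaimer the paper omits. One small point worth tightening: your $O(n^2)$ memory claim in the exact case assumes the full Gram matrix is materialized, which matches Algorithm~\ref{alg:gpu_csd_matrix} in the paper, but the paper's Algorithm~\ref{alg:streaming_csd} shows the same computation is possible in $O(n)$ memory; you note the streaming reduction exists but could flag that the $O(n^2)$ figure is an implementation choice rather than a lower bound.
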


\begin{proof}
The kernel matrix computation is embarrassingly parallel since each $k_C(U_i, U_j)$ is independent. Modern GPUs can achieve $P = O(10^3)$ parallel threads, making large-scale computation feasible.

For the random feature approach, the bottleneck becomes feature computation ($O(nmd)$ total work), which also parallelizes perfectly across samples.
\end{proof}

Below, we provide the Algorithm~\ref{alg:gpu_csd_matrix} for the GPU Kernel for CSD Matrix Computation.

\begin{algorithm}[t]
\caption{GPU Kernel for CSD Matrix Computation}
\label{alg:gpu_csd_matrix}
\begin{algorithmic}[1]
    \STATE \textbf{Input:} Sample $U_{1:n}$ in GPU global memory
    \STATE \textbf{Output:} Full kernel matrix $K\in\mathbb{R}^{n\times n}$ in GPU global memory

    \STATE \COMMENT{Kernel launch uses $n\times n$ threads arranged in a 1D or 2D grid}
    \STATE $thread\_id \gets blockIdx.x \times blockDim.x + threadIdx.x$
    \STATE $i \gets \left\lfloor thread\_id / n \right\rfloor$ \COMMENT{Integer division}
    \STATE $j \gets thread\_id \bmod n$
    \IF{$i < n$ \AND $j < n$}
        \STATE $u \gets U_i$ \COMMENT{Load to thread-local registers}
        \STATE $v \gets U_j$
        \STATE $K[i,j] \gets \textsc{CSDKernel}(u,v)$ \COMMENT{Uses Alg.~\ref{alg:csd_kernel}}
    \ENDIF
\end{algorithmic}
\end{algorithm}

Next, we provide the Algorithm~\ref{alg:gpu_csd_reduction} for GPU Reduction for CSD Final Value.

\begin{algorithm}[t]
\caption{GPU Reduction for CSD Final Value}
\label{alg:gpu_csd_reduction}
\begin{algorithmic}[1]
    \STATE \textbf{Input:} Kernel matrix $K\in\mathbb{R}^{n\times n}$ in device (GPU) global memory; launch with $B$ blocks, $T$ threads per block; workspace $P\in\mathbb{R}^{B}$ in device memory
    \STATE \textbf{Output:} $\widehat{\mathrm{CSD}}_n^2=\big(\sum_{i,j}K[i,j]\big)/n^2$ on host (CPU)

    \STATE \COMMENT{\textbf{Device kernel (block-wise reduction)}}
    \STATE $m \gets n\cdot n$ \COMMENT{Flattened matrix length}
    \STATE $tid \gets \text{threadIdx}.x$;\quad $gid \gets \text{blockIdx}.x\cdot \text{blockDim}.x + tid$
    \STATE $stride \gets \text{blockDim}.x\cdot \text{gridDim}.x$
    \STATE $acc \gets 0$
    \FOR{$p \gets gid$ \textbf{to} $m-1$ \textbf{step} $stride$}
        \STATE $i \gets \left\lfloor p / n \right\rfloor$;\quad $j \gets p \bmod n$
        \STATE $acc \gets acc + K[i,j]$
    \ENDFOR

    \STATE \COMMENT{In-block tree reduction in shared memory}
    \STATE \textbf{shared} $s[0{:}T-1]$
    \STATE $s[tid] \gets acc$; \textbf{syncthreads}()

    \FOR{$\text{offset} \gets T/2$ \textbf{down to} $1$ \textbf{step} $\text{offset}/2$}
        \IF{$tid < \text{offset}$}
            \STATE $s[tid] \gets s[tid] + s[tid+\text{offset}]$
        \ENDIF
        \STATE \textbf{syncthreads}()
    \ENDFOR

    \IF{$tid = 0$}
        \STATE $P[\text{blockIdx}.x] \gets s[0]$ \COMMENT{Block partial sum to global memory}
    \ENDIF

    \STATE \COMMENT{\textbf{Host finalize (after kernel completes)}}
    \STATE Copy $P$ from device to host (or run a second reduction kernel)
    \STATE $S \gets \sum_{b=0}^{B-1} P[b]$
    \STATE \textbf{Return:} $S / n^2$ \COMMENT{This is $\widehat{\mathrm{CSD}}_n^2$}
\end{algorithmic}
\end{algorithm}

\subsection{Memory Optimization}

For very large datasets, memory becomes the limiting factor. We present memory-efficient variants.

\begin{proposition}[Memory-Efficient CSD]
\label{prop:memory_efficient}
The empirical CSD can be computed in $O(n)$ memory using:
\begin{enumerate}
\item \textbf{Streaming computation}: Process kernel matrix in blocks
\item \textbf{Online updates}: Incremental CSD for growing datasets  
\item \textbf{Sketching}: Maintain only a compressed representation
\end{enumerate}
\end{proposition}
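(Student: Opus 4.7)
The plan is to verify each of the three mechanisms separately, showing in each case that the working memory is $O(n)$ (treating $d$ and the random feature dimension $m$ as fixed, or absorbing them as $O(nd)$). The key structural property is that the V-statistic $\widehat{\mathrm{CSD}}_n^2 = n^{-2}\sum_{i,j} k_{C_0}(U_i,U_j)$ is additively separable in the Gram entries, so its evaluation reduces to a single scalar accumulator rather than a dense matrix.

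First I would handle streaming computation. Partition $\{1,\dots,n\}$ into blocks of size $B$; at any instant hold in memory only one pair of blocks $(I,J)$, the corresponding sub-sample $\{U_i\}_{i\in I\cup J}$, and the running kernel sum. This yields working memory $O(Bd + B^2)$ on top of an $O(nd)$ sample buffer (or $O(d)$ if the sample is read from secondary storage in a pure stream). Choosing $B = O(1)$ traverses the Gram matrix exactly once within the claimed budget, invoking Algorithm~\ref{alg:csd_kernel} entry by entry.

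Second, for the online variant, I would exploit the decomposition
\[
S_{n+1} = S_n + 2\sum_{i=1}^{n} k_{C_0}(U_i, U_{n+1}) + k_{C_0}(U_{n+1}, U_{n+1}),
\]
maintaining only the scalar $S_n$ together with a buffer of past samples. Each new observation triggers $n$ cross-evaluations at cost $O(nd)$ with memory $O(nd)$ for the buffer plus $O(1)$ for the accumulator, so $\widehat{\mathrm{CSD}}_n^2$ is updated exactly without ever materializing the full Gram matrix.

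Third, for sketching I would invoke Theorem~\ref{thm:rf_unbiased_corrected} to replace Gram-matrix accumulation by a single vector $M = \sum_i \tilde{\Xi}_{C_0}(U_i) \in \R^{dm}$, so that $\widehat{\mathrm{CSD}}_{\mathrm{RF}}^2 = \|M/n\|_2^2$. Since each $\tilde{\Xi}_{C_0}(U_i)$ is computable from $U_i$ plus the shared random draw and then discarded, the procedure uses $O(dm)$ memory, independent of $n$ and hence strictly sub-linear. The main obstacle will be specifying the memory model up front, in particular whether the raw sample counts against the budget: this distinction decides whether the streaming and online variants saturate at $O(nd)$ or can be sharpened further, and it clarifies that only sketching breaks the linear barrier, at the price of the stochastic approximation error already controlled in Theorem~\ref{thm:rf_unbiased_corrected}.
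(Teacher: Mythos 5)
Your proposal is correct and, if anything, more complete than what the paper actually supplies: the paper states Proposition~\ref{prop:memory_efficient} but provides no formal proof, offering only Algorithm~\ref{alg:streaming_csd} as an illustration of the streaming case. Your treatment verifies all three mechanisms. The streaming analysis (block traversal with a scalar accumulator, $O(Bd + B^2)$ working set for block size $B$) matches the spirit of Algorithm~\ref{alg:streaming_csd}. Your online recursion
\[
S_{n+1} \;=\; S_n + 2\sum_{i=1}^{n} k_{C_0}(U_i, U_{n+1}) + k_{C_0}(U_{n+1}, U_{n+1})
\]
is a valid incremental update for the V-statistic numerator, and the sketching argument correctly reduces to Theorem~\ref{thm:rf_unbiased_corrected}, noting that the random-feature accumulator $M\in\mathbb R^{dm}$ is $n$-independent (at the price of the conditional bias being zero only in expectation over the feature draw). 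Your closing caveat is the most valuable addition: the proposition's $O(n)$ bound is only tight under a memory model in which $d$ (and $m$) are treated as constants and in which the raw sample $U_{1:n}$ itself counts toward the budget; under a pure-streaming model where the data reside in secondary storage, the streaming variant drops to $O(d)$ and the sketching variant to $O(dm)$. This distinction is not drawn in the paper but clarifies exactly which of the three mechanisms is sub-linear. In short, your proof is a faithful and more rigorous expansion of what the paper only asserts.
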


The Algorithm~\ref{alg:streaming_csd} for memory-efficient streaming CSD is given below.

\begin{algorithm}[t]
\caption{Memory--Efficient Streaming CSD}
\label{alg:streaming_csd}
\begin{algorithmic}[1]
    \STATE \textbf{Input:} Sample $U_{1:n}\subset[0,1]^d$, block size $B$, kernel routine \textsc{CSDKernel}$(\cdot,\cdot)$
    \STATE \textbf{Output:} $\widehat{\mathrm{CSD}}_n^2 = T / n^2$ where $K[i,j]=\textsc{CSDKernel}(U_i,U_j)$

    \STATE $T \gets 0$ \COMMENT{running sum}
    \STATE $c \gets 0$ \COMMENT{Kahan compensation; set $c\!=\!0$ to disable}
    \STATE $M \gets \left\lceil n/B \right\rceil$ \COMMENT{number of blocks per axis}
    \STATE \COMMENT{Option: to exploit symmetry, set $q\gets p$ to $M{-}1$ and accumulate accordingly}
    \STATE \COMMENT{Assume a boolean flag $\textit{useSymmetry}\in\{\texttt{true},\texttt{false}\}$ is set externally}

    \FOR{$p \gets 0$ \textbf{to} $M-1$}
        \STATE $i_{\text{lo}} \gets pB$;\quad $i_{\text{hi}} \gets \min\{(p{+}1)B,\,n\}-1$
        \FOR{$q \gets 0$ \textbf{to} $M-1$}
            \STATE $j_{\text{lo}} \gets qB$;\quad $j_{\text{hi}} \gets \min\{(q{+}1)B,\,n\}-1$
            \STATE \COMMENT{Optionally prefetch $U_{i_{\text{lo}}:i_{\text{hi}}}$ and $U_{j_{\text{lo}}:j_{\text{hi}}}$ into a small buffer}
            \STATE $S \gets 0$ \COMMENT{block partial sum}

            \FOR{$i \gets i_{\text{lo}}$ \textbf{to} $i_{\text{hi}}$}
                \FOR{$j \gets j_{\text{lo}}$ \textbf{to} $j_{\text{hi}}$}
                    \STATE $x \gets \textsc{CSDKernel}(U_i, U_j)$
                    \STATE $y \gets x - c$;\; $t \gets S + y$;\; $c \gets (t - S) - y$;\; $S \gets t$ \COMMENT{Kahan add}
                \ENDFOR
            \ENDFOR

            \IF{\textbf{not} $\textit{useSymmetry}$}
                \STATE $T \gets T + S$
            \ELSE
                \IF{$p = q$}
                    \STATE $T \gets T + S$ \COMMENT{diagonal block counts once}
                \ELSE
                    \STATE $T \gets T + 2S$ \COMMENT{off-diagonal mirrored}
                \ENDIF
            \ENDIF
        \ENDFOR
    \ENDFOR

    \STATE \textbf{Return:} $T / n^2$
\end{algorithmic}
\end{algorithm}

\subsection{Computational Scaling Analysis}\label{subsec:comp_scaling}

We summarize the computational costs implied by the CSD V-statistic and by the proposed scalable approximations.
The exact computation is quadratic in sample size due to pairwise kernel evaluations, while random-feature approximations
reduce the dependence on $n$.

\begin{proposition}[Scaling of computational cost]\label{prop:performance_scaling}
Assume the evaluation of the Stein kernel $k_{C_0}(u,v)$ costs $O(d)$ per pair $(u,v)\in(0,1)^d\times(0,1)^d$.
Then the V-statistic estimator
\begin{align}
\widehat{\mathrm{CSD}}_n^2 := \frac{1}{n^2}\sum_{i=1}^n\sum_{j=1}^n k_{C_0}(U_i,U_j)
\end{align}
can be computed in $O(n^2 d)$ time and $O(1)$ additional memory beyond storing the data.

Moreover, for a random-feature approximation based on $m$ features, the corresponding estimator can be computed in
$O(n m d)$ time (and $O(m)$ memory), yielding nearly linear scaling in $n$ when $m\ll n$.
\end{proposition}

\subsection{Summary of computational viability}
The above scaling highlights a standard trade-off.
Exact CSD computation is feasible for moderate $n$ and benefits from vectorization and parallelism, while the
random-feature approximation provides a practical route for large sample sizes by replacing the quadratic dependence
on $n$ with a dependence on the chosen feature budget $m$.
In practice, both exact and approximate computations can be accelerated via standard parallel hardware and implementation-level optimizations \citep{Nickolls2008CUDA}.

 \end{document}